\theoremstyle{plain}
\newtheorem{theorem}{Theorem}[section]
\newtheorem{proposition}[theorem]{Proposition}
\newtheorem{lemma}[theorem]{Lemma}
\newtheorem{corollary}[theorem]{Corollary}
\theoremstyle{definition}
\theoremstyle{remark}
\newcommand{\BibTeX}{B\kern-.05em{\sc i\kern-.025em b}\kern-.08em\TeX}
\newcommand\un{{\mathfrak{n}}}
\newcommand\um{{\mathfrak{m}}}
\newcommand\bE{\mathbb{E}}
\newcommand\bP{\mathbb{P}}
\newcommand\bR{\mathbb{R}}
\newcommand\bN{\mathbb{N}}
\newcommand\bZ{\mathbb{Z}}
\newcommand\cP{\mathcal{P}}
\newcommand\cX{\mathcal{X}}
\newcommand\re{\mathrm{e}}
\newcommand\rNN{\mathrm{NN}}
\newcommand\one[1]{\mathbf{1}\mathopen{}\left[#1\right]\mathclose{}}
\newcommand\param{w}
\DeclareMathOperator{\cov}{Cov}
\newacronym{CS}{CS}{central server}
\newacronym{FIFO}{FIFO}{first-in-first-out}
\newacronym{FL}{FL}{federated learning}
\newacronym{iid}{iid}{independent and identically distributed}
\newacronym{SGA}{SGA}{stochastic gradient ascent}
\newacronym{SGD}{SGD}{stochastic gradient descent}
\begin{document}


\begin{frontmatter}


\paperid{6472} 


\title{Optimizing Asynchronous Federated Learning: A Delicate Trade-Off Between Model-Parameter Staleness and Update Frequency}


\author[A,B]{\fnms{Abdelkrim}~\snm{Alahyane}\thanks{Corresponding Author. Email: abdelkrim.alahyane@um6p.ma.}}
\author[B]{\fnms{Céline}~\snm{Comte}}
\author[B]{\fnms{Matthieu}~\snm{Jonckheere}} 
\author[C]{\fnms{Éric}~\snm{Moulines}} 

\address[A]{EMINES, Mohammed VI Polytechnic University, Ben Guerir, Morocco}
\address[B]{LAAS-CNRS, Université de Toulouse, CNRS, Toulouse, France}
\address[C]{Ecole Polytechnique, Palaiseau, France}


\begin{abstract}
    Synchronous federated learning (FL) scales poorly with the number of clients due to the straggler effect.
    Algorithms like \texttt{FedAsync} and \texttt{GeneralizedFedAsync} address this limitation by enabling asynchronous communication between clients and the central server. In this work, we rely on stochastic modeling and analysis to better understand the impact of design choices in asynchronous FL algorithms, such as the concurrency level and routing probabilities, and we leverage this knowledge to optimize loss. Compared to most existing studies, we account for the joint impact of heterogeneous and variable service speeds and heterogeneous datasets at the clients. We characterize in particular a fundamental trade-off for optimizing asynchronous FL: minimizing gradient estimation errors by avoiding model parameter staleness, while also speeding up the system by increasing the throughput of model updates. Our two main contributions can be summarized as follows. First, we prove a discrete variant of Little's law to derive a closed-form expression for relative delay, a metric that quantifies staleness. This allows us to efficiently minimize the average loss per model update, which has been the gold standard in literature to date, using the upper-bound of Leconte et al.\ as a proxy. Second, we observe that naively optimizing this metric drastically slows down the system by overemphasizing staleness at the expense of throughput. This motivates us to introduce an alternative metric that also accounts for speed,
    for which we derive a tractable upper-bound that can be minimized numerically. Extensive numerical results show these optimizations enhance accuracy by 10\% to 30\%.
\end{abstract}

\end{frontmatter}


\section{INTRODUCTION}

\Gls{FL} is a learning paradigm that enables distributed model training across multiple clients under the supervision of a \gls{CS}, without requiring data sharing \cite{konevcny2015federated, mcmahan2017communication}. In synchronous \gls{FL}, \gls{SGD} is executed in rounds; during each round, the \gls{CS} sends the current model parameters to a subset of clients, waits until all of them return a new stochastic gradient estimate, and then updates the model parameters before sending them to another batch of clients. Unfortunately, the performance of synchronous \gls{FL} is often hindered by the variability of computational speeds among clients, leading to the straggler effect \cite{chen2021towards}.

Asynchronous algorithms such as \texttt{FedAsync} \cite{xie2019asynchronous, chen2020asynchronous, xu2021asynchronous}, \texttt{FedBuff} \cite{Nguyen2021FederatedLW}, \texttt{AsGrad} \cite{islamov2023asgrad}, and \texttt{AsyncSGD} \cite[Algorithm~2]{koloskova2022sharper} aim to tackle these challenges by allowing clients and the \gls{CS} to communicate asynchronously. In particular, the \gls{CS} may update the model parameters while clients estimate gradients (possibly based on outdated model parameters). Asynchrony is implemented by allowing tasks (i.e., requests for gradient estimates) to be queued at the clients \cite{koloskova2022sharper}.
Intuitively, asynchronous \gls{FL} has the potential to speed up the system by circumventing the straggler effect, possibly at the cost of errors due to the staleness of the model parameters used to estimate gradients.

Consequently, a significant research effort has been devoted to analyze the performance of asynchronous \gls{FL} in both homogeneous and heterogeneous data settings \cite{chen2020asynchronous, chen2021towards, xu2021asynchronous,koloskova2022sharper}. However, as we will review in \Cref{sec:related-work},
most existing studies overlook the critical impact of queueing dynamics on the actual performance of asynchronous \gls{FL}.
In contrast, \cite{leconte2024queueing} recently showed that the performance of asynchronous \gls{FL} depends critically on these queueing dynamics via the staleness of the model parameters used by clients to estimate gradients. Staleness is captured by the \emph{relative delay}, defined as the (stochastic) number of times the \gls{CS} updates the model parameters while a gradient-estimation task is being held at a client (either queued or being processed). While existing analyses assume this relative delay is bounded irrespective of the system parameters, the results of \cite{leconte2024queueing} imply that the relative delay depends heavily on these parameters and may grow arbitrarily large even in realistic scenarios. This stands in sharp contrast to existing studies such as \cite{tyurin2023optimal,tyurin_shadowheart_2024,maranjyan_mindflayer_2024,maranjyan_ringmaster_2025}.

In this paper,
we derive fundamental insights and actionable tools
for optimizing performance in asynchronous \gls{FL}.
Unlike most works from the literature, our results apply to systems where clients are \emph{heterogeneous}, both in terms of computation speeds and of datasets.
First, building on \cite{leconte2024queueing},
we derive an explicit expression
for the mean relative delay and its gradient
by leveraging the framework of Jackson networks~\cite{jackson1957networks},
which in turn allows us to design a gradient-descent algorithm
that optimizes performance.

Next, we observe that minimizing the average gradient norm per update, as commonly done in the literature, can be counterproductive in asynchronous \gls{FL}, as it ignores update throughput and slows the system to the pace of the slowest client, thereby underutilizing the others.
Although we focus on an extension of \texttt{AsyncSGD} called \texttt{Generalized AsyncSGD}, we believe our results are relevant to other asynchronous \gls{FL} algorithms.

\subsection{Related work} \label{sec:related-work}

Synchronous \gls{FL} has major pitfalls~\cite{wang2020tackling, qu2021feddq, makarenko2022adaptive, mao2022communication, tyurin2022dasha}: the straggler effect leads to important delays, and synchronization becomes challenging when the number of clients increases~\cite{xie2019asynchronous}. These limitations spurred the development of asynchronous \gls{FL} algorithms, such as \texttt{FedAsync}~\cite{xie2019asynchronous, chen2020asynchronous, xu2021asynchronous}, \texttt{FedBuff}~\cite{Nguyen2021FederatedLW}, \texttt{AsGrad}~\cite{islamov2023asgrad}, and \texttt{AsyncSGD}~\cite[Algorithm~2]{koloskova2022sharper}, which incorporate memory-based updates, adaptive learning rate adjustments, and strategies to reduce staleness and handle varying computation speeds.
\cite{mishchenko2023, koloskova2022sharper, cohen_asynchronous_2021} showed in a simplified model that Asynchrounous \gls{SGD} is provably faster in terms of wall-clock time than Minibatch \gls{SGD}.

Despite these advances and the plethora of asynchronous \gls{FL} algorithms that have been proposed, there is still little understanding of the impact of system design on performance.
In real-world distributed learning, compute times are unpredictable and heterogeneous due to hardware failures, preemptions, GPU delays, and network issues, making fixed-time assumptions unrealistic \cite{dutta2018slow, chen2017revisitingdistributedsynchronoussgd, kairouz2021advances}. Instead, compute times should be treated as dynamic client-dependent random variables.
Furthermore, in many applications, datasets are heterogeneous across clients.
In contract, as we see now, most existing analyses fail to provide solutions that work well when clients are heterogeneous both in terms of compute speeds and datasets.

Several works have attempted to account for client heterogeneity in asynchronous \gls{FL}.
The analysis of the celebrated \texttt{FedBuff} algorithm~\cite{Nguyen2021FederatedLW} allows for heterogeneous datasets, but it assumes that the next client completing a gradient computation is sampled uniformly at random, which is especially unrealistic when service speeds are heterogeneous.
\cite{agarwal_fedecado_2024} designs an asynchronous \gls{FL} algorithm robust to dataset heterogeneity and variability in compute times, but it assumes that compute times are sampled from the same distribution for all clients, so that it does not accommodate heterogeneity of client speeds. \cite{tyurin2023optimal,tyurin_shadowheart_2024,maranjyan_mindflayer_2024,maranjyan_ringmaster_2025,cohen_asynchronous_2021} attempt to model client speed heterogeneity by allowing for fixed but heterogeneous delays. However, their approach enforces synchronous updates by setting a time threshold and discarding clients that exceed it. This biases the global model against underrepresented data and wastes near-complete computations, reducing overall system efficiency. Furthermore, by assuming that delay is fixed, these works implicitly assume that the system parameters have a bounded impact on the delay, which is no longer true when queueing dynamics are taken into account.
\cite{koloskova2022sharper} explicitly accounts for dataset heterogeneity, but it again makes the unrealistic assumption that delay is bounded irrespective of the system parameters.

This discussion reveals a major gap in the literature: existing analyses fail to fully capture the interplay between, on the one hand, the unpredictable and heterogeneous nature of client speeds and network conditions, and, on the other hand, heterogeneous datasets. This is all the more critical since these properties motivated the introduction of asynchronous \gls{FL} in the first place. 

A preliminary attempt is made in~\cite{leconte2024queueing}, which introduces \texttt{Generalized AsyncSGD}, an algorithm that utilizes non-uniform client selection to address queueing dynamics, heterogeneous client speeds, and heterogeneous datasets. However, their analysis falls short of providing explicit performance bounds, instead relying on scaling regimes to approximate the system behavior.

\subsection{Contributions} \label{sec:contributions}

Our findings stem from key theoretical insights that allow an explicit characterization of the impact of queueing dynamics on the performance of asynchronous \gls{FL}. They can be summarized as follows:

\textbf{Derive tractable bounds on loss gradients.}
Using the framework of queueing theory,
we derive exact and tractable formulas for the mean relative delay and its gradient.
In general, these formulas can be estimated in time $\mathcal{O}(n^2 m^2)$,
where $n$ is the number of clients
and $m$ the concurrency level,
or they can be estimated through Monte Carlo simulations.
Further simplifications allow us to compute them in time $\mathcal{O}(n)$ for simple routing strategies.

\textbf{Optimize performance.}
Leveraging this result, we design an algorithm that improves the performance of \texttt{Generalized AsyncSGD} by using the bound from~\cite{leconte2024queueing} as a proxy objective for routing optimization.
Our result also allows us to gauge the bound's sensitivity to crucial system parameters,
such as the ratio of the concurrency level to the number of clients,
and the clients' service speeds.
These insights are relevant to other asynchronous \gls{FL} algorithms and underscore that routing strategies should be adapted based on application-specific bottlenecks.

\textbf{Account for clock-time performance.}
We observe both analytically and numerically
that minimizing the average norm-square of the gradient per update
actually leads us to slow down the system considerably
by underutilizing all clients but (the slowest) one.
Roughly speaking, since this metric ignores the throughput of model updates,
it is optimized by minimizing staleness,
which is achieved by 
giving priority to the slowest client.
This motivates us to introduce an alternative metric that explicitly accounts for throughput, and for which we derive an upper bound that can again be optimized tractably.

Our findings provide not only qualitative insights into the impact of queueing dynamics, but also efficient numerical methods to optimize performance.
We show in particular that routing strategies should be adapted based on the specific bottlenecks imposed by applications, such as the number of computation rounds versus actual training time.
Our experiments on real-world datasets show that tuning the routing strategy and/or the concurrency level can improve accuracy by 10\% to 30\%.

\subsection{Notations}
$\bZ, \bN, \bN_{> 0}, \bR, \bR_{\ge 0}, \bR_{> 0}$ denote the sets of integers, non-negative integers, positive integers, real numbers, non-negative real numbers, and positive real numbers.
Let $| \cdot |$ denote the $\ell_1$-norm and $\one{\,\cdot\,}$ the indicator function.
For each $n, m \in \bN_{> 0}$,
let $\cX_{n, m} = \{x \in \bN^n: |x| = m\}$
denote the set of $n$-dimensional natural-number-valued vectors with $\ell_1$-norm~$m$.
For every $n \in \bN_{> 0}$,
let $\cP_n = \{p \in \bR^n: 0 < p_i < 1 \text{ for } i \in \{1, 2, \ldots, n\} \text{ and } |p| = 1\}$.

\glsresetall

\section{MODEL AND PRIOR RESULTS}

\subsection[Asynchronous federated learning]{Asynchronous \acrlong{FL}} \label{sec:fl}

The goal in \gls{FL} is to optimize the average performance of a model across multiple clients under the supervision of a \gls{CS}: $\min_{\param \in \bR^d} f(w)$,
where $f(\param) = \frac{1}{n} \sum_{i=1}^n f_i(\param)$, and
\begin{align*}
	f_i(\param) &= \mathbb{E}_{(x, y) \sim \mathcal{D}_i}[\ell_{i}(\mathrm{NN}(x, \param), y)],
	\quad i \in \{1, 2, \ldots, n\}.
\end{align*}
Here, $\param$ denotes the parameters of a deep neural network, $d$ the number of parameters (including weights and biases), $\rNN(x, \param)$ the prediction function of the neural network, $n$ the number of clients, $\ell_{i}$ the local loss function of client~$i$, and $\mathcal{D}_i$ the data distribution at client~$i$. Each client~$i$ approximates the gradient $\nabla_w f_i(w)$ of its local loss function using a stochastic gradient denoted by~$g_{i}(w)$. The computation of such a stochastic gradient by a client is called a \emph{task}. 

\texttt{Generalized AsyncSGD} \cite{leconte2024queueing}
is shown in Algorithms~\ref{alg:CS} (\gls{CS})
and~\ref{alg:client} (client~$i$).
A task assigned to a busy client is queued according to the \gls{FIFO} policy.
As we will see, performance
depends critically on two parameters:
$p$, the routing probability vector of tasks to clients;
and $m$, the concurrency~\cite{koloskova2022sharper},
defined as the number of tasks that are concurrently dispatched
to the clients (either queued or being processed).
\texttt{Generalized AsyncSGD} simplifies to the classical \texttt{AsyncSGD} algorithm~\cite[Algorithm~2]{koloskova2022sharper} when $p = p^{\text{uniform}}$ with $p^{\text{uniform}}_i = \frac1n$ for each $i \in \{1, 2, \ldots, n\}$.

Let us focus on \Cref{alg:CS} (\gls{CS} perspective).
The model parameter is initialized to a random vector $\param_0$,
and the system state is initialized to a random vector
$\xi = (\xi_1, \xi_2, \ldots, \xi_n) \in \cX_{n, m}$,
where $\xi_i$ is the number of tasks initially dispatched to client~$i$,
for each $i \in \{1, 2, \ldots, n\}$.
After this initialization, each iteration~$t \in \{0, 1, \ldots, T\}$
of the \texttt{for} loop (Line~7) proceeds as follows:
whenever a client~$C_t$
completes a task and reports a gradient estimate $g_{C_t}(w_{I_t})$
(Line~8; $I_t$ will be defined shortly),
the \gls{CS} immediately updates the model parameter
(Line~9)
and sends it to the next client $A_{t+1}$,
where $\bP(A_{t+1} = i) = p_i$
for each $i \in \{1, 2, \ldots, n\}$
(Lines~10 and~11).
Recall that a client can be chosen
even if it is processing a task.
Observe that the step size in the model-parameter update step is divided by the routing probability to avoid biasing the model towards clients that are sampled more often.

\begin{algorithm}[tbp]
   \caption{\texttt{Generalized AsyncSGD (CS)}}
   \label{alg:CS}
   \begin{algorithmic}[1]
      \STATE {\bf Input:} Numbers $T$, $n$, and $m$ of rounds, clients, and tasks; routing $p$; learning rate $\eta$
      \STATE Initialize parameters $\param_0$ randomly
      \STATE Initialize state vector $\xi \in \cX_{n, m}$ randomly
      \FOR{$i = 1, 2, \ldots, n$}
         \STATE Send $\xi_i$ times model parameter $\param_0$ to client $i$
      \ENDFOR
      \FOR{$t = 0, \dots, T$ \label{line:for}}
         \STATE \gls{CS} receives stochastic gradient $g_{C_t}(\param_{I_t})$ from a client $C_t$ \label{line:CS1}
         \STATE Update $\param_{t+1} \leftarrow \param_t - \frac{\eta}{n p_{C_t}} g_{C_t}(\param_{I_t})$ \label{line:CS2}
         \STATE Sample a new client $A_{t+1}$ according to $p$ \label{line:CS3}
         \STATE Send model parameter $\param_{t+1}$ to client $A_{t+1}$
      \ENDFOR
   \end{algorithmic}
\end{algorithm}

\begin{algorithm}[tbp]
\caption{\texttt{Generalized~AsyncSGD~(Client~$i$)}}
   \label{alg:client}
   \begin{algorithmic}[1]
      \STATE {\bf Input:} Queue of received parameters, local dataset
      \IF{Queue is not empty}
         \STATE Take the received parameter $\param$ from the queue using a FIFO policy
         \STATE Compute the gradient estimate $g_{i}(\param)$
         \STATE Send the gradient to the \gls{CS}
         \STATE Repeat
      \ENDIF
   \end{algorithmic}
\end{algorithm}

Time indices are such that,
for each $t \in \{1, 2, \ldots, T\}$,
$A_t$ and $C_t$ correspond to
the same (model-parameter update) round, 
where a round is defined as the time between the assignment of a task to a client and the next task completion (leading to a model update). In \Cref{sec:optimize-time}, we will see that throughput, defined as the inverse of the (clock-time) duration of a typical round, has a crucial impact on performance.
For each $t \in \{0, 1, \ldots, T\}$, we let $X_t = (X_{1, t}, X_{2, t}, \ldots, X_{n, t}) \in \cX_{n, m-1}$ denote the state at the end of round~$t$, so that, for each $i \in \{1, 2, \ldots, n\}$, we have $X_{i, 0} = \xi_i - \one{C_0 = i}$ and, for each $t \in \{1, 2, \ldots, T\}$,
\begin{align} \label{eq:X-rec}
	X_{i, t} &= X_{i, t-1} + \one{A_t = i} - \one{C_t = i}.
\end{align}

The processing times of successive tasks at client~$i$ are assumed to be independent and exponentially distributed with rate $\mu_i > 0$, for each $i \in \{1, 2, \ldots, n\}$. In particular, for each $t \in \{1, 2, \ldots, T\}$, we have $\bP[C_t = i | X_{t-1}, A_t] \propto \mu_i$
for all $i \in \{1, 2, \ldots, n\}$ such that $X_{i, t-1} + \one{A_t = i} > 0$.
As we will show in later sections, this assumption enables the derivation of tractable performance bounds while still capturing queueing dynamics, phenomena that were not addressed in prior work before~\cite{leconte2024queueing}.

Critically, the gradient estimate $g_{C_t}(w_{I_t})$
returned at the end of an iteration~$t$
is based on the model parameters~$w_{I_t}$
known to the \gls{CS} at the beginning of round~$I_t = \sum_{s = 0}^t s \one{s + D_{A_s, s} = t}$
when the task was assigned to client~$C_t$,
where $D_{i, t}$ is called the \emph{relative delay}
and is defined as the number of rounds that get completed
during the sojourn of a task at a client:
\begin{align}
	\label{eq:Dit}
	D_{i, t} &= \one{A_t = i} R_{i, t},
	~\text{where} \\
	\label{eq:Rit}
	R_{i, t}
	&= \min\left\{
	r \in \bN:
	\sum_{s = t}^{t+r} \one{C_s = i}
	= X_{i, t-1} + \one{A_t = i}
	\right\}.
\end{align}

Consistent with the literature on decentralized learning, we make the following assumptions: a uniform lower bound $f^*$ on the objective function $f$; $L$-Lipschitz continuity of the gradients $\nabla f_i$ to ensure smoothness ($\|\nabla f_i(w) - \nabla f_i(\mu)\| \leq L \|w - \mu\|$); an upper bound $\sigma^2$ on the variance of the stochastic gradients ($\mathbb{E}[\|g_i(w) - \nabla f_i(w)\|^2] \leq \sigma^2$); and an upper bound $M^2$ on the gradient dissimilarity across clients ($\|\nabla f(w) - \nabla f_i(w)\|^2 \leq M^2$). These are detailed in Assumptions~\textbf{A1}--\textbf{A4} in Section~A of the supplementary material~\cite{alahyane2025optimizing}.

\subsection{Queueing dynamics} \label{sec:queueing}

The next result is a variant of \citep[Proposition~2]{leconte2024queueing}
and will be instrumental throughout the paper.
It relates the queueing dynamics to the routing and service-rate vectors $p = (p_1, p_2, \ldots, p_n)$ and $\mu = (\mu_1, \mu_2, \ldots, \mu_n)$.
Recall that the state space is $\cX_{n, m-1}$ (and not $\cX_{n, m}$)
because we consider model-parameter-update times.
Buzen's algorithm was introduced in \cite{buzen:1973}.

\begin{proposition} \label{prop:jackson} 
	In the framework of \Cref{sec:fl},
	the sequence $(X_t, t \in \bN)$
	defines an irreducible positive recurrent Markov chain
	with stationary distribution
	\begin{align} \label{eq:pi}
		\pi_{n, m-1}(x)
		&= \frac1{Z_{n, m-1}} \prod_{i = 1}^n \left( \frac{p_i}{\mu_i} \right)^{x_i},
		~ x \in \cX_{n, m-1},
	\end{align}
	where the normalizing constant~$Z_{n, m-1}$
	can be computed by applying Buzen's recursive algorithm:
	\begin{itemize}
		\item $Z_{\un, 0} = 1$
		for each $\un \in \{1, 2, \ldots, n\}$,
		\item $Z_{1, \um} = (\frac{p_1}{\mu_1})^\um$
		for each $\um \in \{0, 1, 2, \ldots, m\}$,
		\item $Z_{\un, \um} = Z_{\un-1, \um} + \frac{p_\un}{\mu_\un} Z_{\un, \um - 1}$
		for $\un \in \{2, 3, \ldots, n\}$
		and $\um \in \{1, 2, \ldots, m\}$.
	\end{itemize}
\end{proposition}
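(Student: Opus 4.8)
The plan is to recognize the chain $(X_t, t \in \bN)$ as the embedded arrival/departure chain of a closed Jackson (Gordon--Newell) network and then verify the stated product form directly. First I would set up the transition kernel explicitly. From the dynamics of \Cref{sec:fl}, a transition from state $x \in \cX_{n,m-1}$ proceeds in two substeps: the \gls{CS} routes the ``free'' task to client $A$ with probability $p_A$, moving to the intermediate state $x + e_A \in \cX_{n,m}$; then the next completion occurs at client $C$ with probability proportional to $\mu_C$ among clients with a positive queue in the intermediate state, i.e. $\bP[C = j \mid x, A] = \mu_j \one{x_j + \one{A=j} > 0} / \sum_{k} \mu_k \one{x_k + \one{A=k}>0}$, landing in $x + e_A - e_C \in \cX_{n,m-1}$. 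Composing these gives the one-step kernel $P(x, y)$ on $\cX_{n,m-1}$.

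\textbf{Irreducibility and positive recurrence.} Since $p_i > 0$ and $\mu_i > 0$ for all $i$, I would argue that any state in $\cX_{n,m-1}$ can be reached from any other in finitely many steps: a single round can move one ``token'' from any nonempty queue to any client (route to $j$, then complete at $i$), and iterating such moves connects the finite state space $\cX_{n,m-1}$. Finiteness of $\cX_{n,m-1}$ together with irreducibility then immediately yields positive recurrence and a unique stationary distribution.

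\textbf{Product form.} This is the crux. I would write the candidate $\pi(x) = \frac{1}{Z}\prod_i (p_i/\mu_i)^{x_i}$ and verify $\pi P = \pi$, i.e. $\sum_{x} \pi(x) P(x,y) = \pi(y)$ for each $y \in \cX_{n,m-1}$. The cleanest route is the partial-balance / guessed-flow argument standard for closed Jackson networks: introduce the intermediate states in $\cX_{n,m}$ and show that the measure $\tilde\pi(z) \propto \prod_i(p_i/\mu_i)^{z_i}$ on $\cX_{n,m}$ is stationary for the ``routing then completion'' two-stage process, by checking that the effective routing matrix here is $r_{i \to j} = p_j$ (a client, upon completion, sends its freed task to be rerouted, and the \gls{CS} sends it to $j$ w.p. $p_j$), whose traffic equations $\nu_j = \sum_i \nu_i p_j = p_j$ are solved by $\nu = p$; the Gordon--Newell theorem then gives the product form with per-station factor $(\nu_i/\mu_i)^{x_i} = (p_i/\mu_i)^{x_i}$. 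Alternatively, and perhaps more self-containedly for this paper, I would verify global balance on $\cX_{n,m-1}$ by a direct computation: group the incoming transitions to $y$ by the client $A=a$ that was routed to and the client $C=c$ that completed, use $\pi(y + e_c - e_a) = \pi(y)(p_c/\mu_c)(\mu_a/p_a)$, and show the normalization sums telescope to $1$. \textbf{The main obstacle} is handling the state-dependent normalization in $\bP[C=c \mid \cdot]$ — the denominator $\sum_k \mu_k \one{x_k + \one{A=k}>0}$ depends on which queues are empty in the intermediate state — so the bookkeeping of which $(a,c)$ pairs are feasible from which predecessors must be done carefully; this is exactly where the closed-network structure pays off, since the classical proof organizes the sum so these denominators cancel.

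\textbf{Buzen's recursion for $Z_{n,m-1}$.} Finally, with $\pi$ identified, $Z_{\un,\um} = \sum_{x \in \cX_{\un,\um}} \prod_{i=1}^{\un}(p_i/\mu_i)^{x_i}$. The base cases are immediate: $Z_{\un,0}=1$ (only the zero vector), and $Z_{1,\um} = (p_1/\mu_1)^{\um}$ (only the vector $(\um)$). For the recursion, I would split the sum over $x \in \cX_{\un,\um}$ according to whether $x_{\un} = 0$ (contributing $Z_{\un-1,\um}$, the sum over the first $\un-1$ coordinates with total $\um$) or $x_{\un} \ge 1$ (substitute $x_{\un} \mapsto x_{\un}-1$, pulling out one factor $p_{\un}/\mu_{\un}$ and leaving a sum over $\cX_{\un,\um-1}$, namely $\frac{p_{\un}}{\mu_{\un}} Z_{\un,\um-1}$), giving $Z_{\un,\um} = Z_{\un-1,\um} + \frac{p_{\un}}{\mu_{\un}} Z_{\un,\um-1}$. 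This part is routine.
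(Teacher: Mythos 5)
Your proposal is correct, and it rests on the same identification as the paper: $(X_t)$ is the embedded chain at departure epochs of a closed Jackson (Gordon--Newell) network with service rates $\mu$ and relative arrival rates $p$. The difference is one of self-containedness: the paper's proof is essentially a two-line citation to Serfozo (product form of the Gordon--Newell network plus the departure-epoch/arrival-theorem result giving the $\cX_{n,m-1}$ distribution), whereas you propose to verify stationarity directly via global balance on $\cX_{n,m-1}$. That verification does go through, and the "main obstacle" you flag resolves exactly as you anticipate: writing the predecessors of $y$ as $y - e_a + e_c$ and using $\pi(y-e_a+e_c) = \pi(y)\,(\mu_a/p_a)(p_c/\mu_c)$, the incoming flow becomes $\pi(y)\sum_c p_c \,\bigl(\sum_{a:\,(y+e_c)_a \ge 1}\mu_a\bigr)/\bigl(\sum_{k:\,(y+e_c)_k \ge 1}\mu_k\bigr) = \pi(y)$, so the state-dependent normalizations cancel term by term. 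Your irreducibility/positive-recurrence argument (finite state space plus connectivity via single token moves, using $p_i,\mu_i>0$) and the Buzen recursion (splitting on $x_\un = 0$ versus $x_\un \ge 1$) are both standard and correct; the latter matches the computation the paper attributes to Buzen. The only thing your write-up buys beyond the paper is independence from the cited references; the only thing it costs is length.
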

\begin{proof}
    See Section~\ref{app:jackson} in the supplementary material~\cite{alahyane2025optimizing}.
\end{proof}

In the rest of the paper,
we will assume that the system starts in steady state.
This assumption is reasonable
when the total number~$T$ of updates is sufficiently large,
as the distribution of~$X_t$
converges exponentially fast with~$t$
towards the stationary distribution
regardless of the initial distribution (see Theorem~13.4.14 in \cite{bremaud2013markov}).
Concretely, we will assume that
$X \triangleq X_0$ follows the stationary distribution~$\pi_{n, m-1}$,
and we will often drop the time index,
so that for instance $D_i$ will be a random variable
distributed like $D_{i, t}$ for any $t \in \bN_{> 0}$.

\section{OPTIMIZE MODEL UPDATES} \label{sec:optimize-updates}

Consistently with the literature on asynchronous \gls{FL}, in this section, we assume the \gls{CS} has a fixed budget~$T$ of model parameter updates, and we try to make the best of these updates. This is particularly relevant in contexts, such as cellular networks with costly data plans or satellite internet services, where data transmission costs are high.

\subsection{Bound, delay, and gradient descent} \label{sec:optimize-updates-delay}

\citep[Theorem~1]{leconte2024queueing} gave the following upper bound on the ergodic mean of the norm-square of the gradient of~$f$: there exists $\eta_{\text{max}} > 0$ dependent on~$p$ \footnote{The original proof of the bound~$G$ in~\cite{leconte2024queueing} incorrectly assumes independence between $D_{i,k}$ and $\nabla f(w_k)$ for all~$k \in \{1,\ldots,T\}$, which does not generally hold in asynchronous settings. We provide a correction based on a stochastic bound in Section~\ref{issue:eta} of the supplementary material~\cite{alahyane2025optimizing}, which yields a revised expression for $\eta_{\text{max}}$.} such that, for any $\eta \in (0, \eta_{\text{max}})$,
\begin{align}
	\nonumber
	&\frac{1}{T + 1}\sum_{t=0}^{T} \mathbb{E}[\|\nabla f(w_t)\|^2] \leq 8G, \text{ where} \\
	\label{eq:G}
	&\begin{aligned}
		G ={}
		\frac{A}{\eta(T + 1)}
		+ \frac{\eta L B}{n^2} \sum_{i=1}^{n} \frac{1}{p_i}
		+ \frac{\eta^2 L^2 B m}{n^2} \sum_{i=1}^{n} \frac{\bE[D_i]}{p_i^2}
		\text{,}
	\end{aligned}
\end{align} 
with $A=f(w_0) - f^*$ and $B = \sigma^2 + 2M^2$ (see Section~\ref{assumptions} of the supplementary material~\cite{alahyane2025optimizing} for details on these constants).
If needed, the dependency of~$G$ on the routing vector~$p$ will be made explicit by writing $G(p)$.
In the remainder, we use~$G$ as a proxy objective to minimize the ergodic norm-squared gradient of~$f$.


The first term in~$G$ follows a classical pattern, capturing the influence of initialization on convergence. The next two terms reflect the effects of stochastic gradient estimation (through~$\sigma$) and data heterogeneity across clients (through~$M$). The second term also captures the variance of gradient updates induced by the routing-dependent learning rate and is minimized by uniform routing. Anticipating the discussion in Section~\ref{sec:optimize-updates-discussion}, the third term quantifies gradient staleness via the relative delay~$\bE[D_i]$.
The last two terms in~$G$ depend heavily on the routing vector~$p$ and the service rate vector~$\mu$, both explicitly and implicitly through the mean relative delays~$\bE[D_i]$.

\textit{A priori}, expected relative delays are complex functions of the system dynamics, as a task's delay may depend on an unbounded number of future rounds.
\Cref{theo:little}, our first main contribution, bypasses this difficulty by expressing them as functions of the mean numbers of stationary tasks, which, in turn, allows us to derive closed-form expressions for the mean relative delays and their gradient.

\begin{theorem} \label{theo:little}
	In the framework of \Cref{sec:fl}, we have
	\begin{align}
		\label{eq:D}
		\bE[D_i]
		&= \bE[X_i],
		\quad i \in \{1, 2, \ldots, n\},
		\\
		\label{eq:gradD}
		\frac{\partial \bE[D_i]}{\partial p_j}
		&= \frac1{p_j} \cov[X_i, X_j],
		\quad i, j \in \{1, 2, \ldots, n\},
	\end{align}
	where for each $i, j \in \{1, 2, \ldots, n\}$,
	\begin{align}
		\label{eq:Xi}
		\bE[X_i]
		&= \sum_{k = 1}^{m-1}
		\left( \frac{p_i}{\mu_i} \right)^k
		\frac{Z_{n, m-1-k}}{Z_{n, m-1}}, \\
		\label{eq:XiXj}
		\bE[X_i X_j]
		&=
		\hspace{-.05cm}
		\sum_{\substack{k, \ell = 1 \\ k + \ell \le m - 1}}^{m-1}
		\hspace{-.05cm}
		\left( \frac{p_i}{\mu_i} \right)^{k}
		\left( \frac{p_j}{\mu_j} \right)^{\ell}
		\frac{Z_{n, m-1-k-\ell}}{Z_{n, m-1}},
	\end{align}
	and the constants $Z_{n, \um}$ for $\um \in \{0, 1, \ldots, m - 1\}$
	are computed as in \Cref{prop:jackson}.
\end{theorem}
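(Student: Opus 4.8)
The plan is to establish \eqref{eq:D} first — this discrete Little's law is where the real work lies — and then to read off \eqref{eq:gradD}, \eqref{eq:Xi}, and \eqref{eq:XiXj} from it together with the product form \eqref{eq:pi}.

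For \eqref{eq:D}, I would extend the stationary process $(X_t, A_t, C_t)$ to a two-sided time axis $t \in \bZ$ (legitimate since the chain is irreducible and positive recurrent on the \emph{finite} set $\cX_{n,m-1}$) and index tasks by their arrival round: the task arriving at round $t$ is routed to client $A_t$ and, on the event $\{A_t = i\}$, departs at round $t + R_{i,t}$, so it occupies client $i$'s queue exactly at the ends of rounds $t, t+1, \ldots, t + R_{i,t} - 1$. Since the system holds at most $m - 1$ tasks, no task can have arrived infinitely far in the past, and each task present at client $i$ at the end of round $s$ is counted exactly once this way, which gives the pathwise identity
\[
	X_{i,s} = \sum_{t \le s} \one{A_t = i}\, \one{R_{i,t} > s - t}, \qquad s \in \bZ.
\]
Taking expectations, using stationarity to replace $s - t$ by a free index $u \ge 0$, and then the layer-cake identity $\sum_{u \ge 0} \bP[A_0 = i, R_{i,0} > u] = \bE[\one{A_0 = i}\, R_{i,0}] = \bE[D_{i,0}]$ (Tonelli, all terms being nonnegative) yields $\bE[X_i] = \bE[D_i]$. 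The one genuine subtlety — which I expect to be the main obstacle — is ensuring this is not vacuous, i.e.\ that $\bE[D_i] < \infty$: while a task is still held at client $i$, that client is non-empty, hence completes a task at each subsequent round with conditional probability at least $\mu_i / \sum_{k=1}^n \mu_k$, so $R_{i,t}$ on $\{A_t = i\}$ is stochastically dominated by a negative-binomial variable (the number of rounds needed to accumulate $m$ such successes), which has finite mean. One may equivalently run the argument as a one-sided time average, at the cost of an $O(1)$ boundary correction controlled by the same tail bound.

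Given \eqref{eq:D}, identity \eqref{eq:gradD} is a short computation. With $a_k \coloneqq p_k / \mu_k$, \eqref{eq:pi} gives $\bE[X_i] = Z_{n,m-1}^{-1} \sum_{x \in \cX_{n,m-1}} x_i \prod_k a_k^{x_k}$; since the normalizing constant absorbs the constraint $|p| = 1$, this is a smooth, scale-invariant function of the unconstrained vector $p \in \bR_{>0}^n$, so that $\partial \bE[D_i] / \partial p_j$ is unambiguous. Using $\partial_{p_j} \prod_k a_k^{x_k} = (x_j / p_j) \prod_k a_k^{x_k}$, one gets $\partial_{p_j} Z_{n,m-1} = p_j^{-1} Z_{n,m-1} \bE[X_j]$, and the quotient rule then gives $\partial_{p_j} \bE[X_i] = p_j^{-1} \big( \bE[X_i X_j] - \bE[X_i]\, \bE[X_j] \big) = p_j^{-1} \cov[X_i, X_j]$. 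As a consistency check, $\sum_k X_k = m - 1$ is deterministic, so $\sum_j p_j\, \partial_{p_j} \bE[X_i] = \sum_j \cov[X_i, X_j] = 0$, exactly as scale-invariance (and taking gradients on the simplex $\cP_n$) requires.

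Finally, \eqref{eq:Xi} and \eqref{eq:XiXj} follow from a peeling identity for the partition function: for each $k \ge 1$, subtracting $k$ from coordinate $i$ is a bijection $\{ x \in \cX_{n,m-1} : x_i \ge k \} \to \cX_{n,m-1-k}$ along which $\prod_l a_l^{x_l} = a_i^k \prod_l a_l^{x'_l}$, hence $\sum_{x \in \cX_{n,m-1},\ x_i \ge k} \prod_l a_l^{x_l} = a_i^k\, Z_{n,m-1-k}$. Substituting this into $x_i = \sum_{k=1}^{m-1} \one{x_i \ge k}$ (valid as $x_i \le m-1$) and dividing by $Z_{n,m-1}$ yields \eqref{eq:Xi}; peeling coordinates $i$ and $j$ simultaneously and using $x_i x_j = \sum_{k, \ell \ge 1} \one{x_i \ge k}\, \one{x_j \ge \ell}$ yields \eqref{eq:XiXj} for $i \ne j$ — the range $k + \ell \le m - 1$ only records that $\cX_{n, m-1-k-\ell}$ is empty otherwise — and the diagonal case follows the same way after writing $X_i^2 = X_i + X_i(X_i - 1)$ and peeling coordinate $i$ in each term. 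All constants $Z_{n, \um}$ that appear are then evaluated via the Buzen recursion of \Cref{prop:jackson}.
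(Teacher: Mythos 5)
Your proposal is correct, and for the core identity \eqref{eq:D} it takes a genuinely different route from the paper. Both arguments rest on the same pathwise occupancy identity $X_{i,s}=\sum_{t\le s}\one{A_t=i}\one{R_{i,t}>s-t}$ (the paper's Equation~\eqref{eq:X-unfolded}), but you conclude by taking expectations directly under a two-sided stationary extension and applying Tonelli --- a Palm-type proof of Little's law --- whereas the paper stays on a one-sided axis with $X_{i,0}=0$, squeezes $\sum_{t\le T}D_{i,t}$ between $\sum_{t\le T}X_{i,t}$ and $\sum_{t\le T+R_{i,T}}X_{i,t}$ (using the bijection $t\mapsto t+D_{i,t}$ between arrival and completion rounds), shows $R_{i,T}=o(T)$ almost surely via hitting times of the empty state, and invokes the ergodic theorem on both sides. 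Your version is shorter and sidesteps the boundary-correction machinery; your negative-binomial domination of $R_{i,t}$ is stronger than what the paper establishes and is not even strictly needed for the expectation identity, since the Tonelli computation holds in $[0,\infty]$ and finiteness of $\bE[D_i]$ then follows for free from $\bE[X_i]\le m-1$ (it is, however, a clean way to justify that every task present at a given round arrived at a finite round in the two-sided extension). The paper's version, in exchange, avoids the two-sided construction and delivers almost-sure convergence of the empirical delay average as a byproduct. Your treatments of \eqref{eq:gradD} (differentiating the log-partition function of the exponential family; the paper differentiates in $\log p_j$, you in $p_j$, the same computation up to a factor $p_j$) and of \eqref{eq:Xi}--\eqref{eq:XiXj} (Buzen peeling via $x_i=\sum_k\one{x_i\ge k}$ and the shift $y=x-ke_i-\ell e_j$) coincide with the paper's appendix.

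One caveat on the diagonal case of \eqref{eq:XiXj}: the change of variable $y=x-ke_i-\ell e_j$ requires $i\ne j$, and the displayed formula is in fact false for $i=j$ (take $n=1$: it returns $(m-1)(m-2)/2$ instead of $(m-1)^2$). Your decomposition $X_i^2=X_i+X_i(X_i-1)$ is the right fix, but peeling $x_i(x_i-1)=2\sum_{k,\ell\ge1}\one{x_i\ge k+\ell}$ gives $\bE[X_i^2]=\bE[X_i]+2\sum_{k+\ell\le m-1}(p_i/\mu_i)^{k+\ell}\,Z_{n,m-1-k-\ell}/Z_{n,m-1}$, which is \emph{not} \eqref{eq:XiXj} evaluated at $i=j$; so ``the diagonal case follows the same way'' should be read as producing this corrected expression rather than the literal statement. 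The correction is not cosmetic, since the variance $\cov[X_i,X_i]$ is exactly what \eqref{eq:gradD} requires to compute $\partial\bE[D_i]/\partial p_i$.
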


\begin{proof}
	The proof of \Cref{theo:little} is in Section~\ref{app:little} of the supplementary material~\cite{alahyane2025optimizing}.
	Here we briefly give the intuition behind~\eqref{eq:D},
	which is analogous to Little's law.
	If each task at client~$i$ pays \$1
	each time a task gets completed at another client,
	there are two ways of collecting payments:
	either we receive an upfront payment of $\$ R_i$
	when a task is assigned to client~$i$ (yielding $\bE[D_i]$),
	or we earn $\$ X_i$
	each time a task gets completed at another client (yielding $\bE[X_i]$).
	Equations~\eqref{eq:gradD}--\eqref{eq:XiXj}
	follow by direct computations,
    after observing that the distribution~\eqref{eq:pi}
	is an exponential family.
\end{proof}
In the remainder, we will apply \eqref{eq:D}--\eqref{eq:gradD} to compute $\bE[D_i]$ and $\nabla_p \bE[D_i]$. However, these equations can also be used to estimate these quantities through Monte Carlo simulations.
Section~\ref{app:G} of the supplementary material~\cite{alahyane2025optimizing}
gives a gradient-descent algorithm to optimize the routing vector~$p$ in view of minimizing~$G$, which will be applied in \Cref{num:optimize-updates} to run extensive numerical results.

\subsection{Discussion} \label{sec:optimize-updates-discussion}

\Cref{theo:little} allows us to gain insight into
the impact on~$G$ of the system parameters.

\paragraph{How to minimize~$G$?}

One can verify that the second term in~$G$ is minimized by applying
the uniform routing~$p^{\text{uniform}}$,
given by $p^{\text{uniform}}_i = \frac1n$
for each $i \in \{1, 2, \ldots, n\}$.
\Cref{fig:G-third-term} shows that minimizing the third term in~$G$,
involving the mean relative delays,
is more challenging: even in a toy 2-client example,
the third term is non-monotonic
and is minimized by assigning almost all tasks to the slowest client.
\begin{figure}[htbp]
    \begin{center}
    \includegraphics[width=.47\textwidth]{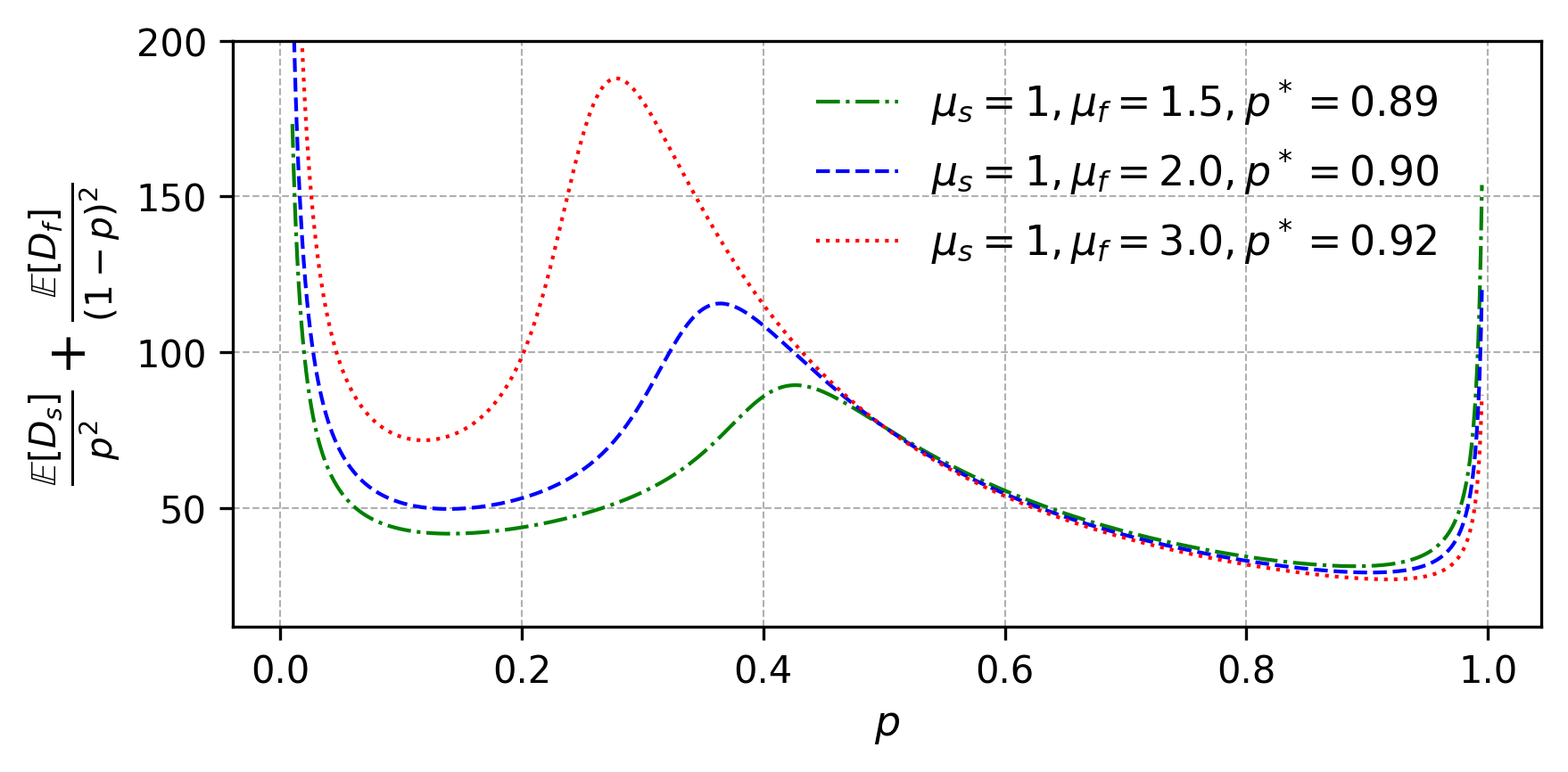}
    \caption{Third term of the bound~$G$ given in~\eqref{eq:G} vs.\ the routing probability to the slowest client, in a toy example with $n = 2$ clients and $m = 20$ tasks, for various speed vectors~$\mu=\left(\mu_s, \mu_f \right)$.}
    \label{fig:G-third-term}
    \end{center}
\end{figure}

To gain more insight into this third term,
observe that Equation~\eqref{eq:D} from \Cref{theo:little}
yields the following simple relation between the mean relative delays:
\begin{align} \label{eq:tot}
	\sum_{i = 1}^n \bE[D_i]
	= \sum_{i = 1}^n \bE[X_i]
	= m - 1.
\end{align}
This relation has several consequences, in particular:
(i) the sum of the mean relative delays depends only on the numbers~$n$ and~$m$ of clients and tasks, while~$p$ and~$\mu$ impact only how the relative delay is distributed across clients;
(ii) decreasing the relative delay at a client necessarily comes at the cost of an increased relative delay at another client;
and (iii) since the routing probabilities $p_i$ and relative delays $\bE[D_i]$ both have constant sums over the clients~$i$, minimizing the third term in~$G$ requires finding a vector~$p$ so that a client~$i$ with a high relative delay $\bE[D_i]$ also has a relatively large routing probability~$p_i$.

\paragraph{Dependency on the number~$m$ of tasks}

Another consequence of \Cref{theo:little}
is that the bound~$G$
is an increasing function of the number~$m$ of tasks
(by combining~\eqref{eq:D}
with the observation that $\bE[X_i]$ is a non-decreasing function of~$m$ \citep[Lemma~2]{suri1985monotonicity}).
In particular,
keeping all other system parameters fixed,
the performance is optimized when only~$m = 1$ task circulates in the network!
In this case, \eqref{eq:tot} implies
that the third term in~$G$ is equal to zero.
This is intuitive because, with a single task circulating in the network, the staleness issue is trivially eliminated, and the system works like a synchronous \gls{FL} system in which the \gls{CS} samples a single client at each round. This observation motivates the alternative metric we introduce in \Cref{sec:optimize-time}.

\paragraph{Simple routing strategies}
Our result allows us to simplify the bound~$G$ for two noteworthy routing vectors, that serve as baselines in our experiments.
First, under uniform routing $p^{\text{uniform}}$,
\eqref{eq:tot} yields
\begin{align*}
G(p^{\text{uniform}})
&= \frac{A}{\eta (T + 1)}
+ \eta L B
+ \eta^2 L^2 B m (m-1).
\end{align*}
Note that \texttt{Generalized AsyncSGD} with uniform routing reduces to the standard \texttt{AsyncSGD} algorithm.
\begin{figure*}[ht]
\begin{center}
\centerline{\includegraphics[width=\textwidth]{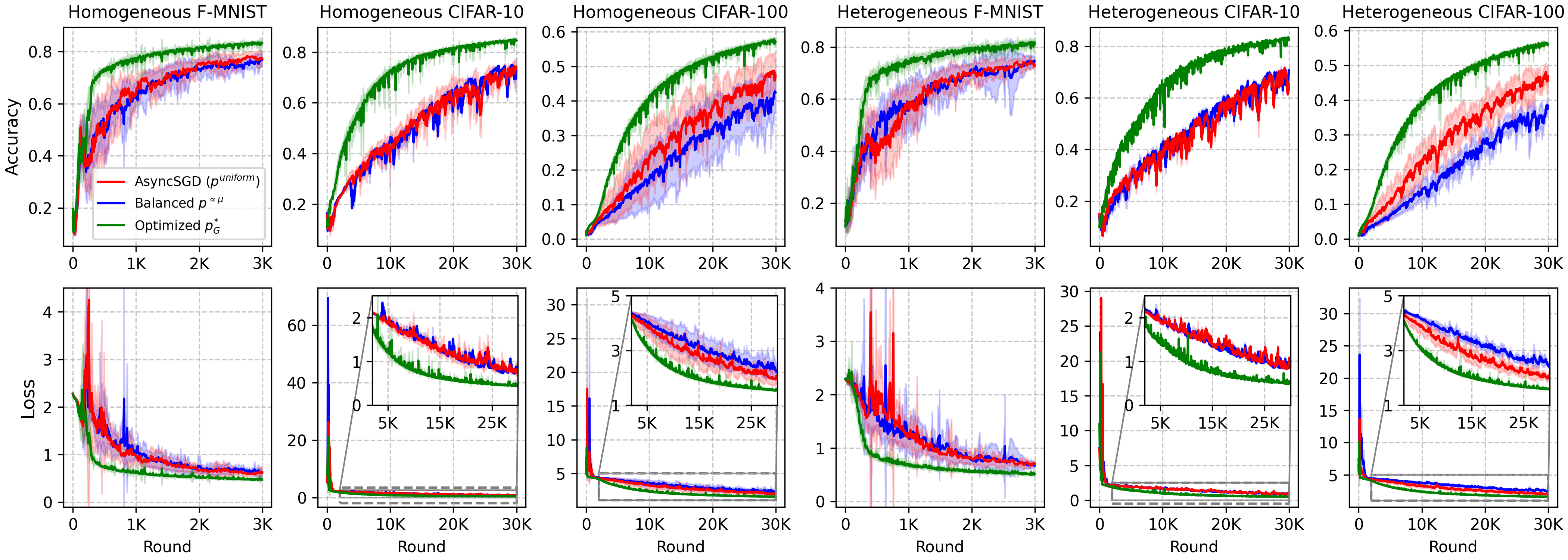}}
\caption{Performance on the test set at the \gls{CS} in the scenario of \Cref{num:optimize-updates}, with $n = 20$ clients and $m = 100$ tasks, under homogeneous and heterogeneous data distributions. Solid lines show averages over independent runs; shaded areas denote standard deviations. For Fashion-MNIST, we ran 10 simulations of 3,000 rounds, recording accuracy and loss every 5 rounds. For CIFAR-10 and CIFAR-100, we applied standard normalization and data augmentation, and ran 3 simulations of 30,000 rounds, logging performance every 50 rounds. All routing strategies use the same model initialization.}
\label{fig:sim_results}
\end{center}
\end{figure*}

Another routing strategy that admits explicit analysis is the \emph{balanced} routing vector $p^{\propto \mu}$, defined by
$p^{\propto \mu}_i = \frac{\mu_i}{\sum_j \mu_j}$,
i.e., each client is selected with probability proportional to its service speed. This strategy is well-known in queueing theory for ``balancing the load'' across clients, in the sense that $\mathbb{E}[D_i] = \mathbb{E}[X_i] = (m - 1)/n$ for all $i \in \{1, 2, \ldots, n\}$ (see \eqref{eq:pi}, \eqref{eq:D}, and \eqref{eq:tot}). Under this policy, all clients experience the same average relative delay, regardless of 
individual speeds. Moreover, this strategy is commonly used as a heuristic to maximize the throughput of closed Jackson networks.
It follows that
\begin{align*}
G(p^{\propto \mu})=
\frac{A}{\eta (T + 1)}
+ \frac{\eta L B |\mu|}{n^2} \sum_{i = 1}^n \frac1{\mu_i}\left(1
+ \frac{\eta L m (m-1) |\mu|}{n \mu_i}\right).
\end{align*}

\subsection{Numerical results} \label{num:optimize-updates}

In this section, we numerically evaluate the impact of the optimizations proposed in \Cref{sec:optimize-updates-delay} on the accuracy and loss performance of \texttt{Generalized AsyncSGD} across several image classification tasks. The objective function $G$ is used as a proxy to minimize the average squared norm of the gradient of~$f$ per update round.
We consider mainly two data-distribution scenarios:

\textbf{Homogeneous}:
Data is distributed independently and identically across clients,
and each client receives an equal number of data points from each class.

\textbf{Heterogeneous}:
Datasets are heterogeneous across clients, both in terms of distribution and volume. For each class~$k$, we sample a vector $q_k \sim \text{Dir}_n(0.5)$, where $q_{k,j}$ is the proportion of class-$k$ instances allocated to client~$j$ and $\text{Dir}_n(\beta)$ the Dirichlet distribution with dimension $n$ and concentration parameter $\beta > 0$ \cite{li2021federatedlearningnoniiddata}.

Given a system as described in \Cref{sec:fl}, the $G$-optimized routing vector $p^*_G$ is computed by minimizing $G$ using the Adam gradient-descent algorithm with standard hyperparameters 
from the literature \cite{kingma2014adam}, initialized with the uniform routing vector~$p^{\text{uniform}}$. Since~$G$ is non-convex, the resulting routing vector may be a local minimum. The gradient of~$G$ is computed in closed form using the results of \Cref{theo:little}, as detailed in Section~\ref{app:G} of the supplementary material~\cite{alahyane2025optimizing}.


Given a routing vector~$p$, we simulate the system of \Cref{sec:fl} and evaluate \texttt{Generalized AsyncSGD}
on image classification tasks using the
Fashion-MNIST \cite{deng2012mnist}, CIFAR-10, and CIFAR-100 \cite{Krizhevsky2009LearningML} datasets. We employ the standard multi-class cross-entropy loss and evaluate performance on an unseen, label-balanced test dataset. Additional experimental details are provided in Section~\ref{exp:dl} of the supplementary material~\cite{alahyane2025optimizing}.

We consider a network of $n = 20$ clients managing $m = 100$ tasks. For each $i \in \{1, \ldots, 20\}$, the service speed of client~$i$ is set to $\mu_i = \exp(i/100)$, such that the fastest client is approximately 20\% faster than the slowest. The learning rate is $\eta=0.01$, and $L=1$. As discussed in \Cref{sec:optimize-updates-discussion}, this high-concurrency setting poses a significant challenge for minimizing gradient staleness.
To assess the robustness of $p^*_G$ to the stationarity assumption, we initialized the system out of stationarity by assigning $\frac{m}{n}$ tasks to each client instead of sampling the initial state from the stationary distribution.
We compare the performance of \texttt{Generalized AsyncSGD} under the optimized routing vector $p^*_G$ with two baselines:
(i) \texttt{AsyncSGD}, corresponding to \texttt{Generalized AsyncSGD} with the uniform routing vector $p^{\text{uniform}}$;
(ii) \texttt{Generalized AsyncSGD} with the balanced routing vector~$p^{\propto \mu}$ described in the previous section.

\paragraph{Optimizing~$G$.}

The optimized vector~$p^*_G$ obtained by minimizing~$G$ selects the slowest client over 40\% of the time, while the routing probabilities to faster clients are all of the same magnitude but decrease as the client speed increases. This result is consistent with \citep[Section~5]{leconte2024queueing} but counterintuitive \textit{a priori}. Such a skewed routing vector is obtained because it significantly reduces the third term of~$G$ (see \Cref{sec:optimize-updates-discussion}). Intuitively, $p^*_G$ synchronizes the system to the pace of the slowest client, while routing tasks to faster clients in inverse proportion to their speeds, to reduce errors caused by stale gradients.

Additional numerical results (not shown here) reveal that the skewness of the optimized routing vector~$p^*_G$ towards the slowest client is accentuated when the concurrency~$m$ is large, as in the scenario we consider here. More specifically, the vector~$p^*_G$ is closer to the uniform vector~$p^{\text{uniform}}$ when
$m$
is small relative to the number~$n$ of clients, while the routing probability to the slowest client rises significantly as $m$ increases.
This is in line with
\Cref{sec:optimize-updates-discussion},
where we observed that the relative weight of third term of~$G$ tends to increase with~$m$.

\paragraph{Performance on datasets.}
\Cref{fig:sim_results} shows that \texttt{Generalized AsyncSGD} with the optimized routing strategy \( p^*_G \) consistently outperforms the baseline methods, namely, \texttt{AsyncSGD} and the balanced routing strategy \( p^{\propto \mu} \), across all experiments and throughout the learning process. Notably, although assigning a higher routing probability to slower clients might seem to skew the model toward their local data distributions, our simulations show that \( p^*_G \) achieves both superior and more stable performance. This robustness is partly due to the use of an adaptive learning rate, which ensures that gradient updates remain unbiased despite routing asymmetries. In contrast, the standard deviation of the loss is significantly higher under uniform and balanced strategies, which we attribute to increased gradient staleness.


Additional experiments in Section~\ref{app:num-dist-heterogeneity} of the supplementary material~\cite{alahyane2025optimizing} confirm that the results hold under more heterogeneous data distributions, where clients have highly imbalanced and disjoint label sets, i.e., each client’s dataset contains only a subset of image labels, possibly disjoint from those of other clients. Further experiments with deterministic and lognormal (heavy-tailed) computation times, reported in Section~\ref{app:num-compu-time-heterogeneity} of the supplementary material~\cite{alahyane2025optimizing}, show similar performance trends, indicating robustness to the assumption of exponentially distributed computation times.


Looking ahead to \Cref{sec:optimize-time}, note that the performance advantage of \( p^*_G \) is specific to the round-based metric used in \Cref{fig:sim_results}. If the x-axis represented wall-clock time instead of update rounds, the ranking would be reversed. As discussed in \Cref{sec:optimize-updates-discussion}, minimizing \( G \) reduces staleness by favoring slower clients, but comes at the cost of lower throughput.
Concretely, in \Cref{fig:sim_results}, the average wall-clock time to complete 3,000 update rounds was 7 to 8 times larger under $p^*_G$ compared to $p^{\text{uniform}}$ and $p^{\propto \mu}$. We believe this trade-off may be prohibitive in practice, even when wall-clock time is not the primary concern. This motivates the development of alternative metrics that better balance staleness reduction and update frequency to optimize convergence in terms of wall-clock time rather than round count.

\section{OPTIMIZE  WALL-CLOCK TIME} \label{sec:optimize-time}

In this section, we aim to optimize performance with respect to wall-clock time, explicitly accounting for the duration of model-parameter update rounds. In \Cref{sec:optimize-time-delay}, we introduce a new performance metric that incorporates these durations and derive an upper bound~$H$, which serves as the wall-clock-time counterpart to~$G$. Additionally, we provide a proxy for the expected wall-clock time required to reach $\epsilon$-accuracy.
\Cref{sec:optimize-time-discussion} highlights the key differences between this analysis and that of \Cref{sec:optimize-updates}. Finally, in \Cref{num:optimize-time}, we present numerical results demonstrating improved model performance in wall-clock time when optimizing $H$ rather than $G$.

\subsection{Bound, delay, and gradient descent} \label{sec:optimize-time-delay}

\Cref{theo:bound-time} below provides an upper bound on the ergodic mean of the squared norm of the gradient of $f$, weighted by the expected duration of each corresponding round. This quantity can be interpreted as a per-round cost, where the cost is defined as the product of the squared gradient norm and the average duration of that round.
For each $t \in \{0, 1, \dots, T\}$, let $\tau_t$ denote the duration of round $t$ in the model described in \Cref{sec:fl}, and define $\bar{\tau}_t = \mathbb{E}[\tau_t]$.

\begin{proposition} \label{theo:bound-time}
In the framework of \Cref{sec:fl}, there exists $\eta_{\text{max}}>~0$ such that for all $\eta \in (0, \eta_{\text{max}})$, the following bound holds:
\begin{align} \label{boundH}
     \frac{1}{T+1}\sum_{t=0}^{T} \bar{\tau}_t \bE \left[ \|\nabla f(w_t)\|^2 \right] \leq 8H = 8 \frac{G}{\lambda}.
\end{align}
Here, $\lambda$ is the throughput of the Jackson network, that is, the number of rounds per unit of  wall-clock time, given as follows, with $Z_{n, m}$ and $Z_{n, m-1}$ as defined in \Cref{prop:jackson} and $\xi \sim \pi_{n, m}$:
	\begin{align}
        \label{eq:throughput}
        \lambda = \sum_{i=1}^n \mu_i \mathbb{P}(\xi_i > 0) = \frac{Z_{n,m-1}}{Z_{n,m}}.
    \end{align}
\end{proposition}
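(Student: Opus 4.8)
The plan is to reuse the round-count bound $G$ from \cite{leconte2024queueing} (as corrected in the supplementary material) together with the observation that the ergodic sum $\frac{1}{T+1}\sum_{t=0}^T \bar\tau_t \bE[\|\nabla f(w_t)\|^2]$ decouples into a time factor and a gradient factor once the system is in steady state. Concretely, because we assume (as stated after \Cref{prop:jackson}) that the chain $(X_t)$ starts from its stationary distribution $\pi_{n,m-1}$, the distribution of the state — and hence of the round duration $\tau_t$ — is the same for every $t$, so $\bar\tau_t = \bar\tau$ is constant in $t$. The first step is therefore to argue that $\bar\tau_t$ does not depend on $t$: conditioned on the state $X_{t-1}$ and the assignment $A_t$, the duration of round $t$ is the minimum of the (exponential, rate $\mu_i$) residual service times over the busy clients, i.e. $\tau_t \mid (X_{t-1},A_t)$ is exponential with rate $\sum_{i : X_{i,t-1}+\one{A_t=i}>0}\mu_i$; taking expectations over the stationary state $\xi \sim \pi_{n,m}$ (the post-assignment state has $m$ tasks) gives $\bar\tau = \bE[1/\sum_{i:\xi_i>0}\mu_i]$, independent of $t$.

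The second step is to identify $1/\bar\tau$ with the Jackson-network throughput $\lambda$. Here I would invoke the standard rate-conservation / PASTA-type identity for closed Jackson networks: the long-run rate of round completions equals $\sum_{i=1}^n \mu_i \bP(\xi_i>0)$ where $\xi \sim \pi_{n,m}$, and this is exactly $1/\bar\tau$ by renewal-reward over a typical round. The product-form expression $\bP(\xi_i>0) = 1 - \pi_{n,m}(\{x : x_i = 0\})$ combined with Buzen's recursion (\Cref{prop:jackson}) collapses the sum $\sum_i \mu_i \bP(\xi_i>0)$ to the telescoping ratio $Z_{n,m-1}/Z_{n,m}$; this is the classical utilization/throughput formula for closed BCMP networks and follows from the Buzen recursion $Z_{n,m} = Z_{n-1,m} + \frac{p_n}{\mu_n}Z_{n,m-1}$ applied coordinatewise. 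This establishes \eqref{eq:throughput}.

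With $\bar\tau_t \equiv 1/\lambda$ established, the third step is immediate: pull the constant $\bar\tau = 1/\lambda$ out of the ergodic average,
\begin{align*}
\frac{1}{T+1}\sum_{t=0}^{T} \bar\tau_t \bE[\|\nabla f(w_t)\|^2]
= \frac1\lambda \cdot \frac{1}{T+1}\sum_{t=0}^{T} \bE[\|\nabla f(w_t)\|^2]
\leq \frac{8G}{\lambda},
\end{align*}
where the inequality is \eqref{eq:G} from \cite{leconte2024queueing}, valid for $\eta \in (0,\eta_{\max})$ with the same $\eta_{\max}$ (now depending on $p$ and $\mu$). Setting $H = G/\lambda$ finishes the proof.

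The main obstacle I anticipate is the rigorous justification that $\bar\tau_t$ is genuinely constant and equals $1/\lambda$ — in particular, reconciling the "state just before completion has $m-1$ in-flight tasks" picture used for $X_t$ with the "state just after assignment has $m$ tasks" picture $\xi \sim \pi_{n,m}$ that governs the round duration, and checking that the appropriate stationary distribution to average the exponential-minimum rate against is indeed $\pi_{n,m}$ rather than $\pi_{n,m-1}$. This is a bookkeeping point about which embedded chain one conditions on, and it is where the factor structure $Z_{n,m-1}/Z_{n,m}$ actually comes from; I would handle it by carefully tracking the transition $X_{t-1} \to (\text{assign } A_t) \to (\text{complete } C_t) \to X_t$ and invoking reversibility/insensitivity of the product form. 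The remaining steps — Buzen telescoping and pulling out a constant — are routine.
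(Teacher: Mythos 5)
Your overall architecture matches the paper's: show that the mean round duration is constant and equal to $1/\lambda$, pull it out of the ergodic average, and apply the round-count bound on $G$; your derivation of \eqref{eq:throughput} via $\sum_i \mu_i \bP(\xi_i>0)$ and Buzen's formula is also essentially the paper's route. The gap is in your first step. You assert that the post-assignment state is distributed as $\pi_{n,m}$ and conclude $\bar\tau = \bE_{\pi_{n,m}}\bigl[1/\sum_{i:\xi_i>0}\mu_i\bigr]$. That formula is incompatible with the conclusion you need: since $\lambda = \bE_{\pi_{n,m}}\bigl[\sum_i \mu_i\one{\xi_i>0}\bigr]$, Jensen's inequality gives $\bE_{\pi_{n,m}}[1/\mathrm{rate}] \ge 1/\bE_{\pi_{n,m}}[\mathrm{rate}] = 1/\lambda$, with strict inequality whenever the total busy rate is not almost surely constant across states. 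The round-start states are the states of the \emph{embedded jump chain} of the continuous-time network, whose stationary law is not $\pi_{n,m}$ but $\pi_{n,m}$ reweighted (size-biased) by the total exit rate $\sum_i\mu_i\one{x_i\ge 1}$ (Theorem~13.4.5 in \cite{bremaud2013markov}); it is precisely this reweighting that cancels the conditional mean $1/\sum_i\mu_i\one{x_i\ge1}$ and collapses $\bE[\tau_t]$ to $Z_{n,m}/V_{n,m}$ with $V_{n,m}=Z_{n,m-1}$, i.e.\ to $1/\lambda$. You do flag this bookkeeping issue at the end, but you frame it as a choice between $\pi_{n,m}$ and $\pi_{n,m-1}$, whereas the correct answer is neither: the relevant distinction is between the time-stationary law and the event-stationary (Palm) law of the state at round starts.

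Your renewal-reward remark in the second step does give the correct identity $\lambda = 1/\bE^0[\tau]$ when $\bE^0$ is the Palm expectation over round-start epochs, and this could be turned into a complete alternative argument; but as written it contradicts your step-one formula, and making it rigorous still requires identifying the jump-chain stationary distribution, which is exactly the computation the paper carries out (together with the algebraic identity $V_{n,m}=Z_{n,m-1}$). The remaining pieces --- the Buzen telescoping, pulling out the constant $1/\lambda$, and invoking the bound $8G$ --- are fine and match the paper.
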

\begin{proof}
	See Section~\ref{app:time} of the supplementary material~\cite{alahyane2025optimizing}.
\end{proof}

The bound $H$ in \Cref{theo:bound-time} can be viewed as the throughput-aware counterpart to the bound $G$. As discussed in the previous section, minimizing $G$ alone (which reflects the average model error) tends to route most tasks to slower clients, significantly reducing system throughput. This occurs because prioritizing slow clients helps reduce staleness, thereby improving model accuracy per iteration.

In contrast, the quantity $H = G / \lambda$ captures the trade-off between two competing objectives: minimizing $G$ to improve per-iteration model quality, and maximizing $\lambda$ to increase the frequency of updates in wall-clock time. These two goals are often in conflict, as reducing one typically increases the other. Thus, $H$ provides a principled objective that balances staleness reduction with practical training speed, making it more realistic in asynchronous FL settings.


Another useful metric for evaluating performance over wall-clock time is the expected time to reach $\epsilon$-accuracy. \Cref{prop:time-eps} gives its expression within the framework of \Cref{sec:fl}. Let $\tilde{\tau}_T = \sum_{t = 0}^{T-1} \tau_t$ denote the time required to complete $T$ rounds.

\begin{proposition}[Time to achieve an $\epsilon$-accuracy]
\label{prop:time-eps}
Assume that the learning rate $\eta = \frac{C}{T^\alpha}$ is a function of the number of rounds $T$, where $\alpha, C \in \bR_{> 0}$, such that $\alpha < 1$ and $\eta < \eta_{\text{max}}$. Then it holds that $\frac{1}{T+1} \sum_{t=0}^{T} \mathbb{E}[\|\nabla f(w_t)\|^2] \leq \epsilon$ whenever $T \ge T_\epsilon$, where
\begin{align*}
    T_{\epsilon} = \mathcal{O}\Bigg( 
        \left( \frac{A}{C\epsilon} \right)^{\frac{1}{1-\alpha}}
    &+ \left( \frac{C L B}{\epsilon n^2} \sum_{i=1}^{n} \frac{1}{p_i} \right)^{\frac{1}{\alpha}} \\
        &+ \left( \frac{C^2 L^2 B m}{\epsilon n^2} \sum_{i=1}^{n} \frac{\mathbb{E}[D_i]}{p_i^2} \right)^{\frac{1}{2\alpha}} 
    \Bigg)
\end{align*}
rounds with expected wall-clock time
$\mathbb{E}[\tilde{\tau}_{T_\epsilon}] = T_{\epsilon}/\lambda$.
\end{proposition}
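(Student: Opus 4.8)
The plan is to turn the per-round guarantee $\frac{1}{T+1}\sum_{t=0}^{T}\mathbb{E}[\|\nabla f(w_t)\|^2]\le 8G$ from \eqref{eq:G} into an iteration-complexity bound by substituting the schedule $\eta=C/T^{\alpha}$, and then into a wall-clock statement using the throughput $\lambda$ of \Cref{theo:bound-time}. With $\eta=C/T^{\alpha}$, the quantity $8G$ is a sum of three terms,
\[
\frac{8A}{C}\,\frac{T^{\alpha}}{T+1}\;+\;\frac{8CLB}{n^{2}T^{\alpha}}\sum_{i=1}^{n}\frac1{p_i}\;+\;\frac{8C^{2}L^{2}Bm}{n^{2}T^{2\alpha}}\sum_{i=1}^{n}\frac{\mathbb{E}[D_i]}{p_i^{2}},
\]
and since $\alpha<1$ and $T^{\alpha}/(T+1)\le T^{\alpha-1}$, each term is decreasing in $T$. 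Hence $8G\le\epsilon$ is guaranteed once every term is at most $\epsilon/3$; solving these three inequalities for $T$ gives the thresholds $\bigl(24A/(C\epsilon)\bigr)^{1/(1-\alpha)}$, $\bigl(24CLB\,(n^{2}\epsilon)^{-1}\sum_i p_i^{-1}\bigr)^{1/\alpha}$, and $\bigl(24C^{2}L^{2}Bm\,(n^{2}\epsilon)^{-1}\sum_i \mathbb{E}[D_i]p_i^{-2}\bigr)^{1/(2\alpha)}$. Taking $T_{\epsilon}$ to be the maximum of these three (and, if one also wants $\eta<\eta_{\text{max}}$ for all $T\ge T_\epsilon$, at least $(C/\eta_{\text{max}})^{1/\alpha}$, a quantity that is $\mathcal{O}(1)$ in $\epsilon$ and therefore dominated) and using $\max\{a,b,c\}\le a+b+c\le 3\max\{a,b,c\}$, this $T_{\epsilon}$ equals — up to absolute constants, which together with the factor $8$ and the $24$'s are all absorbed into the $\mathcal{O}(\cdot)$ — the sum displayed in the statement.

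For the wall-clock part, write $\mathbb{E}[\tilde{\tau}_{T_\epsilon}]=\sum_{t=0}^{T_\epsilon-1}\bar{\tau}_t$, so it suffices to show $\bar{\tau}_t=1/\lambda$ for every round $t$ in steady state. Conditionally on the network state $Y_t:=X_{t-1}+e_{A_t}\in\cX_{n,m}$ that prevails during round $t$ (after the assignment $A_t$, before the completion $C_t$; here $e_k$ is the $k$-th unit vector), the round duration $\tau_t$ is exponential with rate $r(Y_t):=\sum_{j}\mu_j\one{Y_{t,j}>0}$. The key observation is that $Y_t$ is \emph{not} $\pi_{n,m}$-distributed: with $X_{t-1}\sim\pi_{n,m-1}$ and $A_t\sim p$ independent, a one-line computation from \eqref{eq:pi} — using $\sum_j p_j(\mu_j/p_j)\one{y_j>0}=r(y)$ and $\lambda=Z_{n,m-1}/Z_{n,m}$ from \eqref{eq:throughput} — gives $\mathbb{P}(Y_t=y)=\pi_{n,m}(y)\,r(y)/\lambda$, the stationary distribution size-biased by the total service rate. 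Therefore $\bar{\tau}_t=\mathbb{E}[1/r(Y_t)]=\sum_y \frac{\pi_{n,m}(y)r(y)}{\lambda}\cdot\frac1{r(y)}=\frac1\lambda$, the size-biasing cancelling the reciprocal exactly. (Equivalently: in steady state the model-update epochs form a stationary point process of intensity $\lambda$, so the mean inter-event interval is $1/\lambda$ by Palm calculus.) Summing over $t$ yields $\mathbb{E}[\tilde{\tau}_{T_\epsilon}]=T_\epsilon/\lambda$.

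I expect the size-biasing identity to be the only genuinely delicate point. The naive route through Jensen's inequality gives $\bar{\tau}_t\ge 1/\lambda$, the wrong direction, so one really must notice that the post-assignment state is the $r$-weighted version of $\pi_{n,m}$ and that this weighting is precisely what makes $\mathbb{E}[1/r(Y_t)]$ collapse to $1/\lambda$ (and, if one wants to be careful about round $t=0$, check that the assumed steady-state initialization $\xi$ is consistent with this weighted distribution as well). Everything else is bookkeeping: plugging in the learning-rate schedule, checking monotonicity in $T$, and verifying that all multiplicative constants and the replacement of $T^{\alpha}/(T+1)$ by $T^{\alpha-1}$ are absorbed into the $\mathcal{O}(\cdot)$.
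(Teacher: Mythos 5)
Your proposal is correct and follows essentially the same route as the paper: substitute $\eta = C/T^{\alpha}$ into the bound $8G$, solve each of the three resulting terms for $T$ and absorb the constants into the $\mathcal{O}(\cdot)$, then reduce the wall-clock claim to the identity $\bar{\tau}_t = 1/\lambda$. The only difference is in how that identity is justified: you derive the rate-biased law of the in-round state $Y_t = X_{t-1} + e_{A_t}$ directly from $X_{t-1} \sim \pi_{n,m-1}$ and the independence of $A_t$, whereas the paper obtains the same distribution by identifying $(Y_k)$ with the jump chain of the continuous-time network and invoking a standard theorem, before cancelling the rate in $\mathbb{E}[1/r(Y_t)]$ exactly as you do.
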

\begin{proof}
	See Section~\ref{proof:time_eps} of the supplementary material~\cite{alahyane2025optimizing}.
\end{proof}
Each term in $T_\epsilon$ coincides with a term in~$G$. The metric $\mathbb{E}[\tilde{\tau}_\epsilon] = T_{\epsilon}/\lambda$
further highlights the trade-off between minimizing per-round error and maximizing update frequency. Minimizing the number of rounds~$T_\epsilon$ to reach $\epsilon$-accuracy may slow the system by favoring slower clients to reduce staleness, while maximizing throughput~$\lambda$ increases update frequency but can worsen staleness. Like the proxy~$H$, the expected time to reach $\epsilon$-accuracy captures this fundamental trade-off.

\subsection{Discussion} \label{sec:optimize-time-discussion}
Part of the discussion in~\Cref{sec:optimize-updates-discussion}
can be adapted to~$H$ with minor modification.
One fundamental difference between~$G$ and~$H$ is that $H$ is generally \emph{not} a non-decreasing function of the number~$m$ of tasks. Figure~\ref{fig:optimal_m} shows the bound $H(p^{\text{uniform}})$ as a function of~$m$ under different step sizes~$\eta$, revealing the existence of an optimal number~$m^* > 1$ of tasks that minimizes~$H$. Our intuition is that when $m < m^*$, throughput is insufficient and clients are underutilized; conversely, when $m > m^*$, throughput increases but so does staleness, which ultimately hinders convergence, as already observed with~$G$.
As a side remark, observe that the optimal number~$m^*$ of tasks decreases (albeit slowly) with~$\eta$. Our intuition is that, with a higher~$\eta$, the impact of stale gradients becomes more significant, as each gradient carries more weight in the model-parameter updates.
\begin{figure}[htbp]
	\begin{center}
	\includegraphics[width=0.43\textwidth]{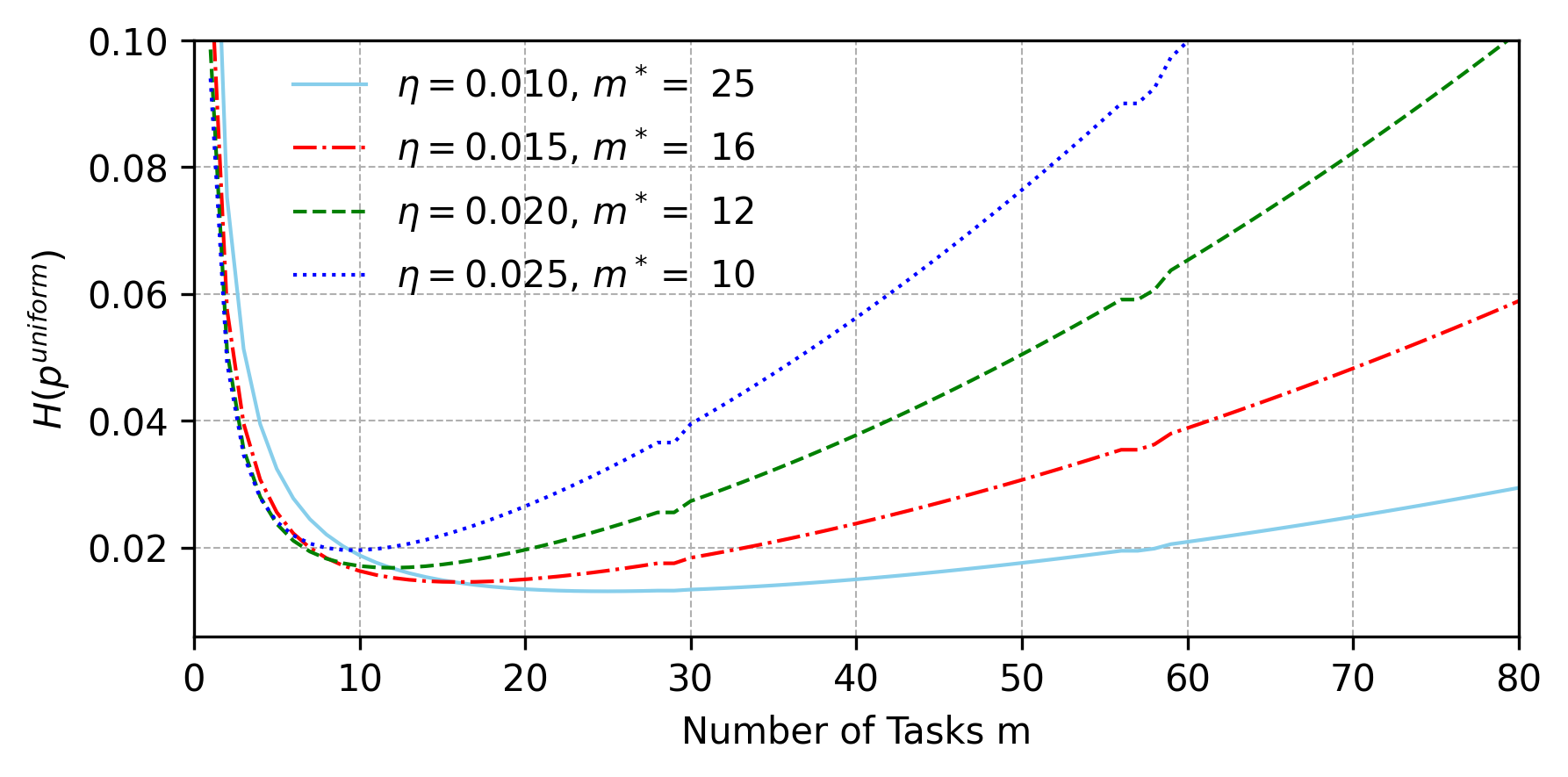} 
	\caption{Bound~$H(p^{\text{uniform}})$ as a function of the number~$m$ of tasks for different values of the step size~$\eta$. The system consists of 50 clients, with speeds given by $\mu_i=\exp(i/100)$ for each $i \in \{1,\ldots,n\}$.}
	\label{fig:optimal_m}
    \end{center}
\end{figure}
\subsection{Numerical results} \label{num:optimize-time}

In this section, we follow the same experimental procedure as in \Cref{num:optimize-updates}, but now we optimize the performance metric~$H$ introduced in \Cref{sec:optimize-time-delay}, which accounts for wall-clock time. The routing vector~$p^*_H$ is obtained by minimizing~$H$ using the same Adam-based optimization setup described previously. We evaluate the performance of \texttt{Generalized AsyncSGD} under this routing strategy on the KMNIST dataset~\cite{clanuwat2018deep}, using the same simulation and evaluation framework as in the first set of experiments. 

We analyze a network of $n = 30$ clients with concurrency level $m = 30$. Clients are organized into three clusters of 10. The slowest cluster has an average service time of 100 time units, the medium cluster 10 time units, and the fastest 1 time unit. This low-concurrency, high-speed-heterogeneity scenario is particularly challenging for optimizing throughput. We set the parameters as follows: $L = 1$, $\sigma = 3$, $M = 10$, $A/T = 15$, and $\eta = 0.01$. More details appear in Sections~\ref{app:H} and~\ref{exp:dl} of the supplementary material~\cite{alahyane2025optimizing}.

\paragraph{Optimizing~$H$}

The $H$-optimized routing probabilities are (per client) $p^*_\text{H, \text{slow}}=0.0068$, $p^*_\text{H, \text{medium}}=0.0449$, and $p^*_\text{H, \text{fast}}=0.0487$. In contrast to \Cref{num:optimize-updates}, faster clients receive a larger fraction of tasks, but $p^*_{H, \text{medium}}$ and $p^*_{H, \text{fast}}$ remain of the same order, so that $p^*_H$ seems to achieve a trade-off between minimizing staleness (like $p^*_G$) and maximizing throughput (like $p^{\propto \mu}$, a common heuristic to maximize throughput). Concretely, within the  wall-clock time frame of 3{,}000 units plotted in \Cref{fig:sim_time_results}, $p^{\propto \mu}$ completes 17{,}000
rounds, which sets it apart from $p^*_H$ (3{,}200 rounds), $p^{\text{uniform}}$ (690), and $p^*_G$ (145).

\begin{figure}[tbp]
	\begin{center}
	\includegraphics[width=0.48\textwidth]{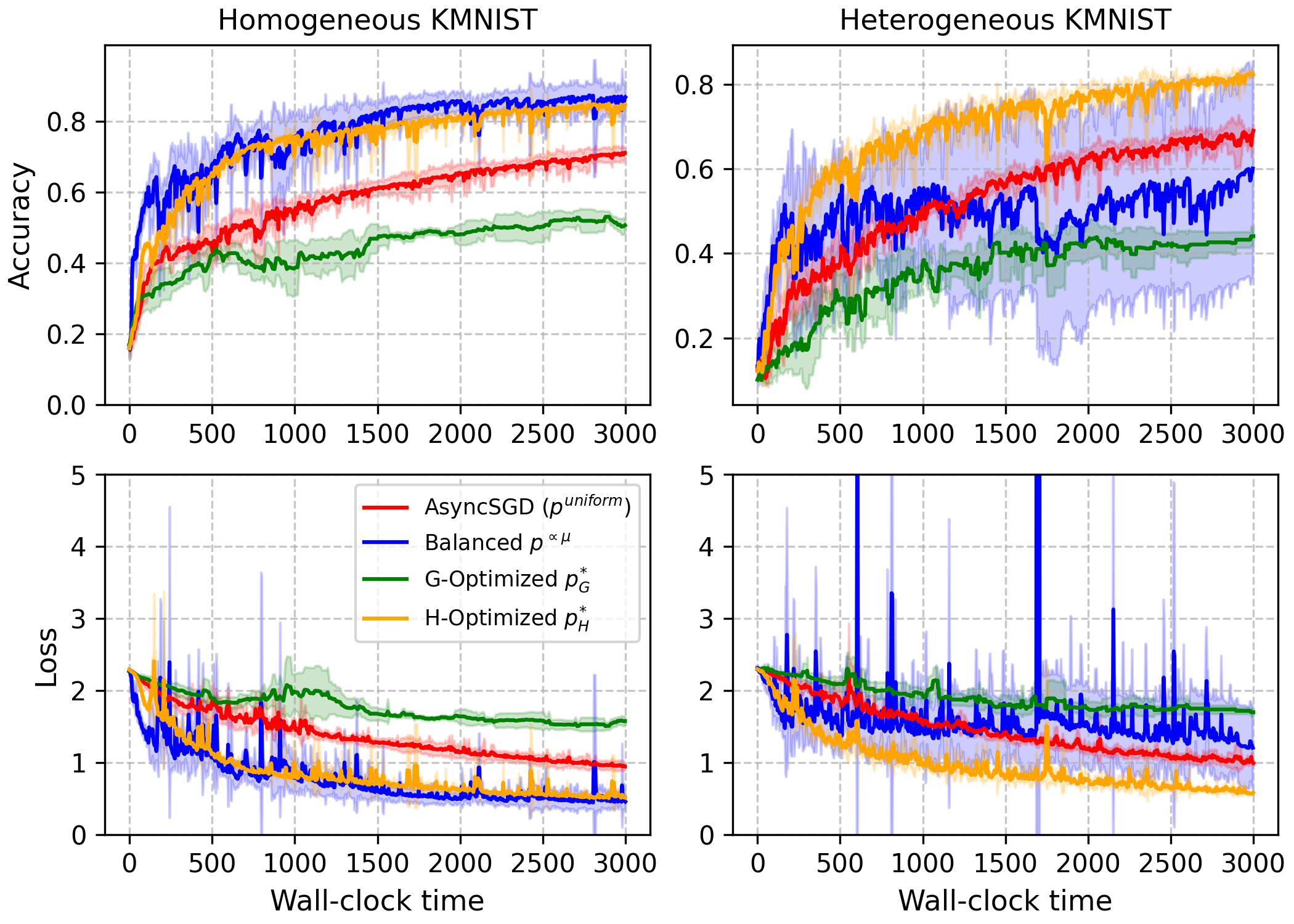} 
    \caption{Performance on the test set with respect to wall-clock time at the \gls{CS} in the scenario of \Cref{num:optimize-time}, with $n = 30$ clients and $m = 30$ tasks on homogeneous and heterogeneous KMNIST datasets. Simulations ran for 3{,}000 wall-clock time units and were repeated 10 times. Solid lines show means; shaded areas indicate standard deviations. \vspace{.3cm}}
	\label{fig:sim_time_results}
    \end{center}
\end{figure}

\paragraph{Performance on datasets}

Figure~\ref{fig:sim_time_results} compares the accuracy and loss of \texttt{Generalized AsyncSGD} under these four routing strategies.
Focusing first on the average performance,
we observe that in the homogeneous scenario,
$p^{\propto \mu}$ performs slightly better than $p^*_H$,
which in turn outperforms $p^*_G$ and $p^{\text{uniform}}$,
while in the heterogeneous scenario
$p^*_H$ outperforms all strategies.
Furthermore, the balanced strategy $p^{\propto \mu}$
exhibits sharp spikes in the loss trajectory in all scenarios,
and a particularly large standard deviation in the heterogeneous scenario.
This contrasts with the low and stable standard deviation
exhibited by both $p^*_G$ and $p^*_H$.
All in all, these numerical results confirm
that when wall-clock time is a performance criterion,
the $H$-optimized strategy provides a suitable trade-off between
minimizing staleness (overemphasized by $p^*_G$)
and maximizing throughput (overemphasized by $p^{\propto \mu}$), even though the bound $H$ serves only as a proxy for actual performance.
Section~\ref{app:num-additional-H} of the supplementary material~\cite{alahyane2025optimizing} includes additional plots for alternative image classification tasks, heterogeneous data distributions, and computation time distributions.

\section{CONCLUSION}


We provide novel insights into the impact of queueing dynamics on asynchronous \gls{FL}, enabling both performance optimization and the identification of fundamental limitations in existing performance objectives. Our experiments on real datasets show that queueing effects can significantly affect performance, and that our proposed optimizations can improve accuracy by 10\% to 30\%. These insights are relevant not only to \texttt{Generalized AsyncSGD}, our focus here, but also to algorithms such as \texttt{FedBuff}, which introduce additional control dimensions and could similarly benefit from our stochastic modeling framework. For example, we believe that adding a buffering mechanism at the \gls{CS} to aggregate gradients every $K$ service completion could reduce the average relative delay by a factor of $\frac{1}{K}$ under similar network dynamics. Modeling client unavailability is also a promising direction for future work.




\newpage

\begin{ack}
This research was supported in part by ANR EPLER, Projet E2CC, and the ``Data Science \& Processus Industriels'' chair funded by École Polytechnique, the Mohammed VI Polytechnic University, and Fondation de l’X. This research was also facilitated by the support of LabEx CIMI via the SOLACE project-team and the “Stochastic control and learning for complex networks” thematic semester.
\end{ack}

\bibliography{paper}

\onecolumn
\newpage
\twocolumn


\newpage
\appendix
\onecolumn

\begin{center}
\textbf{\\[-0.4cm] {\Large Optimizing Asynchronous Federated Learning: A Delicate Trade-Off Between} \\[0.2cm] {\Large Model-Parameter Staleness and Update Frequency} \\[0.3cm] {\large Supplementary Materials}}
\par\noindent\rule{\textwidth}{0.4pt}
\end{center}


\section{Assumptions} \label{assumptions}
Our analysis is grounded in the assumptions presented in \Cref{sec:fl}, which align with those established in \cite{leconte2024queueing}. These assumptions are detailed as follows:
\begin{enumerate}[label={\bfseries A\arabic*}]
    \item \label{A1} \textbf{Lower Boundedness:} The objective function $f$ is bounded from below by some real number $f^*$, meaning $f(w) \geq f^*$ for all $w \in \mathbb{R}^d$.
    
    \item \label{A2} \textbf{Gradient Smoothness:} Each client's local function $f_i$ has an $L$-Lipschitz continuous gradient, where $L > 0$. Mathematically, for any vectors $w, \mu \in \mathbb{R}^d$:
    \[
    \|\nabla f_i(w) - \nabla f_i(\mu)\| \leq L \|w - \mu\|.
    \]
    
    \item \label{A3} \textbf{Stochastic Gradient Properties:} For each client $i$, the stochastic gradient $g_i(w)$ is an unbiased estimator of the gradient $\nabla f_i(w)$ with bounded variance $\sigma^2 > 0$. That is, for all $w \in \mathbb{R}^d$:
    \[
    \mathbb{E}[g_i(w)-\nabla f_i(w)] = 0 \text{ and }
    \mathbb{E}[\|g_i(w) - \nabla f_i(w)\|^2] \leq \sigma^2.
    \]
    
    \item \label{A4} \textbf{Bounded Client Heterogeneity:} There exist constant $M > 0$, such that for all $w \in \mathbb{R}^d$:
    \[
    \|\nabla f(w) - \nabla f_i(w)\|^2 \leq M^2.
    \]

    \item \label{A5} \textbf{Stationary Regime:} The dynamics of the closed Jackson network are assumed to be in their stationary regime. Specifically, the random $ n $-dimensional vector $ X_t $, where $t \in \bN$, follows its stationary distribution $ \pi_{n,m-1}$.

\end{enumerate}

\section[Proof of Proposition~\ref{prop:jackson}]{Proof of \Cref{prop:jackson}} \label{app:jackson}

The result follows by observing that
$(X_t, t \in \bN)$ tracks the state at departure times
of a Jackson network~\cite{jackson1957networks,serfozo:1999}
with $n$ clients (usually called \textit{servers})
and $m$ tasks (often called \textit{customers} or \textit{jobs}),
with service rate vector~$\mu$
and relative arrival rate vector~$p$.
In particular, \eqref{eq:pi} follows from
Chapter~1 (Definition~1.8 and Theorem~1.12)
and Section~4.8 (Definition~4.34 and Example~4.38)
in \cite{serfozo:1999}.

\section[Revised Expression for ηₘₐₓ]{Revised Expression for $\eta_{\text{max}}$} \label{issue:eta}
In \cite{leconte2024queueing}, there is a technical issue in the proof establishing the upper bound~$G$. The authors assume that for all $k \in \{0,\ldots,T\}$, the relative delay $D_{i,k}$ is independent of $\nabla f(w_k)$. However, this assumption does not hold in the case of closed Jackson network dynamics and leads to an incorrect expression for $\eta_\text{max}$. 

To address this issue, we propose a stochastic upper bound on the relative delay $D_{i,k}$ using another random variable that is independent of $\nabla f(w_k)$, and we derive a corrected expression for $\eta_\text{max}$. Specifically, we consider the worst-case sojourn time scenario for the task sent to client~$i$ at round~$k$, which occurs when this task finds $m-1$ tasks already queued at client~$i$. 

Due to the memoryless property of exponential service times, the sojourn time of this task at client~$i$ is then distributed as an Erlang random variable $Er$ with parameters~$m$ and~$\mu_i$. The quantity $D_{i,k}$, representing the number of service completions during the sojourn of this task, can be stochastically bounded by the number of events generated by an independent Poisson process~$N$ with intensity~$\sum_{j=1}^n \mu_j$ over a time interval distributed as an Erlang with parameters~$m$ and~$\mu_i$. 

Therefore, for all $k \in \{0,\ldots,T\}$:
\begin{align*}
    \mathbb{E}[D_{i,k} \|\nabla f(w_k)\|^2]
    &\leq \mathbb{E}[N(Er) \|\nabla f(w_k)\|^2] \\
    &= \mathbb{E}[N(Er)] \cdot \mathbb{E}[\|\nabla f(w_k)\|^2] \\
    &= \mathbb{E}[\|\nabla f(w_k)\|^2] \cdot \frac{m}{\mu_i} \sum_{j=1}^n \mu_j,
\end{align*}
where $N(Er)$ denotes the number of events in a Poisson process with rate $\sum_{j=1}^n \mu_j$ over a time period following an independent Erlang distribution with parameters $(m, \mu_i)$.

Using this upper bound and following the structure of the proof in~\cite{leconte2024queueing}, we derive the following corrected expression for $\eta_{\text{max}}$:
\begin{align*}
    \eta_{\text{max}} = \frac{1}{4L} \min \left\{ 
    \left( \frac{m^2}{n^2} \sum_{j=1}^n \mu_j \sum_{i=1}^n \frac{1}{\mu_i p_i^2} \right)^{-1/2},
    \frac{2}{\sum_{i=1}^n \frac{1}{n^2 p_i}} 
    \right\}.
\end{align*}

\section[Proof of Theorem~\ref{theo:little}]{Proof of \Cref{theo:little}} \label{app:little}

Consider the asynchronous federated learning framework of \Cref{sec:fl}.
After proving preliminary results in
\Cref{lem:Rit} and \Cref{coro:Dit-bijection},
we prove \eqref{eq:D} in \Cref{app:D}
and \eqref{eq:gradD} in \Cref{app:gradD}.

\begin{lemma} \label{lem:Rit}
	For each $i \in \{1, 2, \ldots, n\}$, for each $t \in \bN_{> 0}$ we have
	\begin{align*}
		R_{i, t+1}
		&= \begin{cases}
			\max(0, R_{i, t} - 1)
			&\text{if $A_{t+1} \neq i$,} \\
			\min\{r \ge R_{i, t}: C_{t + 1 + r} = i\}
			&\text{if $A_{t+1} = i$.}
		\end{cases}
	\end{align*}
\end{lemma}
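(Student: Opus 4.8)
The plan is to rephrase \eqref{eq:Rit} in terms of a moving ``target count'' and then split on whether the round-$(t{+}1)$ assignment goes to client~$i$. For $t \in \bN_{> 0}$ write $K_t = X_{i, t-1} + \one{A_t = i}$, the number of tasks held by client~$i$ during round~$t$ (just after the assignment $A_t$, just before the completion $C_t$); then \eqref{eq:Rit} reads $R_{i, t} = \min\{r \in \bN : \sum_{s = t}^{t+r} \one{C_s = i} = K_t\}$, and \eqref{eq:X-rec} gives $K_{t+1} = K_t - \one{C_t = i} + \one{A_{t+1} = i}$. I will use two elementary observations throughout: a completion at client~$i$ requires a task there, so $\one{C_t = i} \le K_t$ and in particular $K_{t+1} \ge 1$ whenever $A_{t+1} = i$; and, reading the minimisation at $r = 0$, one has $R_{i, t} = 0 \iff K_t = \one{C_t = i}$, whereas if $R_{i, t} \ge 1$ then minimality forces $\sum_{s=t}^{t+R_{i,t}-1}\one{C_s=i} = K_t - 1$ and hence $C_{t + R_{i, t}} = i$.

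The crux is the identity
\begin{align*}
	\sum_{s = t+1}^{t + R_{i, t}} \one{C_s = i} = K_t - \one{C_t = i} = K_{t+1} - \one{A_{t+1} = i},
\end{align*}
which I would prove by separating $R_{i, t} = 0$ (both sides vanish, the left being an empty sum, using the characterisation above) from $R_{i, t} \ge 1$ (subtract the round-$t$ term from $\sum_{s = t}^{t + R_{i, t}} \one{C_s = i} = K_t$). Equivalently, with $r = R_{i, t} - 1$, the partial sum $\sum_{s = t+1}^{t+1+r} \one{C_s = i}$ already reaches $K_{t+1} - \one{A_{t+1} = i}$.

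If $A_{t+1} \neq i$, the target for $R_{i, t+1}$ is $K_{t+1} = K_t - \one{C_t = i}$, which by the identity is attained at $r = R_{i, t} - 1$; for $0 \le r < R_{i, t} - 1$ the partial sum is non-decreasing in $r$ and bounded by $K_{t+1} - 1$ (using $C_{t+R_{i,t}} = i$ from the minimality observation), hence strictly below the target. When $R_{i, t} = 0$, the characterisation gives $K_{t+1} = 0$, which forces $C_{t+1} \neq i$ and $R_{i, t+1} = 0$. Combining, $R_{i, t+1} = \max(0, R_{i, t} - 1)$.

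If $A_{t+1} = i$, the target is $K_{t+1} = K_t - \one{C_t = i} + 1 \ge 1$. For $r \ge R_{i, t}$, splitting the window at round $t + R_{i, t}$ and applying the identity yields $\sum_{s = t+1}^{t+1+r} \one{C_s = i} = (K_{t+1} - 1) + \sum_{s = t+1+R_{i, t}}^{t+1+r} \one{C_s = i}$, which equals $K_{t+1}$ precisely when the block $\{t+1+R_{i, t}, \ldots, t+1+r\}$ contains exactly one completion of client~$i$; the smallest such $r$ is $\min\{r \ge R_{i, t} : C_{t+1+r} = i\}$, a minimum over a non-empty set because $R_{i, t+1}$ is finite (client~$i$ still holds $K_{t+1} \ge 1$ tasks and is revisited). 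For $r < R_{i, t}$, monotonicity and the identity give $\sum_{s = t+1}^{t+1+r} \one{C_s = i} \le K_{t+1} - 1 < K_{t+1}$, so no such $r$ works, and $R_{i, t+1} = \min\{r \ge R_{i, t} : C_{t+1+r} = i\}$. I expect the only delicate point to be the degenerate case $R_{i, t} = 0$, where the defining sum is empty: the characterisation $R_{i, t} = 0 \iff K_t = \one{C_t = i}$, together with the fact that $A_{t+1} = i$ forces $K_{t+1} \ge 1$, resolves it, and everything else is bookkeeping of partial sums.
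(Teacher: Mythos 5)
Your proof is correct and follows essentially the same route as the paper's: both isolate the contribution of rounds $t$ through $t + R_{i,t}$ (your key identity is the paper's step of injecting $\sum_{s=t}^{t+R_{i,t}} \one{C_s = i} = X_{i,t-1} + \one{A_t = i}$), reduce the problem to counting completions of client~$i$ in the tail window $\{t + R_{i,t} + 1, \ldots, t+1+r\}$, and conclude by the same case disjunction on $A_{t+1}$. The only cosmetic difference is that the paper first replaces the equality in \eqref{eq:Rit} by an inequality to streamline the monotonicity argument, whereas you keep the equality and bound the partial sums explicitly; both handle the degenerate case $R_{i,t} = 0$ via the observation that an empty client cannot be the completing one.
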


\begin{proof}
	Let $i \in \{1, 2, \ldots, n\}$ and $t \in \bN_{> 0}$.
	
	The following preliminary results stem
	from the definition~\eqref{eq:Rit} of $R_{i, t}$
	and the observation that
	the sequence $r \in \bN \mapsto \sum_{s = t}^{t+r} \one{C_s = i}$
	is nondecreasing with increments 0 or 1
	and takes value $\one{C_t = i} \in \{0, 1\}$ at $r = 0$
	(equal to $0$ if $X_{i, t-1} + \one{A_t = i} = 0$ by definition of $C_t$):
	\begin{itemize}
		\item We can replace the equal sign with a larger-than-or-equal sign
		in the definition of $R_{i, t}$:
		\begin{align} \label{eq:Rit-ge}
			R_{i, t} = \min\left\{ r \in \bN: \sum_{s = t}^{t + r} \one{C_s = i} \ge X_{i, t-1} + \one{A_t = i} \right\}.
		\end{align}
		\item We have
		\begin{align} \label{eq:Rit-eq}
			\sum_{s = t}^{t + R_{i, t}} \one{C_s = i} = X_{i, t-1} + \one{A_t = i}.
		\end{align}
		\item Lastly, we can verify that $A_t = i$ implies $C_{t + R_{i, t}} = i$.
	\end{itemize}
	
	Now focusing on $R_{i, t+1}$, we have successively:
	\begin{align*}
		R_{i, t+1}
		&\overset{\text{(a)}}{=} \min\left\{
		r \in \bN:
		\sum_{s = t+1}^{t+1+r} \one{C_s = i}
		\ge X_{i, t} + \one{A_{t+1} = i}
		\right\}, \\
		&\overset{\text{(b)}}{=} \min\left\{
		r \in \bN:
		\sum_{s = t}^{t+1+r} \one{C_s = i}
		\ge X_{i, t-1} + \one{A_t = i} + \one{A_{t+1} = i}
		\right\}, \\
		&\overset{\text{(c)}}{=} \min\left\{
		r \ge \max(0, R_{i, t} - 1):
		\sum_{s = t}^{t+1+r} \one{C_s = i}
		\ge X_{i, t-1} + \one{A_t = i} + \one{A_{t+1} = i}
		\right\}, \\
		&\overset{\text{(d)}}{=} \min\left\{
		r \ge \max(0, R_{i, t} - 1):
		\sum_{s = t + R_{i, t} + 1}^{t + 1 + r} \one{C_s = i}
		\ge \one{A_{t+1} = i},
		\right\}
	\end{align*}
	where (a) and (c) follow from~\eqref{eq:Rit-ge},
	(b) from~\eqref{eq:X-rec},
	and (d) by injecting~\eqref{eq:Rit-eq}.
	Consistently with the result announced in the lemma,
	we conclude by making a case disjunction:
	\begin{itemize}
		\item If $A_{t+1} \neq i$: The right-hand side
		of the inequality in (d) is $0$,
		hence the inequality is already satisfied
		by choosing the smallest authorized value for~$R_{i, t}$.
		\item If $A_{t+1} = i$: The right-hand side
		of the inequality in (d) is~$1$.
		Since the sum $\sum_{s = t + R_{i, t} + 1}^{t + 1 + r} \one{C_s = i}$
		is~$0$ (empty) when $r = R_{i, t} - 1$,
		remains $0$ as long as $C_{t + 1 + r} \neq i$,
		and increases to~$1$ whenever $C_{t+r+1} = i$,
		the result follows.\qedhere
	\end{itemize}
\end{proof}

\begin{corollary} \label{coro:Dit-bijection}
	For each $i \in \{1, 2, \ldots, n\}$,
	the function $t \in \bN_{> 0} \mapsto t + D_{i, t}$
	defines a bijective increasing mapping
	from the set $\{t \in \bN_{> 0}: A_t = i\}$
	into the set $\{t \in \bN_{> 0}: C_t = i\}$.
\end{corollary}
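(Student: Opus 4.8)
The plan is to split the statement into four ingredients: (i) the map $\phi_i \colon t \mapsto t + D_{i,t}$ is well defined on $\{t \in \bN_{>0} : A_t = i\}$ and takes values in $\{t \in \bN_{>0} : C_t = i\}$; (ii) $\phi_i$ is strictly increasing; (iii) hence injective; and (iv) surjective onto the target. Note first that by \eqref{eq:Dit} we have $\phi_i(t) = t + R_{i,t}$ whenever $A_t = i$; write $T_u = u + R_{i,u}$ for $u \in \bN_{>0}$. Ingredient (i) is then immediate: $\phi_i(t) \ge t \ge 1$, and the preliminary observations in the proof of \Cref{lem:Rit} already record that $A_t = i$ implies $C_{t + R_{i,t}} = i$.

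The technical core is (ii). I would show that $(T_u)_{u \in \bN_{>0}}$ is non-decreasing, and strictly increasing across the arrival rounds of client~$i$, by a short case analysis on the recursion of \Cref{lem:Rit}. If $A_{u+1} \neq i$, then $R_{i,u+1} = \max(0, R_{i,u} - 1)$, so $T_{u+1} = T_u$ when $R_{i,u} \ge 1$ and $T_{u+1} = T_u + 1$ when $R_{i,u} = 0$; if $A_{u+1} = i$, then $R_{i,u+1} \ge R_{i,u}$, hence $T_{u+1} \ge T_u + 1$. Consequently, for $s, t$ in the domain with $s < t$ (so $t \ge 2$), chaining these inequalities gives $T_s \le T_{s+1} \le \cdots \le T_{t-1} < T_t$, that is $\phi_i(s) < \phi_i(t)$. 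Strict monotonicity yields injectivity (iii) for free.

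For surjectivity (iv), I would use the FIFO discipline to exhibit a preimage of any $t_0$ with $C_{t_0} = i$: the task completing at client~$i$ at round $t_0$ is, by FIFO, the oldest one then held at client~$i$; it was placed at client~$i$ at some earlier round $s_0$ with $A_{s_0} = i$, where it joined the queue in position $X_{i, s_0 - 1} + 1$, so by the very definition~\eqref{eq:Rit} it leaves after exactly $R_{i, s_0}$ further rounds, i.e.\ at round $s_0 + R_{i, s_0} = \phi_i(s_0)$; since it completes at $t_0$, we conclude $\phi_i(s_0) = t_0$. The only gap concerns the $m-1$ tasks already present in the network at round~$0$: their completion rounds at client~$i$ have no associated arrival round in $\bN_{>0}$, so to obtain a literal bijection with $\{t \in \bN_{>0}: C_t = i\}$ one should run the (positive recurrent, stationary) chain on the two-sided index set $\bZ$, in which case every task held at a client was assigned at some finite past round.

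I expect (iv) to be the main obstacle, essentially for bookkeeping reasons: a fully rigorous argument must track task identities through the reassignment mechanism — each completion $C_t$ triggers the reassignment $A_{t+1}$ of that same task — and must be precise about the role of the initial state. By contrast, (i)--(iii) reduce to a short case analysis once \Cref{lem:Rit} is available. I would also flag that, for the downstream use in proving \eqref{eq:D}, the exact treatment of the $O(m)$ initial/boundary completions is immaterial, since those quantities are normalized by $T$ in the $T \to \infty$ limit; so in the write-up it may be cleanest to state the strictly increasing injection on $\bN_{>0}$ and invoke the $\bZ$-indexed stationary process only where a genuine bijection is needed.
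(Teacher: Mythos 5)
Your proposal is correct, and parts (i)--(iii) follow the paper's own route: both arguments run the case analysis on the recursion of \Cref{lem:Rit} to show that $t \mapsto t + R_{i,t}$ is non-decreasing overall and strictly increases across rounds with $A_t = i$, which yields injectivity. Where you genuinely diverge is surjectivity. The paper stays entirely within the recursion: it checks that whenever $A_{t+1} \neq i$ the sequence $t + R_{i,t}$ either stalls or moves to a round that is \emph{not} a completion round of client~$i$ (because $R_{i,t}=0$ forces $X_{i,t}=0$, hence $C_{t+1}\neq i$), so every completion round entering the range of the sequence must be attained at an arrival round. You instead track task identities through the FIFO discipline: a completion at round $t_0$ belongs to the oldest task in the queue, which arrived at some $s_0$ with $A_{s_0}=i$ in position $X_{i,s_0-1}+1$ and therefore departs exactly at $s_0 + R_{i,s_0}$ by definition~\eqref{eq:Rit}. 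Both arguments are sound; the paper's avoids any explicit bookkeeping of task identities at the price of being more implicit about why no completion round is skipped, while yours is more concrete but needs the extra care you acknowledge. Your remark about the tasks present at round~$0$ is well taken and in fact sharper than the paper's own statement: without $X_{i,0}=0$ the map is only a bijection onto the completion rounds of tasks assigned after round~$0$, so the first $X_{i,0}$ completions at client~$i$ have no preimage. The paper handles this implicitly by invoking the corollary in \Cref{lem:squeeze} only under the hypothesis $X_{i,0}=0$ and by noting at the end of the proof of~\eqref{eq:D} that the stationary expectations do not depend on the initial distribution --- which matches your observation that the $O(m)$ boundary completions are immaterial after normalizing by~$T$.
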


\begin{proof}
	Let $i \in \{1, 2, \ldots, n\}$.
	By definition of $D_{i, t}$,
	we have $D_{i, t} = R_{i, t}$ for each $t \in \bN_{> 0}$ so that $A_t = i$,
	hence it suffices to prove the result for $R_{i, t}$ instead of $D_{i, t}$.
	
	For each $t \in \bN_{> 0}$,
	$t + R_{i, t}$ is the round at the end of which
	client~$i$ finishes processing
	the last task that has arrived at this client
	by the beginning of round~$t$.
	\Cref{lem:Rit} implies that,
	for each $t \in \bN_{> 0}$, we have
	\begin{align*}
		t + 1 + R_{i, t+1}
		&= \begin{cases}
			t + \max(1, R_{i, t})
			&\text{if $A_{t+1} \neq i$}, \\
			\min\{s > t + R_{i, t}: C_s = i\}
			&\text{if $A_{t+1} = i$.}
		\end{cases}
	\end{align*}
	It follows directly that the function $t \in \bN_{> 0} \mapsto t + R_{i, t}$
	is non-decreasing
	and defines an injection
	from $\{t \in \bN_{> 0}: A_t = i\}$
	onto $\{t \in \bN_{> 0}: C_t = i\}$.
	Furthermore,
	to prove this injection is also a surjection,
	it suffices to verify that
	$A_{t+1} \neq i$ implies
	either $t + 1 + R_{i, t+1} = t + R_{i, t}$
	or $C_{t + 1 + R_{i, t+1}} \neq i$.
	If $A_{t+1} \neq i$,
	the only way that $t + 1 + R_{i, t+1} > t + R_{i, t}$
	is when $R_{i, t} = 0$,
	so that the maximum is attained at~$1$
	and $t + 1 + R_{i, t+1} = t + 1$.
	Using~\eqref{eq:Rit}, we can verify that
	$R_{i, t} = 0$ implies $X_{i, t} = 0$
	which, combined with $A_{t+1} \neq i$, in turn implies
	$C_{t+1} (= C_{t + 1 + R_{i, t+1}}) \neq i$ by definition of $C_{t+1}$.
\end{proof}

\subsection[Proof of Equation~\ref{eq:D}]{Proof of Equation~\eqref{eq:D}} \label{app:D}

Let $i \in \{1, 2, \ldots n\}$.
Our goal in this section is to prove that
$\bE[D_i] = \bE[X_i]$.
In a nutshell, we will prove that
\begin{align*}
	\bE[D_i]
	&\overset{(a)}{=} \lim_{T \to +\infty} \frac1T \sum_{t = 1}^T D_{i, t}
	\overset{(b)}{=} \lim_{T \to +\infty} \frac1T \sum_{t = 1}^T X_{i, t}
	\overset{(c)}{=} \bE[X_i],
\end{align*}
where all equal signs hold almost surely.
Equation~(a) will be shown in~\Cref{lem:limit}
using an ergodicity argument.
Equation~(b) will be proved using a squeeze argument
formulated in \Cref{lem:squeeze,lem:limRit}.
Lastly, Equation~(c) follows from the classical ergodic theorem
for irreducible positive-recurrent Markov chains.
These arguments are combined at the end of the section to prove~\eqref{eq:D}.

\begin{lemma} \label{lem:limit}
	For each $i \in \{1, 2, \ldots, n\}$, we have
	\begin{align*}
		\lim_{T \to +\infty} \frac1T \sum_{t = 1}^T D_{i, t} = \bE[D_i]
		\quad \text{almost surely}.
	\end{align*}
\end{lemma}

\begin{proof}[Proof of \Cref{lem:limit}]
	Equations~\eqref{eq:Dit} and~\eqref{eq:Rit}
	show that, for each $i \in \{1, 2, \ldots, n\}$,
	there exists a deterministic function
	$g_i: (\bN \times \{1, 2, \ldots, n\} \times \{1, 2, \ldots, n\})^\bN \to \bN$
	such that we can write
	$D_{i, t} = g_i((X_{t+s}, A_{t+s}, C_{t+s}), s \in \bN)$
	for each $t \in \bN$.
	Since the sequence $((X_t, A_t, C_t), t \in \bN)$ is ergodic,
	the conclusion follows from
	\citep[Remark~16.1.11]{bremaud2020}.
\end{proof}

\begin{lemma} \label{lem:squeeze}
	Let $i \in \{1, 2, \ldots, n\}$.
	If $X_{i, 0} = 0$, then for each $T \in \{1, 2, 3, \ldots\}$, we have
	\begin{align} \label{eq:bounds}
		\sum_{t = 1}^T X_{i, t}
		&\le \sum_{t = 1}^T D_{i, t}
		\le \sum_{t = 1}^{T + R_{i, T}} X_{i, t}.
	\end{align}
\end{lemma}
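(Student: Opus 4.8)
The plan is to track the individual tasks held by client~$i$ and reduce both inequalities to an elementary counting identity in the spirit of Little's law. Since $X_{i,0} = 0$, every task ever received by client~$i$ is assigned at some round $\ge 1$; by \Cref{coro:Dit-bijection}, if $a_1 < a_2 < \cdots$ denote the rounds at which client~$i$ is assigned a task (so $A_{a_k} = i$), then the rounds at which client~$i$ completes a task are exactly $c_1 < c_2 < \cdots$, where $c_k := a_k + D_{i,a_k} = a_k + R_{i,a_k}$. Reading off~\eqref{eq:Dit}, the left-hand quantity rewrites as
\begin{align*}
	\sum_{t=1}^T D_{i,t} = \sum_{t \le T :\, A_t = i} R_{i,t} = \sum_{k :\, a_k \le T} (c_k - a_k).
\end{align*}

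Next I would express $\sum_{t=1}^{T'} X_{i,t}$, for an arbitrary horizon $T' \ge 1$, in the same bookkeeping. Using $X_{i,0} = 0$, the task assigned at round~$a_k$ is present at client~$i$ (queued or in service) exactly at the ends of rounds $a_k, a_k+1, \ldots, c_k - 1$, so $X_{i,s} = \#\{k : a_k \le s < c_k\}$ and, swapping the order of summation,
\begin{align*}
	\sum_{t=1}^{T'} X_{i,t}
	= \sum_{k :\, a_k \le T'} \#\{s \in \{1, \ldots, T'\} : a_k \le s < c_k\}
	= \sum_{k :\, a_k \le T'} \min(T' + 1 - a_k,\; c_k - a_k),
\end{align*}
where the last equality (including the degenerate case $c_k = a_k$, i.e.\ $R_{i,a_k} = 0$) uses $a_k \ge 1$. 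Taking $T' = T$ and bounding each term by $\min(\cdot,\; c_k - a_k) \le c_k - a_k$ yields at once the left inequality $\sum_{t=1}^T X_{i,t} \le \sum_{k :\, a_k \le T}(c_k - a_k) = \sum_{t=1}^T D_{i,t}$.

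For the right inequality I would apply the same display with $T' = T + R_{i,T}$. Since $T' \ge T$ and all summands are non-negative, it suffices to show $\min(T' + 1 - a_k,\; c_k - a_k) = c_k - a_k$ for every $k$ with $a_k \le T$, i.e.\ $c_k \le T + R_{i,T} + 1$. This is precisely what the shift $R_{i,T}$ provides: as established in the proof of \Cref{coro:Dit-bijection} (via \Cref{lem:Rit}), the map $s \mapsto s + R_{i,s}$ is non-decreasing on $\bN_{> 0}$, so $a_k \le T$ already forces $c_k = a_k + R_{i,a_k} \le T + R_{i,T}$. Combining the three displays then gives $\sum_{t=1}^T D_{i,t} = \sum_{k :\, a_k \le T}(c_k - a_k) \le \sum_{t=1}^{T + R_{i,T}} X_{i,t}$, which would finish the proof.

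The argument is essentially bookkeeping, but two points deserve care. The real engine of the upper bound is the monotonicity of $s \mapsto s + R_{i,s}$: it is the reason the horizon on the right is $T + R_{i,T}$ rather than $T$, since $R_{i,T}$ is exactly the number of extra rounds needed for all tasks present at round~$T$ — hence all tasks assigned by round~$T$ — to clear. The second point, which I expect to be the only genuinely fiddly part, is to verify that the task-presence characterization and the identity for $\sum_t X_{i,t}$ hold with no boundary correction; this is where the hypothesis $X_{i,0} = 0$ is used, since it guarantees that every task present at client~$i$ has a well-defined assignment round $\ge 1$, and it also makes \Cref{coro:Dit-bijection} applicable verbatim. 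One should in particular double-check the degenerate rounds where $R_{i,a_k} = 0$ (a task completed in the very round it is assigned).
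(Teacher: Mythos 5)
Your proof is correct and follows essentially the same route as the paper's: both rest on the presence identity $X_{i,s}=\#\{k: a_k\le s< c_k\}$ (the paper's unfolding of the recursion via \Cref{coro:Dit-bijection}) and a Fubini-type swap over the set $\{(k,s): a_k\le s<c_k\}$, with the horizon $T+R_{i,T}$ justified by the monotonicity of $s\mapsto s+R_{i,s}$ — which is exactly the ``tasks leave in the order they arrive'' fact the paper invokes at the corresponding step. Your explicit task-indexed $\min$ formula is just a repackaging of the paper's capped double sum, so no new ideas or gaps either way.
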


\begin{proof}[Proof of \Cref{lem:squeeze}]
	Let $i \in \{1, 2, \ldots, n\}$
	and $T \in \bN_{> 0}$,
	and assume that $X_{i, 0} = 0$.
	If the set $\{t \in \{1, 2, \ldots, T\}: A_t = i\}$ is empty,
	then all three sums are zero, and the inequalities are trivially satisfied. Therefore, in the remainder, we focus on sample paths for which this set is nonempty.
	
	First observe that, for each $t \in \bN$, we have
	\begin{align} \label{eq:X-unfolded}
		X_{i, s} = \sum_{t = 1}^{+\infty} \one{A_t = i, t \le s < t + D_{i, t}}.
	\end{align}
	Indeed, we have successively:
	\begin{align*}
		X_{i, s}
		&\overset{\textrm{(a)}}{=}
		\sum_{t = 1}^s \one{A_t = i}
		- \sum_{t = 1}^s \one{C_t = i}
		\overset{\textrm{(b)}}{=}
		\sum_{t = 1}^s \one{A_t = i}
		- \sum_{t = 1}^s \one{A_t = i, t + D_{i, t} \le s},
	\end{align*}
	where (a) follows by unfolding~\eqref{eq:X-rec}
	and using our assumption that $X_{i, 0} = 0$,
	and (b) follows by observing that,
	by~\Cref{coro:Dit-bijection},
	the function $t \mapsto t + D_{i, t}$
	defines a bijective (increasing) mapping
	from the set $\{t \in \bN_{> 0}: A_t = i\}$
	onto the set $\{t \in \bN_{> 0}: C_t = i\}$.
	Equation~\eqref{eq:X-unfolded} then follows by rearranging the terms.
	
	Now, we can also use the following trick
	to rewrite $\sum_{t = 1}^T D_{i, t}$
	in a manner that is similar to~\eqref{eq:X-unfolded}:
	\begin{align}
		\nonumber
		\sum_{t = 1}^T D_{i, t}
		&= \sum_{t = 1}^T
		\one{A_t = i} D_{i, t}
		\nonumber
		= \sum_{t = 1}^T
		\one{A_t = i}
		\sum_{s = 1}^{+\infty}
		\one{t \le s < t + D_{i, t}}, \\
		\label{eq:sumD}
		&= \sum_{s = 1}^{+\infty}
		\sum_{t = 1}^T
		\one{A_t = i, t \le s < t + D_{i, t}}.
	\end{align}
	The lower bound in~\eqref{eq:bounds} follows by
	capping the outer sum at $s \in \{1, 2, \ldots, T\}$ in~\eqref{eq:sumD}
	and injecting \eqref{eq:X-unfolded}.
	The upper-bound in~\eqref{eq:bounds} follows in a similar spirit:
	\begin{align*}
		\sum_{t = 1}^T D_{i, t}
		&\overset{\text{(a)}}{=}
		\sum_{s = 1}^{T + R_{i, T}} \sum_{t = 1}^T
		\one{A_t = i, t \le s < t + D_{i, t}}
		\overset{\text{(b)}}{\le}
		\sum_{s = 1}^{T + R_{i, T}} X_{i, s},
	\end{align*}
	where (a) is equivalent to~\eqref{eq:sumD},
	after recalling that tasks leave clients
	in the same order as they arrive,
	and (b) follows by expanding the inner sum to $t \in \bN_{> 0}$.
\end{proof}

\begin{lemma} \label{lem:limRit}
	For each $i \in \{1, 2, \ldots, n\}$,
	we have almost surely that
	$R_{i, T} = o(T)$ as $T \to +\infty$.
\end{lemma}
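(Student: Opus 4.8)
The plan is to exploit that at most $m-1$ tasks are ever held at a single client, so that $R_{i,T}$ — the number of additional rounds after round~$T$ needed to clear every task present at client~$i$ — accounts for at most $m$ service completions at client~$i$. Writing $N_i(T) = \sum_{s=1}^{T}\one{C_s = i}$ for the number of completions at client~$i$ up to round~$T$, the definition~\eqref{eq:Rit} of $R_{i,T}$ gives, on every sample path where $R_{i,T}$ is finite, $N_i(T + R_{i,T}) - N_i(T-1) = X_{i,T-1} + \one{A_T = i} \le |X_{T-1}| + 1 = m$. Hence an unusually large $R_{i,T}$ would force a window of $R_{i,T}$ consecutive rounds to produce only $O(1)$ completions at client~$i$, which should be incompatible with client~$i$ completing a strictly positive long-run fraction of all rounds.

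Accordingly, the first step is to show $N_i(T)/T \to \rho_i$ almost surely for a constant $\rho_i > 0$. This is the ergodic theorem applied to the irreducible positive-recurrent chain of \Cref{prop:jackson} — formally, to the stationary ergodic sequence $((X_t, A_t, C_t))_t$, of which $\one{C_t = i}$ is a bounded functional — which yields $\rho_i = \bP_\pi[C_1 = i]$. Strict positivity is the only genuinely probabilistic input: conditioning on $\{A_1 = i\}$, an event of probability $p_i > 0$, client~$i$ is nonempty at the start of round~$1$ and then completes the next task with conditional probability $\mu_i$ divided by the total service rate of the currently nonempty clients, which is positive; therefore $\bP_\pi[C_1 = i] > 0$. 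In passing, this gives $N_i(T) \to \infty$ a.s., so $R_{i,T}$ is indeed a.s.\ finite, as used above.

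The second step is a pathwise squeeze argument on the almost-sure event $\Omega_i = \{N_i(T)/T \to \rho_i\}$. Suppose, for contradiction, that $R_{i,T}/T \not\to 0$ on a positive-probability subset of $\Omega_i$. On such a sample path, choose $\epsilon > 0$ and a subsequence $T_k \to \infty$ with $R_{i,T_k} \ge \epsilon T_k$; we may also assume $R_{i,T_k} \to \infty$, since otherwise a sub-subsequence would yield $R_{i,T_k}/T_k \to 0$. Plugging the limit $N_i(n)/n \to \rho_i$ into $n = T_k - 1$ and $n = T_k + R_{i,T_k}$, and using $T_k \le R_{i,T_k}/\epsilon$, one gets, for any fixed $\delta \in (0, \rho_i)$ and all large~$k$, that $N_i(T_k + R_{i,T_k}) - N_i(T_k - 1) \ge (\rho_i - \delta - 2\delta/\epsilon)\,R_{i,T_k} + O(1)$. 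Taking $\delta$ small enough that $\rho_i - \delta - 2\delta/\epsilon \ge \rho_i/2 > 0$, the right-hand side diverges to $+\infty$, contradicting the deterministic bound $\le m$ from the first paragraph. Thus $R_{i,T}/T \to 0$ almost surely, i.e.\ $R_{i,T} = o(T)$.

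The step I expect to demand the most care is the strict positivity $\rho_i > 0$: it is where the routing hypothesis $p_i > 0$ and the exponential-service rule jointly enter, and it must be argued under the stationary distribution $\pi_{n,m-1}$. Everything else is deterministic — the identity $N_i(T + R_{i,T}) - N_i(T-1) \le m$ follows directly from the definition of $R_{i,T}$ and $|X_{T-1}| = m-1$, and the final contradiction is an elementary real-analysis manipulation — with the only mild subtlety being that the subsequence $T_k$ depends on the sample path, which is why the argument is run pointwise on the full-measure set $\Omega_i$.
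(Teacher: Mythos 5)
Your proof is correct, but it takes a genuinely different route from the paper's. The paper dominates $R_{i,t}$ by the hitting time $E_{i,t}=\min\{r\in\bN: X_{i,t+r}=0\}$ of the states where client~$i$ is empty, applies the ergodic theorem to get $\tfrac1t\sum_{k=1}^t E_{i,k}\to\bE[E_i]$ almost surely, and then reads off $E_{i,t}/t\to 0$ as the difference of two consecutive Ces\`aro averages. You instead combine the deterministic cap $\sum_{s=T}^{T+R_{i,T}}\one{C_s=i}=X_{i,T-1}+\one{A_T=i}\le m$ (exact because the cumulative count in \eqref{eq:Rit} increases by at most one per round, so it cannot overshoot) with the positive long-run completion rate $\rho_i=\bP_\pi[C_1=i]\ge p_i\,\mu_i/\sum_j\mu_j>0$ of the bounded functional $\one{C_t=i}$, and derive a contradiction if $R_{i,T}\ge\epsilon T$ infinitely often; your $\epsilon$--$\delta$ bookkeeping (including the reduction to $R_{i,T_k}\to\infty$, which is in fact automatic from $R_{i,T_k}\ge\epsilon T_k$) is sound, and your positivity argument for $\rho_i$ correctly uses the routing probability $p_i>0$ together with the service rule $\bP[C_t=i\mid X_{t-1},A_t]\propto\mu_i$. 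The paper's argument is shorter but applies the ergodic theorem to the unbounded functional $E_{i,\cdot}$ and implicitly uses $\bE[E_i]<\infty$ (automatic here since $\cX_{n,m-1}$ is finite); yours is longer but leans only on a bounded functional and on the hard bound $m$ on queue lengths, so the single probabilistic input is $\rho_i>0$. Both are valid proofs of \Cref{lem:limRit}.
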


\begin{proof}[Proof of \Cref{lem:limRit}]
Let $i \in \{1, 2, \ldots, n\}$.  
For each $t \in \bN$, we have $0 \le R_{i, t} \le E_{i, t}$,  
where $E_{i, t} = \min\{r \in \bN: X_{i, t + r} = 0\}$  
denotes the number of rounds, starting from round~$t$, until client~$i$ first becomes empty.

Since the Markov chain $(X_t)_{t \in \bN}$ is irreducible and positive recurrent, the ergodic theorem for Markov chains \citep[Remark~16.1.11]{bremaud2020} implies that
\[
\lim_{t \to +\infty} \frac{1}{t} \sum_{k=1}^t E_{i, k} = \mathbb{E}[E_i],
\text{  almost surely.}
\]
Where $\mathbb{E}[E_i]$ is the expected hitting time, under the stationary distribution of $(X_t)_{t \in \bN}$, to the set of states where client~$i$ is empty.

This yields
\[
\lim_{t \to +\infty} \frac{E_{i, t}}{t}
= \lim_{t \to +\infty} \left( \frac{1}{t} \sum_{k=1}^t E_{i, k} - \frac{1}{t} \sum_{k=1}^{t-1} E_{i, k} \right)
= \mathbb{E}[E_i] - \mathbb{E}[E_i] = 0.
\]
Since $0 \le \frac{R_{i, t}}{t} \le \frac{E_{i, t}}{t}$, we conclude that $\lim_{t \to +\infty} \frac{R_{i, t}}{t} = 0$,  almost surely.
\end{proof}

\begin{proof}[Proof of Equation~\eqref{eq:D}]
	Let $i \in \{1, 2, \ldots, n\}$.
	Let us assume for now that
	the initial distribution of the Markov chain $(X_t, t \in \bN)$
	is such that $X_{i, 0} = 0$ (with probability~$1$).

	Let us first prove that the upper and lower bound in \Cref{lem:squeeze}
	converge almost surely to the same limit,
	and that this limit is~$\bE[X_i]$, that is:
	\begin{align} \label{eq:squeeze}
		\lim_{T \to +\infty}
		\frac{1}{T} \sum_{t=1}^{T} X_{i, t} 
		&\overset{(a)}{=} \bE[X_i]
		\overset{(b)}{=} \lim_{T \to +\infty}
		\frac1T \sum_{t = 1}^{T + R_{i, T}} X_{i, t}.
	\end{align}
	Since $(X_t, t \in \bN)$ is an irreducible positive-recurrent Markov chain,
	(a) follows directly from
	the ergodic theorem \citep[Theorem 3.3.2]{bremaud2020}.
	The argument for~(b) is in a similar spirit,
	with the extra-complication that the upper bound of summation ($T + R_{i, T}$)
	is different from the denominator ($T$).
	We start by rewriting the upper bound as follows:
	\begin{align*}
		\frac1T \sum_{t = 1}^{T + R_{i, T}} X_{i, t}
		&= \left( 1 + \frac{R_{i, T}}T \right)
		\frac1{T + R_{i, T}}
		\sum_{t = 1}^{T + R_{i, T}} X_{i, t}.
	\end{align*}
	Equation~(b) then follows by combining two arguments:
	(i) \Cref{lem:limRit} implies that
	$\lim_{T \to +\infty} 1 + \frac{R_{i, T}}T = 1$ almost surely;
	(ii) since $\lim_{T \to +\infty} T + R_{i, T} = +\infty$,
	the ergodic theorem for irreducible positive-recurrent Markov chains again implies that
	\begin{align*}
		\lim_{T \to +\infty}
		\frac1{T + R_{i, T}}
		\sum_{t = 1}^{T + R_{i, T}} X_{i, t}
		= \bE[X_i],
		\quad \text{almost surely.}
	\end{align*}
	
	Now, combining~\eqref{eq:squeeze} with \Cref{lem:squeeze} and the squeeze theorem
	allows us to conclude that
	\begin{align*}
		\lim_{T \to +\infty} \frac1T \sum_{t = 1}^T D_{i, t} = \bE[X_i],
		\quad \text{almost surely}.
	\end{align*}
	Equation~\eqref{eq:D} then follows from \Cref{lem:limit}.
	
	To prove that~\eqref{eq:D} also holds
	without the assumption that $X_{i, 0} = 0$ with probability~$1$,
	it suffices to recall that
	the expectations $\bE[D_i]$ and $\bE[X_i]$
	do not depend on the initial distribution.
\end{proof}

\subsection[Proof of Equation~\ref{eq:gradD}]{Proof of Equation~\eqref{eq:gradD}} \label{app:gradD}

Let $i, j \in \{1, 2, \ldots, n\}$.
Our goal is to prove~\eqref{eq:gradD},
which by~\eqref{eq:D} is equivalent to
\begin{align*}
	\frac{\partial \bE[X_i]}{\partial (\log p_j)}
	&= \cov[X_i, X_j].
\end{align*}
Recall that the vector $X$ follows the stationary distribution~\eqref{eq:pi},
which we can rewrite as
\begin{align} \label{eq:pi-log}
	\log \bP(X = x)
	&= \log \pi_{n, m-1}(x)
	= \sum_{i = 1}^n x_i \left( \log p_i - \log \mu_i \right) - \log Z_{n, m-1},
	\quad x \in \cX_{n, m-1},
\end{align}
where $Z_{n, m-1}$ follows by normalization:
\begin{align} \label{eq:C-log}
	\log Z_{n, m-1}
	&= \log \left(
	\sum_{x \in \cX_{n, m-1}}
	\exp \left( \sum_{i = 1}^n x_i \left( \log p_i - \log \mu_i \right) \right)
	\right).
\end{align}

Let us first prove the following intermediary result:
\begin{align} \label{eq:grad-log}
	\frac{\partial \log Z_{n, m-1}}{\partial (\log p_j)}
	&= \mathbb{E}[X_j],
	&
	\frac{\partial \log \pi_{n, m-1}(x)}{\partial (\log p_j)}
	&= x_j - \mathbb{E}[X_j],
	\quad x \in \cX_{n, m-1}.
\end{align}

The first part of~\eqref{eq:grad-log} follows by
taking the partial derivative of~\eqref{eq:C-log}
and rearranging the terms to retrieve the definition of~$\pi_{n, m-1}$:
\begin{align*}
	\frac{\partial \log(Z_{n, m-1})}{\partial (\log p_j)}
	&= \frac1{Z_{n, m-1}} \frac{\partial Z_{n, m-1}}{\partial (\log p_j)}
	= \frac1{Z_{n, m-1}}
	\sum_{x \in \cX_{n, m-1}}
	x_j
	\exp \left( \sum_{i = 1}^n x_i (\log p_i - \log \mu_i) \right), \\
	&= \sum_{x \in \cX_{n, m-1}}
	x_j \exp \left( \sum_{i = 1}^n x_i (\log p_i - \log \mu_i) - \log Z_{n, m-1} \right)
	= \sum_{x \in \cX_{n, m-1}} x_j \pi_{n, m-1}(x)
	= \mathbb{E}[X_j].
\end{align*}
Now, the second part of~\eqref{eq:grad-log} follows
by taking the partial derivative of~\eqref{eq:pi-log}
and injecting the previous result:
\begin{align*}
	\frac{\partial \log \pi_{n, m-1}(x)}{\partial (\log p_j)}
	&= x_j - \frac{\partial \log Z_{n, m-1}}{\partial (\log p_j)}
	= x_j - \bE[X_j],
	\quad x \in \cX_{n, m-1}.
\end{align*}

To conclude, it suffices to inject the second part of~\eqref{eq:grad-log}
into the definition of expectation:
\begin{align*}
	\frac{\partial \mathbb{E}[X_i]}{\partial (\log p_j)}
	&= \sum_{x \in \cX_{n, m-1}}
	x_i \frac{\partial \pi_{n, m-1}(x)}{\partial (\log p_j)}
	= \sum_{x \in \cX_{n, m-1}}
	\pi_{n, m-1}(x) x_i \frac{\partial \log \pi_{n, m-1}(x)}{\partial (\log p_j)}, \\
	&= \sum_{x \in \cX_{n, m-1}}
	\pi_{n, m-1}(x) x_i (x_j - \mathbb{E}[X_j])
	= \cov[X_i, X_j].
\end{align*}

\subsection[Proof of Equations~\ref{eq:Xi} and~\ref{eq:XiXj}]{Proof of Equations~\eqref{eq:Xi} and~\eqref{eq:XiXj}} \label{app:buzen}

Equation~\eqref{eq:Xi} was proved in \citep[Equation~(8)]{buzen:1973}.
Buzen's algorithm \cite{buzen:1973} as described in the proposition
was introduced and shown to yield the correct result
in \citep[Paragraph ``Computation of $G(N)$'']{buzen:1973}.
(Note that, in \cite{buzen:1973}, $N$ corresponds to our~$m$ and $M$ to our~$n$.)
All that remains is to prove~\eqref{eq:XiXj},
which we do in a similar way to the proof of \eqref{eq:Xi} in \cite{buzen:1973}.
To simplify notation, we can focus without loss of generality on the pair $(i, j) = (1, 2)$.
We have successively:
\begin{align*}
	\mathbb{E}[X_1 X_2]
	&= \sum_{x \in \cX_{n, m-1}} x_1 x_2 \pi_{n, m-1}(x)
	= \sum_{x \in \cX_{n, m-1}} \sum_{k = 1}^{x_1} \sum_{\ell = 1}^{x_2} \pi_{n, m-1}(x)
	= \sum_{\substack{k, \ell = 1 \\ k + \ell \le m-1}}^{m-1}
	\sum_{\substack{x \in \cX_{n, m-1} \\ x_1 \ge k, x_2 \ge \ell}}
	\pi_{n, m-1}(x), \\
	&= \frac1{Z_{n, m-1}}
	\sum_{\substack{k, \ell = 1 \\ k + \ell \le m-1}}^{m-1}
	\sum_{\substack{x \in \cX_{n, m-1} \\ x_1 \ge k, x_2 \ge \ell}}
	\prod_{i = 1}^n \left( \frac{p_i}{\mu_i} \right)^{x_i}
	\overset{(*)}= \frac1{Z_{n, m-1}}
	\sum_{\substack{k, \ell = 1 \\ k + \ell \le m-1}}^{m-1}
	\left( \frac{p_1}{\mu_1} \right)^{k}
	\left( \frac{p_2}{\mu_2} \right)^{\ell}
	\sum_{y \in \cX_{n, m - 1 - k - \ell}}
	\prod_{i = 1}^n \left( \frac{p_i}{\mu_i} \right)^{y_i}, \\
	&= \frac1{Z_{n, m-1}}
	\sum_{\substack{k, \ell = 1 \\ k + \ell \le m-1}}^{m-1}
	\left( \frac{p_1}{\mu_1} \right)^{k}
	\left( \frac{p_2}{\mu_2} \right)^{\ell}
	Z_{n, m-1-k-\ell},
\end{align*}
where ($*$) follows by making the change of variable
$y = x - k e_1 - \ell e_2$,
where $e_i$ is the $n$-dimensional vector
with one in component $i$ and zero elsewhere.

\section[Proof of Proposition~\ref{theo:bound-time}]{Proof of \Cref{theo:bound-time}} \label{app:time}

\paragraph{Proof of \Cref{eq:throughput}}

We start by proving \Cref{eq:throughput}. Let the sequence $\{t_k, k \in \mathbb{N}\}$ denote the wall-clock time instants at which round~$k$ starts, and let $\{\xi_i(t), t \in \mathbb{R}_{\ge 0}\}$ represent the number of tasks at client~$i$ at (wall-clock) time~$t$, for all $i \in \{1, \ldots, n\}$. The throughput $\lambda$ is defined as the average number of rounds completed per unit of (wall-clock) time. Mathematically, for a given time window $s \in \mathbb{R}_{>0}$, the throughput is expressed as:
\begin{align*}
    \lambda &= \mathbb{E} \left[\frac{1}{s} \sum_{k \ge 0} \one{t_k \leq s} \right]
     = \mathbb{E} \left[ \frac{1}{s} \int_0^s \sum_{i=1}^n \mu_i \one{\xi_i(t)>0} dt \right], \\
    & = \sum_{i=1}^n \mu_i \bP(\xi_i>0)
     = \sum_{i=1}^n \mu_i \frac{p_i}{\mu_i} \frac{Z_{n,m-1}}{Z_{n,m}}
     = \frac{Z_{n,m-1}}{Z_{n,m}}.
\end{align*}

The first equality follows from the definition of throughput. The second applies the stochastic intensity formula~\citep[Section 1.8.3, "Stochastic Intensity Integration Formula"]{baccelli2002} to the point process $\{t_k\}_{k \in \mathbb{N}}$, whose stochastic intensity is $\sum_{i=1}^n \mu_i \one{\xi_i(t) > 0}$. The third equality follows by interchanging expectation and integration, using the linearity of expectation and the stationarity of $\xi$. The fourth uses the expression for $\mathbb{P}(\xi_i > 0)$ from~\citep[Equation~(6)]{buzen:1973}. This completes the proof.

\paragraph{Proof of \Cref{boundH}}

Let $Y_k = \left( Y_{1,k}, Y_{2,k}, \ldots, Y_{n,k} \right)$, where $k \in \mathbb{N}$, represents the state of the network at round~$k$. The sequence $\{Y_k, k \in \mathbb{N}\}$ can be recursively defined as follows: 
\begin{itemize}
	\item $Y_0$ represents the post-jump state at $t=0$, i.e., for all $i \in \{1,\ldots,n\}$, $Y_{i, 0} = X_{i, 0} + \one{C_{0} = i}$.
	\item For each $k \in \mathbb{N}_{>0}$ and $i \in \{1,\ldots,n\}$, $Y_{i, k} = Y_{i, k-1} + \one{A_k = i} - \one{C_{k-1} = i}$.
\end{itemize}
We know that the sequence $\{\tau_k | Y_k\}_{k \in \{0, \ldots, T\}}$ consists of independent random variables, such that $\tau_k | Y_k$ is exponentially distributed with a parameter $\sum_{i=1}^n \mu_i \mathbf{1}\{Y_{i,k} \geq 1\}$, for each $k \in \{0, 1, \ldots, T\}$. Conditioning on $Y_k$, we can write for all $k \in \{0, 1, \ldots, T\}$:
\begin{align*}
    \mathbb{E}[\tau_k] &= \sum_{x \in \mathcal{X}_{n, m}} \mathbb{E}[\tau_k | Y_k=x] \mathbb{P}(Y_k=x), \\
    &= \sum_{x \in \mathcal{X}_{n, m}} \frac{1}{\sum_{i=1}^n \mu_i \mathbf{1}\{x_i \geq 1\}} \mathbb{P}(Y_k=x).
\end{align*}

The sequence $\{Y_k, k \in \mathbb{N}\}$ forms an ergodic, discrete-time, homogeneous Markov chain. Its stationary distribution can be derived by noting that it corresponds to the jump chain of the ergodic, continuous-time Markov chain $\{\xi(t), t \geq 0\}$, whose stationary distribution is $\pi_{n,m}$. Using Equation~(13.56) from Theorem~13.4.5 in \cite{bremaud2013markov}, the stationary distribution of $\{Y_k, k \in \mathbb{N}\}$ is given by:

\begin{align} \label{st:jump}
    \mathbb{P}(Y_k=x) = \frac{1}{V_{n,m}} \sum_{i=1}^n \mu_i \mathbf{1}\{x_i \geq 1\} \prod_{j=1}^n \left( \frac{p_j}{\mu_j} \right)^{x_j},
		\quad x \in \mathcal{X}_{n, m},
\end{align}
where $V_{n,m}$ is a normalizing constant.

Under the stationarity assumption (\ref{A5}) and substituting \eqref{st:jump} into the expression previously derived for $\mathbb{E}[\tau_k]$, we obtain:
\begin{align} \label{eq:E-tau_T}
    \bE[\tau_k] &= \sum_{x \in \cX_{n, m}} \frac{1}{V_{n,m}} \prod_{j = 1}^n \left( \frac{p_j}{\mu_j} \right)^{x_j} 
    \overset{(*)}{=} \frac{Z_{n,m}}{V_{n,m}} .
\end{align}

Next, we demonstrate that $V_{n,m} = Z_{n,m-1}$. By definition, we have:
\begin{align*} 
    V_{n,m} &= \sum_{x \in \cX_{n, m}} \sum_{\substack{i=1\\ x_i \geq 1 }}^n \mu_i \prod_{j = 1}^n \left( \frac{p_j}{\mu_j} \right)^{x_j}
     \overset{(a)}{=} \sum_{i=1}^n \mu_i \sum_{\substack{x \in \cX_{n, m}\\ x_i \geq 1 }} \prod_{j = 1}^n \left( \frac{p_j}{\mu_j} \right)^{x_j}
     \overset{(b)}{=} \sum_{i=1}^n p_i \sum_{\substack{x \in \cX_{n, m}\\ x_i > 0 }} \prod_{j = 1}^n \left( \frac{p_j}{\mu_j} \right)^{(x-e_i)_j},
\end{align*}
where, for each $i \in \{1,\ldots,n\}$, $e_i$ denotes the $n$-dimensional vector with one in component $i$ and zero elsewhere.
(a) is obtained by rearranging the order of summation, while (b) follows from factoring out $\frac{p_i}{\mu_i}$. Applying the variable substitution $y = x - e_i$, the expression simplifies as:
\begin{align*}
    V_{n,m} = \underbrace{\sum_{i=1}^n p_i}_{=1} \underbrace{\sum_{y \in \mathcal{X}_{n, m-1}} \prod_{j=1}^n \left( \frac{p_j}{\mu_j} \right)^{y_j}}_{=Z_{n,m-1}} = Z_{n,m-1}.
\end{align*}

Thus, incorporating this result into equality ($*$) of \eqref{eq:E-tau_T}, and using \eqref{eq:throughput}, we conclude that for all $k \in \{0, 1, \ldots, T\}$:
\begin{align} \label{eq:mean_round}
    \mathbb{E}\left[\tau_k\right] = \frac{Z_{n,m}}{Z_{n,m-1}} = \frac{1}{\lambda}.
\end{align}

Finally, we get:
\begin{align*}
    \frac{1}{T+1}\sum_{t=0}^{T} \bar{\tau}_t \bE \left[ \|\nabla f(w_t)\|^2 \right] = \frac{1}{\lambda} \frac{1}{T+1}\sum_{t=0}^{T} \bE \left[ \|\nabla f(w_t)\|^2 \right] 
    \leq \frac{1}{\lambda} 8 G
\end{align*}

\section[Proof of Proposition~\ref{prop:time-eps}]{Proof of \Cref{prop:time-eps}} \label{proof:time_eps}

As recalled in \Cref{sec:optimize-updates-delay},
\citep[Theorem~1]{leconte2024queueing} provides the following upper bound on the ergodic mean of the squared gradient norm of~$f$: there exists a constant $\eta_{\text{max}} > 0$ (which depends on $p$) such that, for any $\eta \in (0, \eta_{\text{max}})$,
\begin{align*}
    \frac{1}{T + 1}\sum_{t=0}^{T} \mathbb{E}[\|\nabla f(w_t)\|^2]
    \leq 8G, \quad \text{where} \quad
    G = \frac{A}{\eta(T + 1)}
        + \frac{\eta L B}{n^2} \sum_{i=1}^{n} \frac{1}{p_i}
        + \frac{\eta^2 L^2 B m}{n^2} \sum_{i=1}^{n} \frac{\mathbb{E}[D_i]}{p_i^2}.
\end{align*}

Assume the learning rate is a function of the number of rounds $T$, i.e., $\eta = \frac{C}{T^\alpha}$ for some constants $\alpha, C \in \mathbb{R}_{> 0}$ such that $\alpha<1$ and $\eta < \eta_{\text{max}}$. Then we can express $G$ as:
\begin{align*}
    G = \frac{A}{C T^{-\alpha} (T + 1)}
        + \frac{C L B}{n^2 T^{\alpha}} \sum_{i=1}^{n} \frac{1}{p_i}
        + \frac{C^2 L^2 B m}{n^2 T^{2\alpha}} \sum_{i=1}^{n} \frac{\mathbb{E}[D_i]}{p_i^2}.
\end{align*}
Therefore, to obtain
$$
\frac{1}{T + 1} \sum_{t=0}^T \mathbb{E}[\|\nabla f(w_t)\|^2] \leq \epsilon,
$$
it suffices to take $T \ge T_\epsilon$, where
\begin{align*}
    T_{\epsilon} = \mathcal{O}\Bigg(& 
        \left( \frac{A}{C\epsilon} \right)^{\frac{1}{1 - \alpha}}
    + \left( \frac{C L B}{\epsilon n^2} \sum_{i=1}^{n} \frac{1}{p_i} \right)^{\frac{1}{\alpha}} 
    + \left( \frac{C^2 L^2 B m}{\epsilon n^2} \sum_{i=1}^{n} \frac{\mathbb{E}[D_i]}{p_i^2} \right)^{\frac{1}{2\alpha}}
    \Bigg) \text{ rounds.}
\end{align*}
Using \Cref{eq:mean_round}, the expected wall-clock time to reach this accuracy is
$$
\mathbb{E}[\tilde{\tau}_\epsilon] = \sum_{t=0}^{T_\epsilon - 1} \mathbb{E}[\tau_t] = \frac{T_\epsilon}{\lambda}.
$$

\section[Compute G and ∇ₚ G]{Compute $G$ and $\nabla_p G$} \label{app:G}

\begin{proposition} \label{prop:bound}
In the framework of \Cref{sec:fl}, the expression of the upper bound $G(p)$ in terms of routing probabilities is given by:
\begin{align}
	G(p)
	&= \frac{A}{\eta(T + 1)}
	+ \frac{\eta L B}{n^2} \sum_{i=1}^{n} \frac{1}{p_i}
	+ \frac{\eta^2 L^2 B m}{n^2} \sum_{i=1}^{n} \sum_{k = 1}^{m-1}
	\frac{p_i^{k-2}}{\mu_i^k} 
	\frac{Z_{n, m-1-k}}{Z_{n, m-1}},
\end{align}
where the normalizing constants $Z_{n, \um}$ for $\um \in \{0, 1, \ldots, m - 1\}$ can be computed explicitly  
with $\mathcal{O}(nm)$ time and $\mathcal{O}(m)$ memory complexity, as shown in \Cref{prop:jackson}.
\end{proposition}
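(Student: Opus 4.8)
The plan is to derive \Cref{prop:bound} as an immediate consequence of \Cref{theo:little}. Starting from the bound~$G$ recalled in~\eqref{eq:G},
\begin{align*}
G(p) = \frac{A}{\eta(T+1)} + \frac{\eta L B}{n^2}\sum_{i=1}^n \frac1{p_i} + \frac{\eta^2 L^2 B m}{n^2}\sum_{i=1}^n \frac{\bE[D_i]}{p_i^2},
\end{align*}
I would substitute the identity $\bE[D_i] = \bE[X_i]$ from~\eqref{eq:D} and then the closed form~\eqref{eq:Xi}, namely $\bE[X_i] = \sum_{k=1}^{m-1}(p_i/\mu_i)^k Z_{n,m-1-k}/Z_{n,m-1}$. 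The only simplification is on the power of~$p_i$:
\begin{align*}
\frac{\bE[D_i]}{p_i^2} = \frac1{p_i^2}\sum_{k=1}^{m-1}\frac{p_i^k}{\mu_i^k}\frac{Z_{n,m-1-k}}{Z_{n,m-1}} = \sum_{k=1}^{m-1}\frac{p_i^{k-2}}{\mu_i^k}\frac{Z_{n,m-1-k}}{Z_{n,m-1}},
\end{align*}
and plugging this into the third term of~$G$ gives exactly the announced expression, the first two terms being left unchanged.

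For the complexity statement, I would appeal directly to Buzen's recursion from \Cref{prop:jackson}: iterating it up to level $\um = m-1$ produces all the constants $Z_{n,0}, Z_{n,1}, \ldots, Z_{n,m-1}$ appearing in the formula, at the cost of one addition and one multiplication per pair $(\un,\um)$ with $\un \le n$ and $\um \le m-1$, hence $\mathcal{O}(nm)$ arithmetic operations overall. Since stage~$\un$ of the recursion only reads the row indexed by $\un-1$ together with the row being filled, the rows can be overwritten in place, so $\mathcal{O}(m)$ memory suffices to keep the final row $(Z_{n,\um})_{0 \le \um \le m-1}$.

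Since the argument is a mechanical substitution followed by a standard reading of Buzen's algorithm, I do not expect a genuine obstacle. The only points that require a little care are bookkeeping ones: checking that the range $1 \le k \le m-1$ in~\eqref{eq:Xi} uses exactly the constants $Z_{n,\um}$ with $0 \le \um \le m-2$ that the recursion produces, and verifying that the in-place variant of Buzen's algorithm indeed retains all the values needed by the final expression.
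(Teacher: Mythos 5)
Your proposal is correct and matches the paper's (implicit) argument: the paper presents \Cref{prop:bound} as a direct consequence of substituting $\bE[D_i]=\bE[X_i]$ from~\eqref{eq:D} and the closed form~\eqref{eq:Xi} into the third term of~\eqref{eq:G}, with the complexity claim read off from Buzen's recursion in \Cref{prop:jackson} exactly as you describe. The bookkeeping you flag (the formula uses $Z_{n,0},\ldots,Z_{n,m-2}$ in the numerators and $Z_{n,m-1}$ in the denominator, all produced by the recursion, and the in-place row update gives $\mathcal{O}(m)$ memory) checks out.
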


To identify the best routing strategy, we aim to minimize this upper bound using a gradient descent algorithm. This requires computing the gradient of the function $G(p)$ with respect to the routing probability vector, as detailed in the following proposition, which follows from \Cref{theo:little}.

\begin{proposition} \label{prop:grad}
For each $j \in \{1,\ldots,n\}$, we have
\begin{align*}
	\frac{\partial G}{\partial p_j} =
	\frac{\eta L B}{n^2 p_j}
	\left(
	-\frac{1}{p_j}
	+ \eta m L \left(
	\sum_{i=1}^n \frac{\bE[X_i X_j]}{p_i^2}
        - \bE[X_j] \sum_{i=1}^n \frac{\bE[X_i]}{p_i^2}
	- \frac{2 \bE[X_j]}{p_j^2}
	\right)
	\right),
\end{align*}
where the random vector $X \in \cX_{n, m-1}$ is distributed according to $\pi_{n, m-1}$, as defined in Equation~\eqref{eq:pi}. Moreover, for all $(i, j) \in \{1, \ldots, n\}^2$, the quantities $\bE[X_j]$ and $\bE[X_i X_j]$ can be computed using the formulas from \Cref{theo:little}, where the normalizing constants are efficiently computed in $\mathcal{O}(nm)$ time and $\mathcal{O}(m)$ memory complexity using Buzen's algorithm, as described in \Cref{prop:jackson}.
\end{proposition}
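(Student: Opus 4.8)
The plan is to differentiate the closed form of $G$ in \eqref{eq:G} term by term, using \Cref{theo:little} to resolve the only delicate dependence on the routing vector, namely the one hidden inside the mean relative delays. First I would invoke \eqref{eq:D} to rewrite
\[
G(p) = \frac{A}{\eta(T+1)} + \frac{\eta L B}{n^2}\sum_{i=1}^n \frac1{p_i} + \frac{\eta^2 L^2 B m}{n^2}\sum_{i=1}^n \frac{\bE[X_i]}{p_i^2},
\]
where $X\sim\pi_{n,m-1}$. A point worth stressing is that, because of the normalizing constant in \eqref{eq:pi}, $\pi_{n,m-1}$ (and hence each $\bE[X_i]$ and $\bE[X_iX_j]$) is a smooth function of $p$ on all of $\bR^n_{>0}$, so the partial derivatives below are the ordinary ambient ones and the constraint $p\in\cP_n$ plays no role in the differentiation itself.

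The first term of $G$ is independent of $p$, and differentiating the second term gives $-\eta L B/(n^2 p_j^2)$. For the third term I would apply the product rule to each summand $\bE[X_i]/p_i^2$: differentiating the factor $p_i^{-2}$ contributes only through the index $i=j$, yielding $-2\bE[X_j]/p_j^3$, while differentiating $\bE[X_i]$ is handled by \eqref{eq:gradD}, i.e.\ $\partial\bE[X_i]/\partial p_j = p_j^{-1}\cov[X_i,X_j] = p_j^{-1}\big(\bE[X_iX_j]-\bE[X_i]\bE[X_j]\big)$. Summing over $i$ gives
\[
\frac{\partial}{\partial p_j}\sum_{i=1}^n\frac{\bE[X_i]}{p_i^2} = \frac1{p_j}\left(\sum_{i=1}^n\frac{\bE[X_iX_j]}{p_i^2} - \bE[X_j]\sum_{i=1}^n\frac{\bE[X_i]}{p_i^2}\right) - \frac{2\bE[X_j]}{p_j^3}.
\]

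I would then add the two nonzero contributions, multiply the last display by $\eta^2 L^2 B m/n^2$, and factor $\eta L B/(n^2 p_j)$ out of the result to obtain exactly the claimed expression. The complexity statement is immediate from \Cref{theo:little} and \Cref{prop:jackson}: each $\bE[X_i]$ and $\bE[X_iX_j]$ is a finite sum of ratios of the Buzen constants $Z_{n,\um}$ with $\um\le m-1$, all of which are produced in $\mathcal{O}(nm)$ time and $\mathcal{O}(m)$ memory by the recursion of \Cref{prop:jackson}. There is no real obstacle here beyond careful bookkeeping; the only two spots needing attention are the isolated $i=j$ term of the sum, which after factoring produces the $-2\bE[X_j]/p_j^2$ inside the parenthesis, and the implicit dependence of $Z_{n,m-1}$ on every coordinate $p_i$ — but that dependence has already been absorbed into the covariance formula \eqref{eq:gradD}.
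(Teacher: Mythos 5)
Your proposal is correct and follows exactly the route the paper intends: the paper's justification for \Cref{prop:grad} is simply that it ``follows from \Cref{theo:little},'' and your term-by-term differentiation of~\eqref{eq:G} — substituting $\bE[D_i]=\bE[X_i]$ via~\eqref{eq:D}, applying the product rule with~\eqref{eq:gradD} for the $\bE[X_i]$ factors and isolating the $i=j$ contribution from the $p_i^{-2}$ factor, then factoring out $\eta L B/(n^2 p_j)$ — is precisely that computation, with the algebra checking out. Your remark that these are ambient partial derivatives (the simplex constraint being handled separately by the softmax reparameterization) is also consistent with how the paper sets things up in Section~\ref{app:G}.
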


To enforce the probability constraints on \( p = (p_i)_{i=1}^n \), we introduce auxiliary parameters \( \Theta = (\theta_i)_{i=1}^n \) and apply the softmax transformation:
\begin{align*}
	p_j = \frac{\re^{\theta_j}}{\sum_{i=1}^n \re^{\theta_i}}, \quad j \in \{1, \ldots, n\}.
\end{align*}
This guarantees that the resulting \( p \) forms a valid probability vector.

The gradient of the upper bound~$G$ with respect to \( \Theta \) is given by:
\begin{align*}
	\frac{\partial G}{\partial \theta_j}
	= \left\langle \nabla_p G , \frac{\partial p}{\partial \theta_j} \right\rangle, \quad 
	\text{where } \frac{\partial p}{\partial \theta_j} = p_j (e_j - p), \text{ for each \( j \in \{1, \ldots, n\} \).}
\end{align*}
Here, \( e_j \) denotes the $n$-dimensional unit vector with $1$ in coordinate $j$, and \( \langle \cdot , \cdot \rangle \) denotes the dot product in \( \mathbb{R}^n \).

\section[Compute H and ∇ₚ H]{Compute $H$ and $\nabla_p H$} \label{app:H}

\begin{proposition} \label{prop:boundH}
	In the framework of \Cref{sec:fl}, the expression of the upper bound $H(p)$ in terms of routing probabilities is given by:
	\begin{align}
		H(p)
		&= \frac{Z_{n,m}}{Z_{n,m-1}} \left( \frac{A}{\eta(T+1)}
		+ \frac{\eta L B}{ n^2} \sum_{i=1}^{n} \frac{1}{p_i} 
		+ \frac{\eta^2 L^2 B m}{n^2} \sum_{i=1}^{n} \sum_{k = 1}^{m-1}
		\frac{p_i^{k-2}}{\mu_i^k} 
		\frac{Z_{n, m-1-k}}{Z_{n, m-1}} \right),
	\end{align}
where the normalizing constants $Z_{n, \um}$ for $\um \in \{0, 1, \ldots, m\}$ can be computed explicitly  
with $\mathcal{O}(nm)$ time and $\mathcal{O}(m)$ memory complexity, as shown in \Cref{prop:jackson}.
\end{proposition}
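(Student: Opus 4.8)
The plan is to assemble Proposition~\ref{prop:boundH} directly from three ingredients already established: the identity $H = G/\lambda$ from Proposition~\ref{theo:bound-time}, the closed form $\lambda = Z_{n,m-1}/Z_{n,m}$ from Equation~\eqref{eq:throughput}, and the explicit routing-probability expression for $G(p)$ from Proposition~\ref{prop:bound}. First I would invoke Proposition~\ref{theo:bound-time} to write $H(p) = G(p)/\lambda$, then substitute $1/\lambda = Z_{n,m}/Z_{n,m-1}$, which immediately produces the prefactor $Z_{n,m}/Z_{n,m-1}$ in the claimed formula.

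Next I would expand $G(p)$. Starting from Equation~\eqref{eq:G}, $G(p)$ has three terms; the first two, $\frac{A}{\eta(T+1)}$ and $\frac{\eta L B}{n^2}\sum_i \frac1{p_i}$, are already in the desired shape. For the third term $\frac{\eta^2 L^2 B m}{n^2}\sum_i \frac{\bE[D_i]}{p_i^2}$, I would apply \Cref{theo:little}: first $\bE[D_i] = \bE[X_i]$ via~\eqref{eq:D}, then the closed form~\eqref{eq:Xi}, namely $\bE[X_i] = \sum_{k=1}^{m-1}(p_i/\mu_i)^k Z_{n,m-1-k}/Z_{n,m-1}$. Dividing by $p_i^2$ and collecting powers gives $\sum_{k=1}^{m-1} p_i^{k-2}\mu_i^{-k} Z_{n,m-1-k}/Z_{n,m-1}$, exactly the summand appearing inside the parentheses of the statement. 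Multiplying the whole bracket by the prefactor $Z_{n,m}/Z_{n,m-1}$ from the previous step yields the stated expression for $H(p)$.

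For the complexity claim, I would note that the required normalizing constants are $Z_{n,\um}$ for $\um \in \{0,1,\ldots,m\}$ --- one index beyond the range $\{0,\ldots,m-1\}$ used for $G$, since $Z_{n,m}$ now appears in the prefactor. \Cref{prop:jackson} already gives Buzen's recursion computing $Z_{1,\um},\ldots,Z_{n,\um}$ for $\um$ up to $m$ in $\mathcal{O}(nm)$ time and $\mathcal{O}(m)$ memory (retaining only the current column of the recursion table), which covers this case without modification.

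There is no substantive obstacle here: the statement is a bookkeeping consequence of Proposition~\ref{theo:bound-time} (already proved), Proposition~\ref{prop:bound} (the analogous statement for $G$), and \Cref{theo:little}. The only points requiring minor care are checking that the range of Buzen's recursion is extended from $m-1$ to $m$, and that the exponent manipulation $(p_i/\mu_i)^k / p_i^2 = p_i^{k-2}/\mu_i^k$ is carried out correctly --- including that for $k = 1$ this legitimately produces the term $p_i^{-1}\mu_i^{-1}$, which is well-defined since $p_i > 0$ by the definition of $\cP_n$.
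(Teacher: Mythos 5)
Your proposal is correct and follows exactly the route the paper intends: it assembles the statement from $H = G/\lambda$ (\Cref{theo:bound-time}), the throughput formula~\eqref{eq:throughput}, and the substitution $\bE[D_i] = \bE[X_i]$ with the closed form~\eqref{eq:Xi} from \Cref{theo:little}, which is precisely how the paper (implicitly) derives \Cref{prop:boundH} as the throughput-weighted counterpart of \Cref{prop:bound}. The algebraic step $(p_i/\mu_i)^k/p_i^2 = p_i^{k-2}/\mu_i^k$ and the extension of Buzen's recursion to index $m$ are both handled correctly.
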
	

To find the optimal routing strategy, we aim to minimize this upper bound using a gradient descent algorithm. This requires the gradient of $H(p)$ with respect to the routing probabilities, given in the following proposition.

\begin{proposition} \label{prop:gradH}
For each $j \in \{1,\ldots,n\}$, the gradient $\nabla_p H$ with respect to the routing probability $p_j$ is given by:
\begin{align*}
	\frac{\partial H}{\partial p_j} =
        \frac{A \bE[\xi_j - X_j]}{(T+1)\eta \lambda p_j} +
	\frac{\eta L B}{n^2 p_j \lambda}
	\left(
        -\frac{1}{p_j} +
        \bE[\xi_j - X_j] \sum_{i=1}^n \frac{1}{p_i}
	+ \eta m L \left(
        - \frac{2 \bE[X_j]}{p_j^2} 
        + \sum_{i=1}^n \frac{\bE[X_i X_j]}{p_i^2}
        + \bE[\xi_j - 2 X_j] \sum_{i=1}^n \frac{\bE[X_i]}{p_i^2}
	\right)
	\right),
\end{align*}
where the random vectors $\xi \in \cX_{n, m}$ and $X \in \cX_{n, m-1}$ are distributed according to $\pi_{n, m}$ and $\pi_{n, m-1}$, respectively, as defined in Equation~\eqref{eq:pi}. 

Moreover, for all $(i, j) \in \{1, \ldots, n\}^2$, the quantities $\bE[X_j]$, $\bE[\xi_j]$, and $\bE[X_i X_j]$ can be computed using the formulas from \Cref{theo:little}, where the normalizing constants are efficiently computed in $\mathcal{O}(nm)$ time and $\mathcal{O}(m)$ memory complexity using Buzen's algorithm, as described in \Cref{prop:jackson}.
\end{proposition}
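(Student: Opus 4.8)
The plan is to differentiate the identity $H = G/\lambda$ from \Cref{theo:bound-time}, where $\lambda = Z_{n,m-1}/Z_{n,m}$ by \eqref{eq:throughput}. Throughout I treat $p$ as an unconstrained vector in $\bR_{>0}^n$, the simplex constraint being imposed afterwards via the softmax reparametrization exactly as in \Cref{app:G}; this is consistent with the convention already used for $\partial G/\partial p_j$ in \Cref{prop:grad} and for $\partial\bE[X_i]/\partial p_j$ in \eqref{eq:gradD}. The product rule then gives, for each $j \in \{1,\ldots,n\}$,
\[
\frac{\partial H}{\partial p_j} = \frac1\lambda\,\frac{\partial G}{\partial p_j} + G\,\frac{\partial}{\partial p_j}\!\left(\frac1\lambda\right).
\]
The first summand is already known: \Cref{prop:grad} expresses $\partial G/\partial p_j$ in terms of $\bE[X_j]$, $\bE[X_iX_j]$ and the $p_i$. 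So the only new ingredient is the derivative of $1/\lambda = Z_{n,m}/Z_{n,m-1}$.

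For that, I would observe that $\pi_{n,m}$ has, like $\pi_{n,m-1}$ in \eqref{eq:pi}, the exponential-family form $\pi_{n,m}(x)\propto\exp\big(\sum_i x_i(\log p_i-\log\mu_i)\big)$ on $\cX_{n,m}$, with $\log Z_{n,m}$ as the log-partition function in the natural parameters $\theta_i = \log p_i - \log\mu_i$. The computation leading to \eqref{eq:grad-log} therefore applies verbatim with $m-1$ replaced by $m$, yielding $\partial\log Z_{n,m}/\partial(\log p_j) = \bE[\xi_j]$ with $\xi\sim\pi_{n,m}$, while $\partial\log Z_{n,m-1}/\partial(\log p_j) = \bE[X_j]$ is \eqref{eq:grad-log} itself. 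Writing $1/\lambda$ through $\log(1/\lambda) = \log Z_{n,m} - \log Z_{n,m-1}$ and applying the chain rule $\partial/\partial p_j = p_j^{-1}\,\partial/\partial(\log p_j)$ gives
\[
\frac{\partial}{\partial p_j}\!\left(\frac1\lambda\right) = \frac1{\lambda p_j}\,\bE[\xi_j - X_j].
\]

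It then remains to substitute this, the expression for $G$ from \eqref{eq:G} (with $\bE[D_i]=\bE[X_i]$ by \eqref{eq:D}), and the formula for $\partial G/\partial p_j$ from \Cref{prop:grad} into the product rule and collect terms. The $A$-term of $G$ produces the isolated summand $A\,\bE[\xi_j-X_j]/\big((T+1)\eta\lambda p_j\big)$; all remaining contributions share the prefactor $\eta L B/(n^2 p_j\lambda)$. Inside that bracket, the $-1/p_j$ term comes from $\partial G/\partial p_j$; the $\sum_i 1/p_i$ term inherits coefficient $\bE[\xi_j-X_j]$ from $G\,\partial_{p_j}(1/\lambda)$; and within the $\eta m L(\cdot)$ part the $\sum_i \bE[X_i]/p_i^2$ contributions, namely $-\bE[X_j]$ from $\partial G/\partial p_j$ and $\bE[\xi_j-X_j]$ from $G\,\partial_{p_j}(1/\lambda)$, combine into $\bE[\xi_j-2X_j]$, while $\sum_i \bE[X_iX_j]/p_i^2$ and $-2\bE[X_j]/p_j^2$ carry over unchanged — precisely the claimed form. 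Finally I would record that $\bE[X_j]$, $\bE[\xi_j]$ (the $\cX_{n,m}$-analogue of \eqref{eq:Xi}) and $\bE[X_iX_j]$ are computable from the Buzen constants of \Cref{prop:jackson} in $\mathcal{O}(nm)$ time and $\mathcal{O}(m)$ memory. The only mildly delicate points are the one-line adaptation of \eqref{eq:grad-log} from index $m-1$ to index $m$ and the bookkeeping in the final regrouping; there is no substantive obstacle, since all the analytic content is already supplied by \Cref{theo:little}, \Cref{prop:grad}, and \Cref{theo:bound-time}.
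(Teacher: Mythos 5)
Your proposal is correct and follows exactly the derivation the paper intends: the paper gives no written proof of \Cref{prop:gradH} beyond noting that it follows from \Cref{theo:little} and the machinery of Section~\ref{app:G}, and the natural route is precisely your product rule on $H=G/\lambda$ combined with the exponential-family identity $\partial\log Z_{n,\um}/\partial(\log p_j)=\bE[\xi_j]$ applied at levels $m$ and $m-1$. The regrouping you describe (in particular $-\bE[X_j]+\bE[\xi_j-X_j]=\bE[\xi_j-2X_j]$) reproduces the stated formula term for term.
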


The proxy objective~$H$ is optimized using the same softmax reparameterization as in Section~\ref{app:G}.

\section{Experiments Details} \label{exp:dl}

\subsection{Neural Networks Architectures}
The Fashion-MNIST and KMNIST datasets each contain 70,000 grayscale images, with 60,000 designated for training and 10,000 reserved for testing. These images, sized at 28×28 pixels, are evenly distributed across 10 classes.
On the other hand, the CIFAR-10 and CIFAR-100 datasets feature 60,000 color images (RGB) with a resolution of 32×32 pixels. Of these, 50,000 are used for training, while 10,000 are allocated for testing. CIFAR-10 is organized into 10 classes, while CIFAR-100 expands to 100 classes.
Notably, all these datasets are class-balanced, ensuring each class has an equal number of images.

For the Fashion-MNIST and KMNIST datasets, we employ a convolutional neural network (CNN) with the following structure:
\begin{itemize}
	\item Two convolutional layers with 7x7 filters, each followed by a ReLU activation function. The first convolutional layer has 20 channels, while the second has 40 channels.
	\item A 2x2 max pooling layer.
	\item A final fully connected layer with 10 neurons, concluded by a softmax activation function.
\end{itemize}
For the CIFAR-10 and CIFAR-100 datasets, we employ a convolutional neural network (CNN) with the following architecture:
\begin{itemize}
    \item \textbf{Three sequential convolutional blocks}, each consisting of two 3$\times$3 convolutional layers. Each layer is followed by ReLU activation and Group Normalization. The configuration of these blocks is as follows:
    \begin{itemize}
        \item The first block contains convolutional layers with 32 channels.
        \item The second block contains convolutional layers with 64 channels.
        \item The third block contains convolutional layers with 128 channels.
    \end{itemize}
    Additionally, each block includes 2$\times$2 max pooling and a dropout layer with a probability of 0.25.\\

    \item \textbf{A classification block} comprising:
    \begin{itemize}
        \item A flattening layer.
        \item A fully connected layer with 128 neurons.
        \item A dropout layer with a probability of 0.25.
        \item A final fully connected layer with the number of neurons corresponding to the number of classes (10 for CIFAR-10 and 100 for CIFAR-100), followed by a softmax activation function.
    \end{itemize}
\end{itemize}

In all experiments, the stochastic gradient for each task is computed using a batch size of 512 data points. The numerical implementation is carried out in PyTorch, and the experiments are performed on an NVIDIA Tesla P100 GPU.

\subsection{Closed Jackson Network Simulation}

For each $t \in \{0, \ldots, T\}$, recall that $Y_t$ denote the $n$-dimensional random variable representing the queue lengths during round~$t$, and $\tau_t$ represent the duration (in  wall-clock time) of round~$t$.
The Jackson network simulation proceeds by initializing $Y_0$, then iterating through the following steps for each round~$t \in \{0, \ldots, T\}$:
\begin{itemize}
    \item Sample $\tau_t$ from an exponential distribution with rate parameter $\sum_{j=1}^n \mu_j \mathbf{1}\{Y_{j,t} > 0\}$, where $\mu_j$ represents the processing speed of client $j$ and $Y_{j,t}$ denotes the queue length of client $j$ at round~$t$.
    
    \item Select a client $k$ to complete a task at the end of round~$t$ from the set of non-empty queues. The probability of selecting client $k$ is proportional to the processing speeds of the clients, given by the distribution:
    \[
    \left( \frac{\mu_i \mathbf{1}\{Y_{i,t} > 0\}}{\sum_{j=1}^n \mu_j \mathbf{1}\{Y_{j,t} > 0\}}, i \in \{1, \ldots, n\} \right).
    \]

    \item Reassign the task to another client $l$ based on the routing distribution $(p_i, i \in \{1, \ldots, n\})$. The queue lengths are then updated as follows:
    \[
    Y_{t+1} = Y_t - e_k + e_l,
    \]
    where $e_k$ and $e_l$ are unit vectors indicating that the task is removed from client $k$ and added to client $l$, respectively. This concludes round~$t$ and initiates the next round.
\end{itemize}

\section{Additional Experiments} \label{app:num-additional}

\subsection{Additional Experiments for Section~\ref{num:optimize-updates}}

To evaluate the robustness of the optimized routing strategy~$p^*_G$ with respect to the proxy objective~$G$ under challenging heterogeneous data distributions, and to assess its sensitivity to different computation time distributions, we conducted additional experiments on the Fashion-MNIST, CIFAR-10, and CIFAR-100 datasets.

\subsubsection{Robustness to Data Distribution Heterogeneity} \label{app:num-dist-heterogeneity}

\paragraph{Highly heterogeneous data distribution}

This scenario follows the setup of \Cref{num:optimize-updates}, but with a different data partitioning across clients. For Fashion-MNIST and CIFAR-10, each client was assigned 3 unique image labels out of 10, distributed sequentially and cyclically. For example, the first client received labels 0, 1, and 2; the second, labels 3, 4, and 5; and so on, wrapping around after label 9. The same procedure was applied to CIFAR-100, but with 5 labels per client.

This setup creates a non-\gls{iid} data distribution, as each client can only access a small subset of labels. All clients were given an equal number of images.

We compare the test accuracy and loss trajectories of the optimized, uniform, and balanced routing strategies. To eliminate the impact of initialization, all neural networks were initialized with the same weights across routing strategies. For each dataset and strategy, we performed multiple independent simulations using different random seeds and recorded test performance.

The results in \Cref{fig:sim_3classes} show that the optimized routing strategy ultimately achieves the highest accuracy, with a notably faster performance gain compared to the other methods. In contrast, the uniform and balanced routing strategies exhibit greater volatility across independent simulations, along with more frequent loss spikes.

\begin{figure*}[hbtp]
	\centering
    \includegraphics[width=0.7\textwidth]{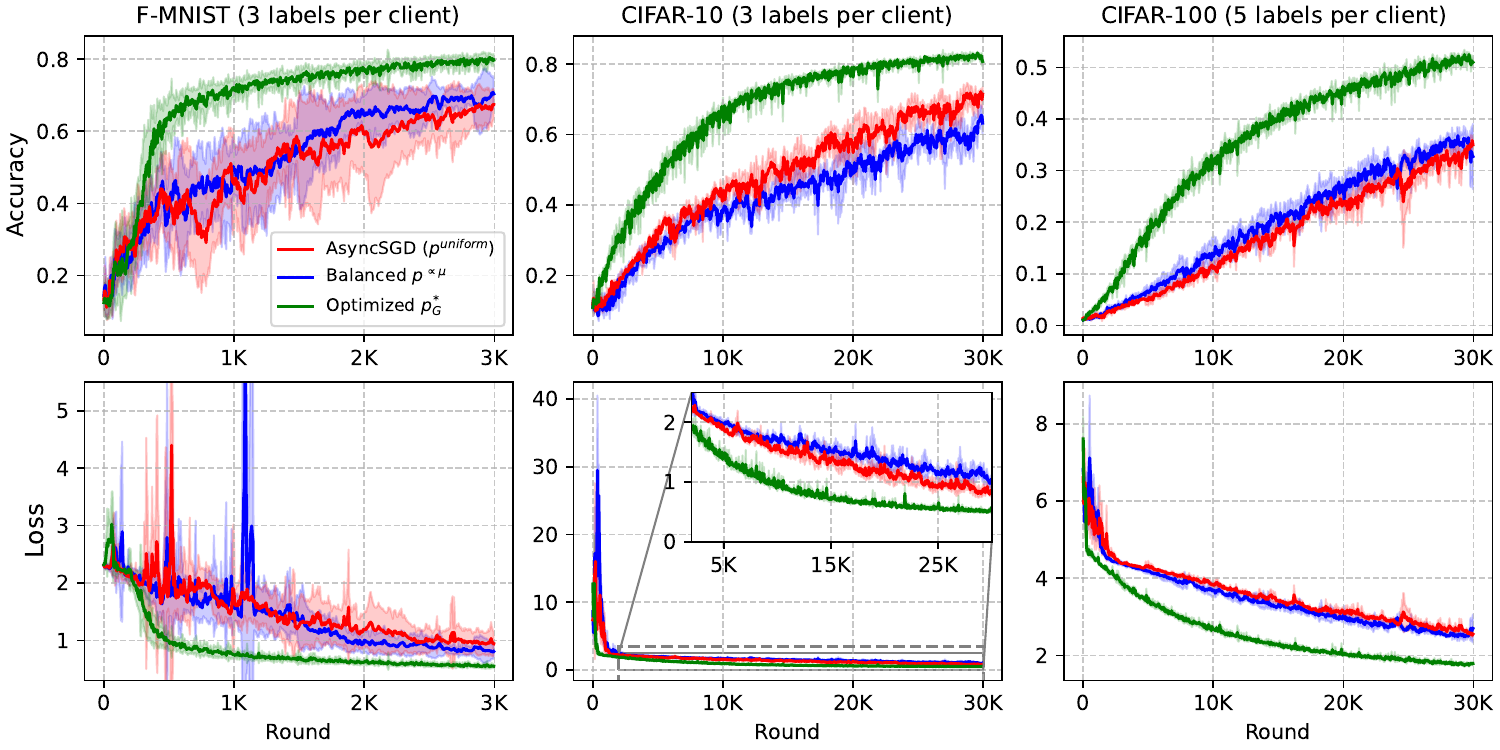} 
    \caption{Performance on the test set at the \gls{CS} in the scenario of \Cref{num:optimize-updates}, with $n = 20$ clients and $m = 100$ tasks under highly heterogeneous data splits. For Fashion-MNIST, we simulated training over 3{,}000 rounds, repeated 10 times, recording accuracy and loss every 5 rounds. For CIFAR-10 and CIFAR-100, we applied standard normalization and data augmentation, ran each simulation for 30{,}000 rounds, and repeated it three times, logging metrics every 50 rounds on an unseen test set. Solid lines show metrics averaged over independent runs; shaded areas represent standard deviations.}
	\label{fig:sim_3classes}
\end{figure*}

\paragraph{Disjoint datasets}

To further validate these findings, we examined another challenging data heterogeneous setting involving a system with 10 clients and 100 tasks. In this scenario, each image label from the 10 labels in the Fashion-MNIST dataset is assigned exclusively to a single client, ensuring that each client has a completely distinct data distribution. Additionally, the service speed of each client~$i$ is defined as $\mu_i = \textrm{e}^{i/50}$, resulting in the fastest client being approximately 20\% faster than the slowest. The learning rate is set to $\eta = 0.005$, and the smoothness constant is $L = 1$.

In \Cref{fig:sim_1class}, we compare the evolution of accuracy and loss under the optimized routing strategy against the uniform and balanced routing strategies in this more complex environment. Similar to the previous experiment, training spans 3,000 rounds, with metrics recorded on a test dataset at intervals of 5 rounds.

Our results further validate that the optimized routing strategy~$p^*_G$ consistently surpasses the alternatives $p^{\text{uniform}}$ and $p^{\propto \mu}$, achieving superior accuracy with a consistently faster performance gain. Furthermore, it demonstrates greater robustness and stability in this highly heterogeneous environment, showing a smaller standard deviation in both loss and accuracy across the 10 independent simulations, along with fewer loss spikes compared to the uniform and balanced strategies. Additionally, although the optimized routing often selects the slowest client more frequently, the learning process does not develop a bias toward optimizing the local loss of this client. Instead, it preserves balanced global performance, effectively avoiding overfitting to the image labels associated with any specific client.

\begin{figure*}[hbtp]
	\centering
	\includegraphics[width=0.7\textwidth]{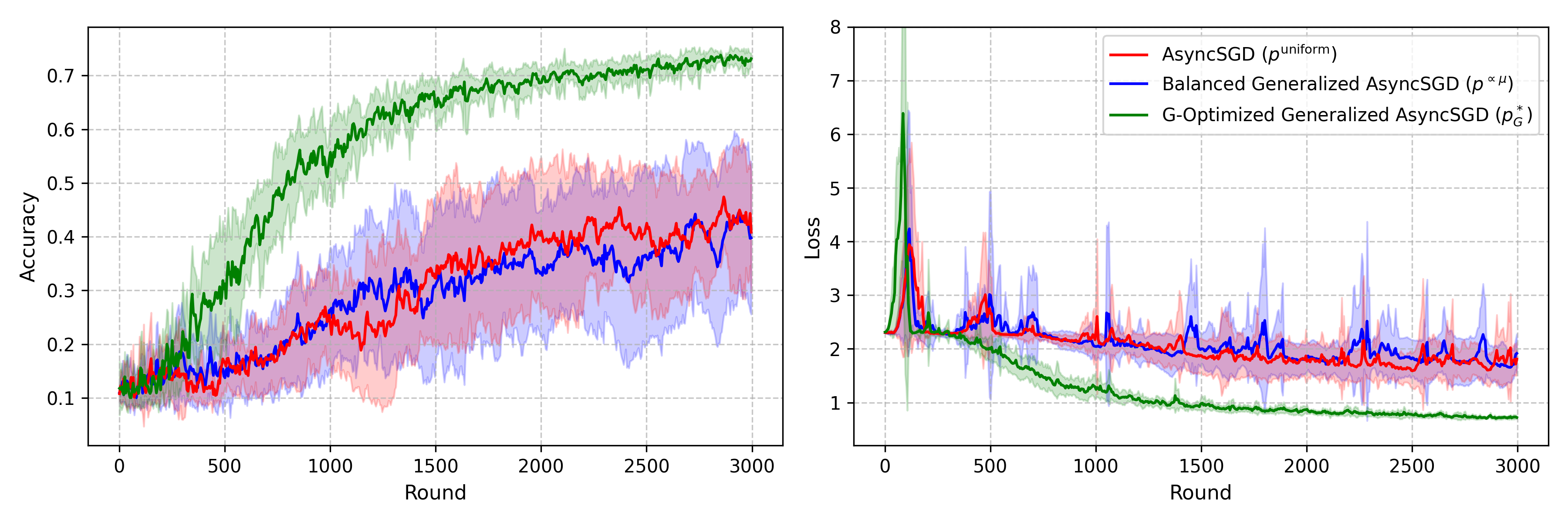} 
	\caption{Performance on the test set at the \gls{CS} with 10 clients and 100 tasks under heterogeneous data splits. Solid lines show metrics averaged over independent simulations; shaded areas represent standard deviations.}
	\label{fig:sim_1class}
\end{figure*}

\subsubsection{Robustness to Computation Time Distributions} \label{app:num-compu-time-heterogeneity}

To evaluate the robustness of the proposed approach under different computation time distributions, we retain the setup from \Cref{num:optimize-updates} but modify the distribution, while maintaining the same average task completion rate at each client.

We study two scenarios: (i) deterministic computation times, where computations at each client~$i$ have a fixed duration equal to $\frac{1}{\mu_i}$; and (ii) lognormal computation times, a heavy-tailed distribution used to model realistic processing delays in distributed systems. In the latter case, the mean is set to $\frac{1}{\mu_i}$ for each client~$i$, and the standard deviation of the underlying normal distribution is fixed at $\sigma_s = 1$, ensuring a constant coefficient of variation across clients. For all $i \in \{1,\ldots,n\}$, the values of $\mu_i$ match those used in the exponential setting of \Cref{num:optimize-updates}.

We compare the optimized routing strategy~$p^*_G$ (computed as in \Cref{num:optimize-updates}) with the uniform and balanced strategies under both homogeneous and heterogeneous data settings. Heterogeneity is introduced using Dirichlet-based sampling ($\text{Dir}_n(0.5)$) or disjoint label partitions across clients. For Fashion-MNIST, training was simulated for 3{,}000 rounds and repeated 10 times, with accuracy and loss recorded every 5 rounds. For CIFAR-10 and CIFAR-100, we applied standard normalization and data augmentation, ran each simulation for 30{,}000 rounds, repeated three times, and logged performance every 50 rounds on an unseen test set. All neural networks were initialized with the same weights across routing strategies to ensure fair comparison.

The results are shown in \Cref{fig:sim_deter,fig:sim_lognormal}, where solid lines indicate averages over independent runs and shaded regions represent standard deviations. They indicate that performance is largely insensitive to the choice of computation time distribution, with trends closely matching those under the exponential assumption. The $G$-optimized routing consistently achieves the highest accuracy, with faster performance gain and lower volatility in both accuracy and loss trajectories.

\begin{figure}[htbp]
  \centering
    \includegraphics[width=0.49\textwidth]{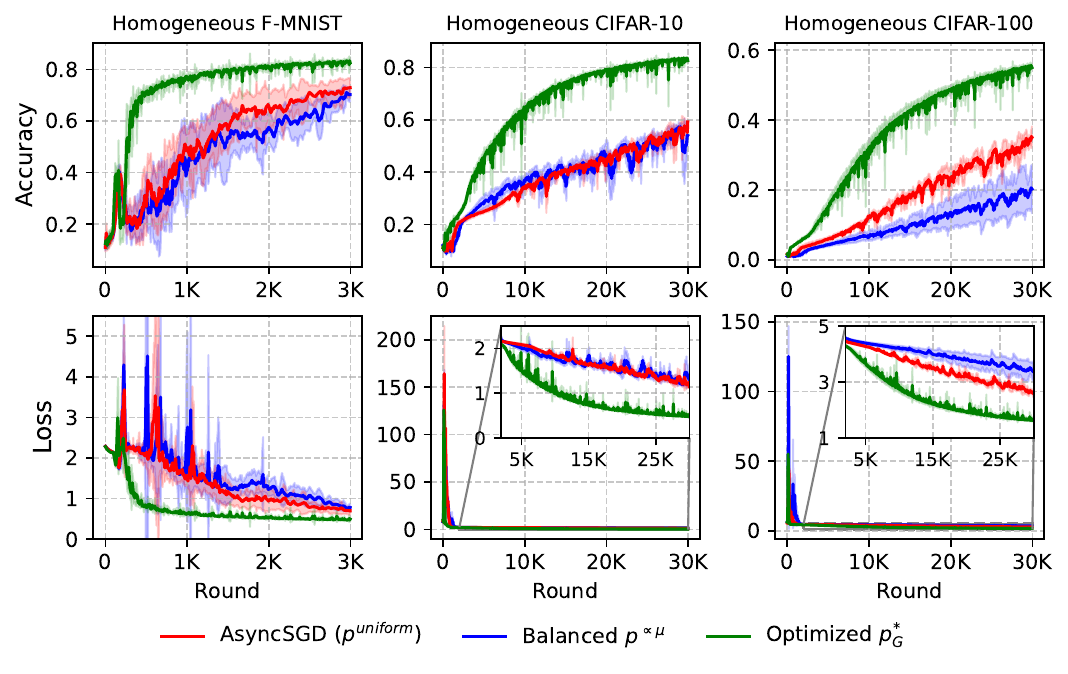}
  \hfill
    \includegraphics[width=0.49\textwidth]{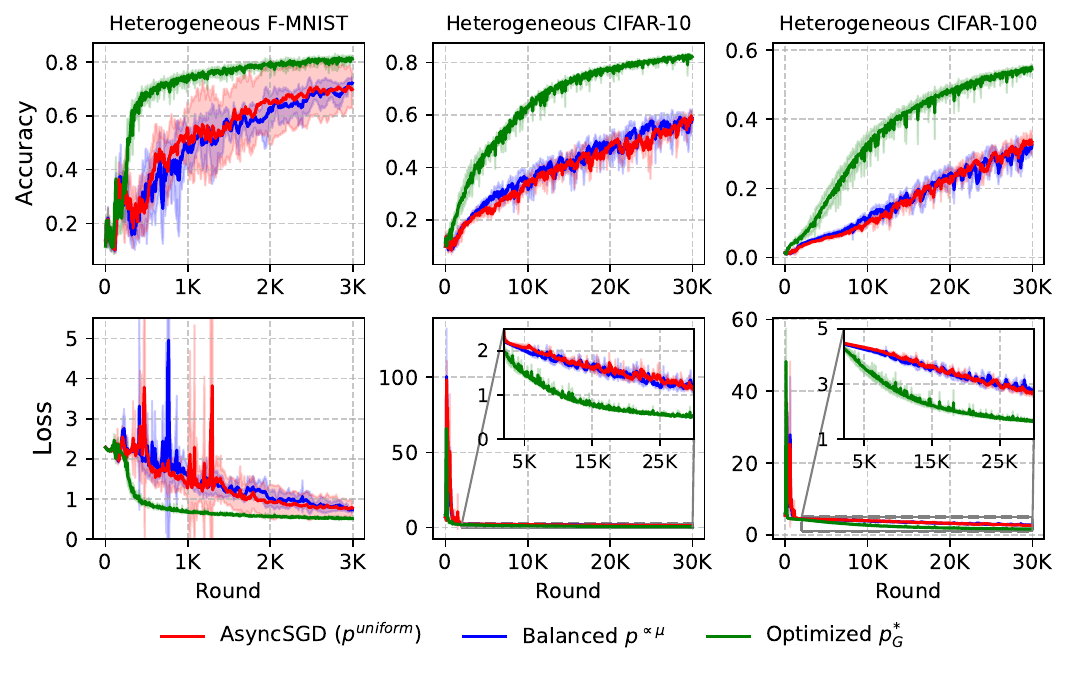}
  \\
    \includegraphics[width=0.49\textwidth]{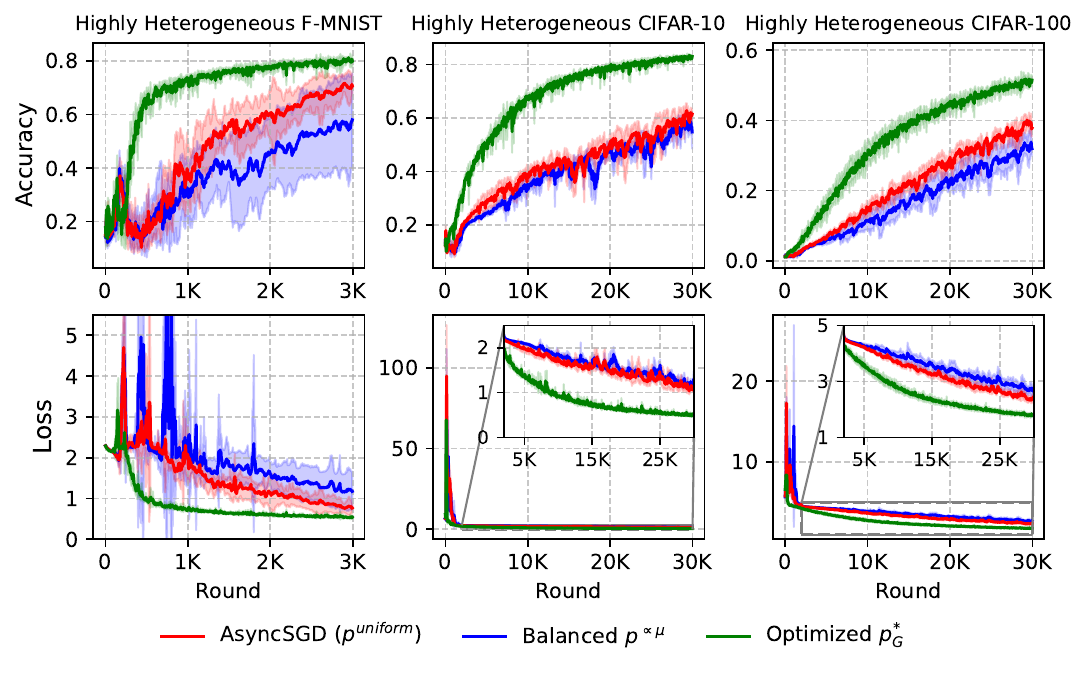}
  \caption{
   Performance on the test set at the \gls{CS} in the scenario of \Cref{num:optimize-updates}, with $n = 20$ clients and $m = 100$ tasks. Results are reported under \textbf{deterministic computation times} for three data distribution regimes: homogeneous, heterogeneous (Dirichlet), and highly heterogeneous.\\}
  \label{fig:sim_deter}
\end{figure}

\begin{figure}[htbp]
  \centering
    \includegraphics[width=0.49\textwidth]{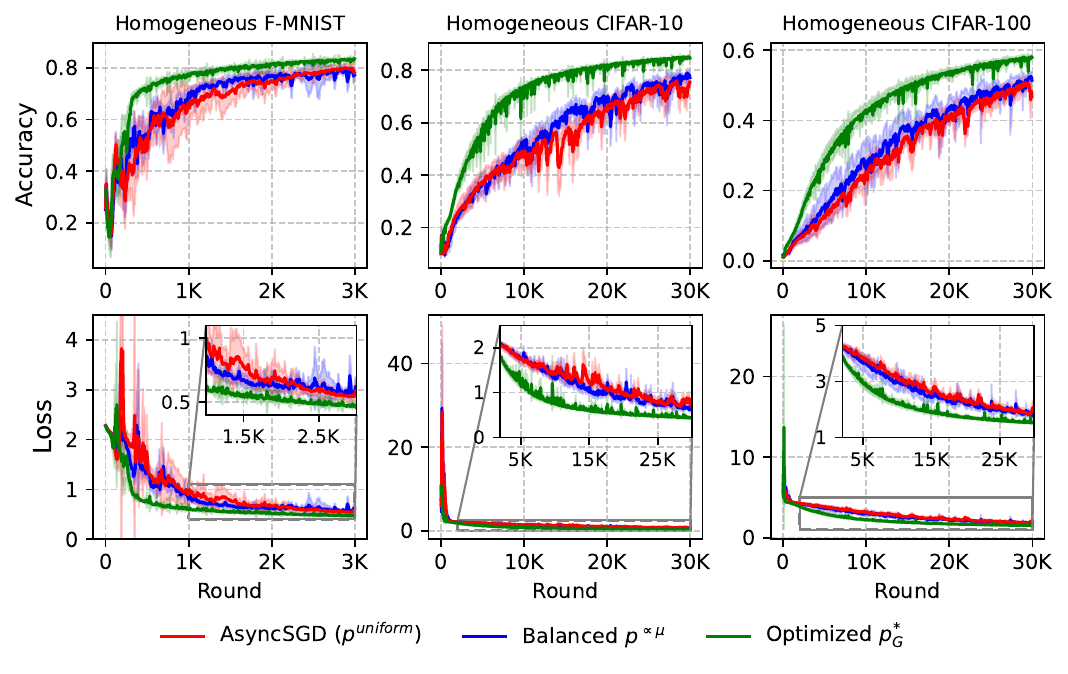}
  \hfill
    \includegraphics[width=0.49\textwidth]{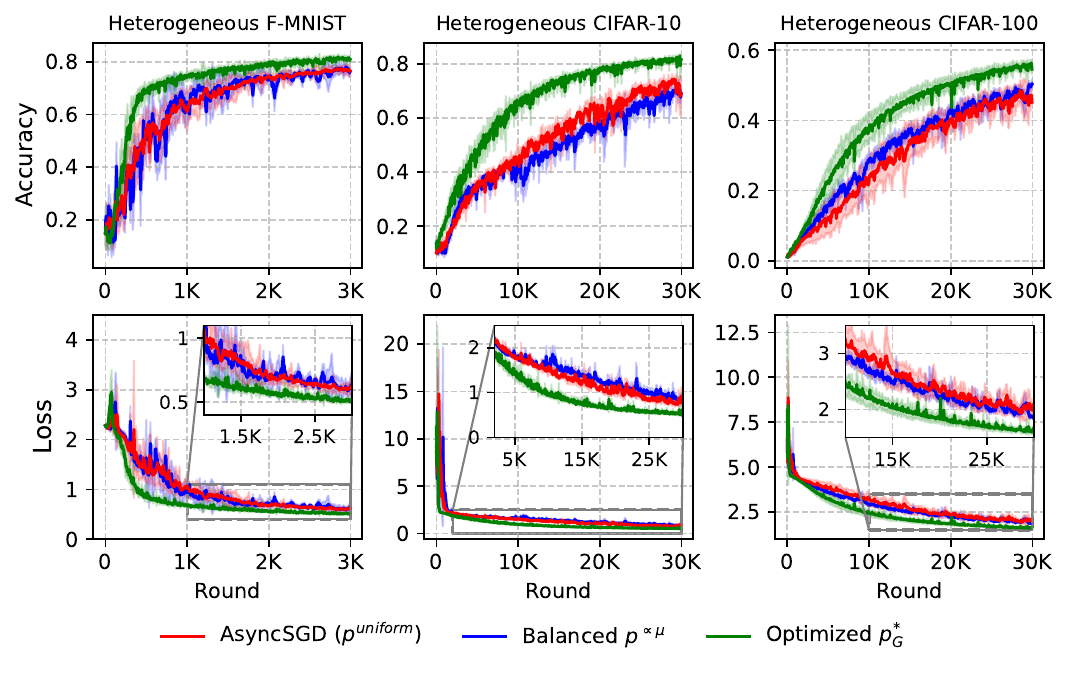}
  \\
    \includegraphics[width=0.49\textwidth]{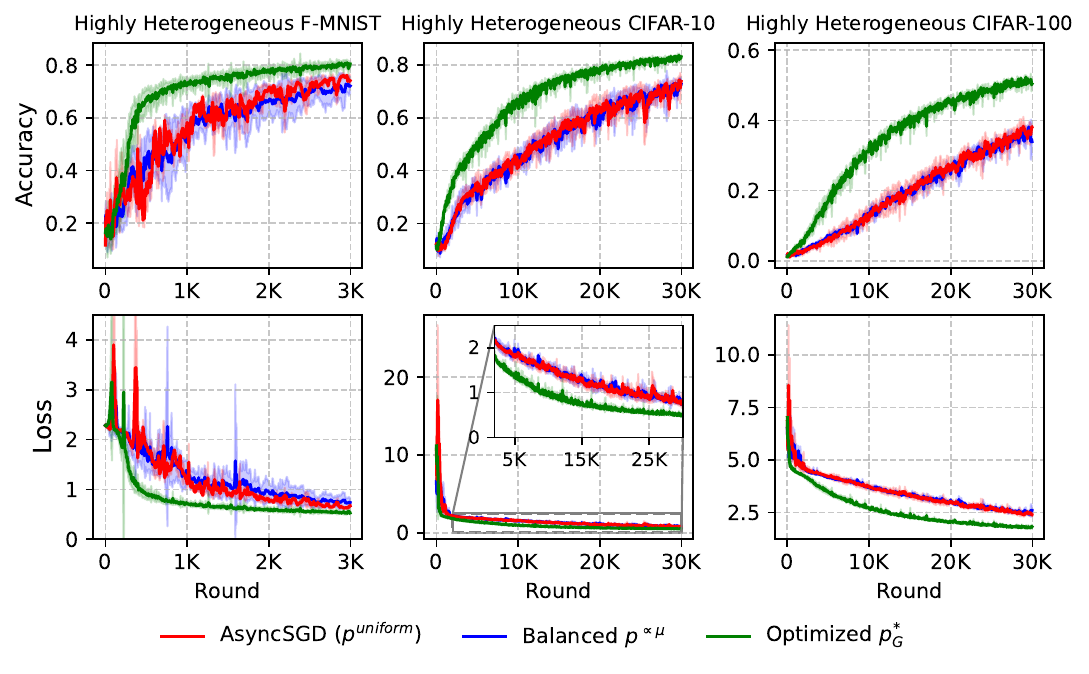}
  \caption{
   Performance on the test set at the \gls{CS} in the scenario of \Cref{num:optimize-updates}, with $n = 20$ clients and $m = 100$ tasks. Results are reported under \textbf{lognormal computation times} for three data distribution regimes: homogeneous, heterogeneous (Dirichlet), and highly heterogeneous.}
  \label{fig:sim_lognormal}
\end{figure}

\subsection{Additional Experiments for Section~\ref{num:optimize-time}} \label{app:num-additional-H}

To further assess the effectiveness of the $H$-optimized routing strategy with respect to wall-clock performance, we conduct additional experiments. Section~\ref{app:num-H-dist-heterogeneity} evaluates its robustness across different datasets (CIFAR-10 and CIFAR-100) and under more challenging heterogeneous data distributions. Section~\ref{app:num-H-computation-heterogeneity} examines its sensitivity to different computation time distributions.

\subsubsection{Robustness to Data Distribution Heterogeneity} \label{app:num-H-dist-heterogeneity}

This experiment builds on the setup of Section~\ref{num:optimize-time}, extending the evaluation to the CIFAR-10 and CIFAR-100 datasets. We assess the performance of the $H$-optimized routing strategy in terms of wall-clock time across three data distribution scenarios:

\begin{itemize}
    \item The first two match those in Section~\ref{num:optimize-time}: a \textbf{homogeneous} setting where data are i.i.d. across clients, and a \textbf{heterogeneous} setting where data are distributed according to a Dirichlet distribution ($\text{Dir}_n(0.5)$). KMNIST results for these settings are already presented in Section~\ref{num:optimize-time}.
    \item The third is a \textbf{highly heterogeneous} scenario, similar to that in Section~\ref{app:num-dist-heterogeneity}, where each client sees only a limited subset of labels. Specifically, for KMNIST and CIFAR-10, each client receives 3 unique labels out of 10; for CIFAR-100, 31 labels out of 100 are assigned per client, distributed sequentially and cyclically.
\end{itemize}

We adopt the same network dynamics as in Section~\ref{num:optimize-time} and compare the evolution of test accuracy and loss under four routing strategies: uniform, balanced, $G$-optimized, and $H$-optimized. To control for initialization effects, all models were initialized with the same weights across strategies. For each dataset and routing strategy, we ran 5 independent simulations with different random seeds and evaluated performance on an unseen, label-balanced test set.

The results in \Cref{fig:time_results_exp} support the conclusions of Section~\ref{num:optimize-time}. The $H$-optimized and balanced strategies perform roughly similarly in homogeneous settings. However, when data are heterogeneous, the $H$-optimized routing consistently outperforms the alternatives, achieving higher accuracy and demonstrating greater robustness and stability. In contrast, the balanced routing strategy becomes unstable in both heterogeneous and highly heterogeneous settings for KMNIST and CIFAR-10.

\begin{figure*}[hbtp]
\begin{center}
\centerline{\includegraphics[width=\textwidth]{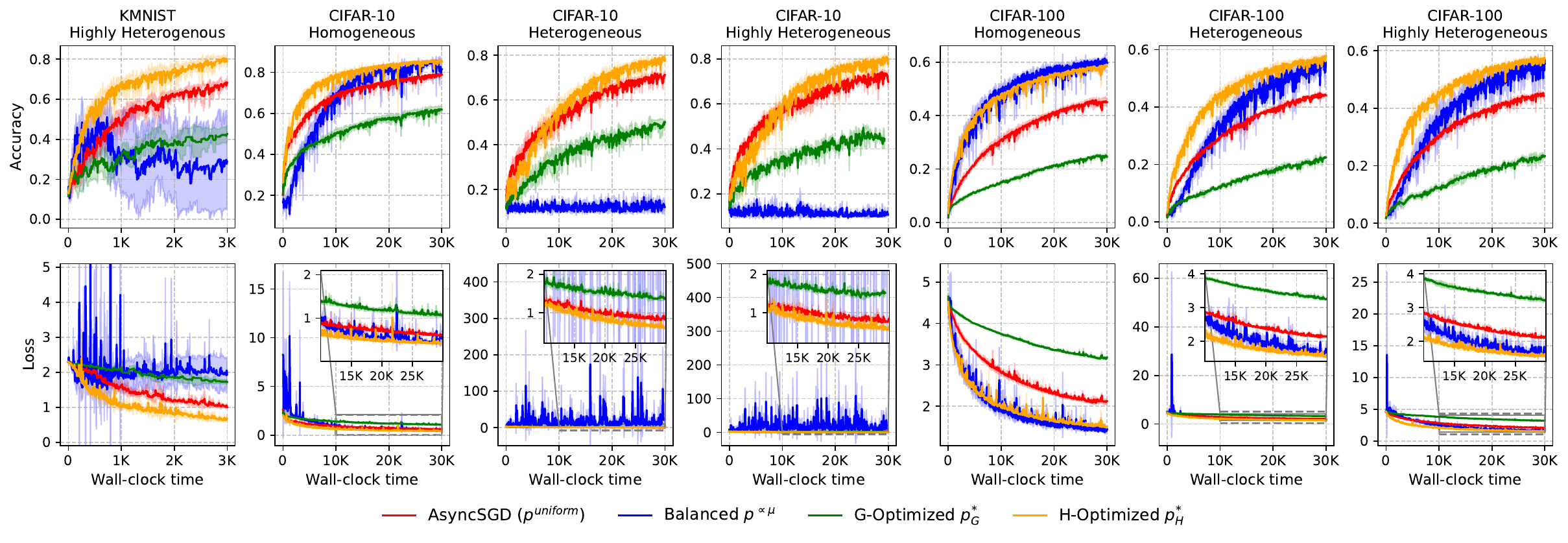}}
\caption{Performance on the test set with respect to wall-clock time at the \gls{CS} in the scenario of \Cref{num:optimize-time}, using $n = 30$ clients and $m = 30$ tasks, under homogeneous, heterogeneous, and highly heterogeneous data distributions. Solid lines denote averages over independent runs; shaded areas indicate standard deviations.}
\label{fig:time_results_exp}
\end{center}
\end{figure*}

\subsubsection{Robustness to Computation Time Distributions} \label{app:num-H-computation-heterogeneity}

To evaluate the robustness of $H$-optimized routing under varying computation time distributions, we retain the setup from \Cref{num:optimize-time}, modifying only the distribution of computation times while keeping the average task completion rate per client fixed.

As before, we study two scenarios: (i) deterministic times, where each client~$i$ has fixed duration $\frac{1}{\mu_i}$; and (ii) lognormal times, a heavy-tailed distribution modeling realistic delays. In the lognormal case, the mean is set to $\frac{1}{\mu_i}$ and the standard deviation of the underlying normal distribution is fixed at $\sigma_s = 1$ to ensure constant coefficient of variation across clients. The $\mu_i$ values match those from the exponential case in \Cref{num:optimize-time}.

We compare the $H$-optimized routing strategy $p^*_H$ (computed as in \Cref{num:optimize-time}) with uniform, balanced, and $G$-optimized strategies under homogeneous and heterogeneous data settings for KMNIST and CIFAR-100. Heterogeneity is introduced via Dirichlet sampling ($\text{Dir}_n(0.5)$) or disjoint label partitions (see Section~\ref{app:num-H-dist-heterogeneity}). KMNIST is trained for 3{,}000 wall-clock time units over 5 runs; CIFAR-100 uses standard preprocessing, is trained for 30{,}000 wall-clock time units, and repeated 3 times. Accuracy and loss are logged on a label-balanced test set. All models share the same initialization to ensure fair comparison.

The results in \Cref{fig:time_sim_deter,fig:time_sim_lognormal} show that performance is largely insensitive to the choice of computation time distribution, with trends closely aligning with those observed under the exponential assumption.
Across all configurations, $H$-optimized routing effectively balances update frequency and gradient staleness.

\begin{figure*}[hbtp]
\begin{center}
\centerline{\includegraphics[width=\textwidth]{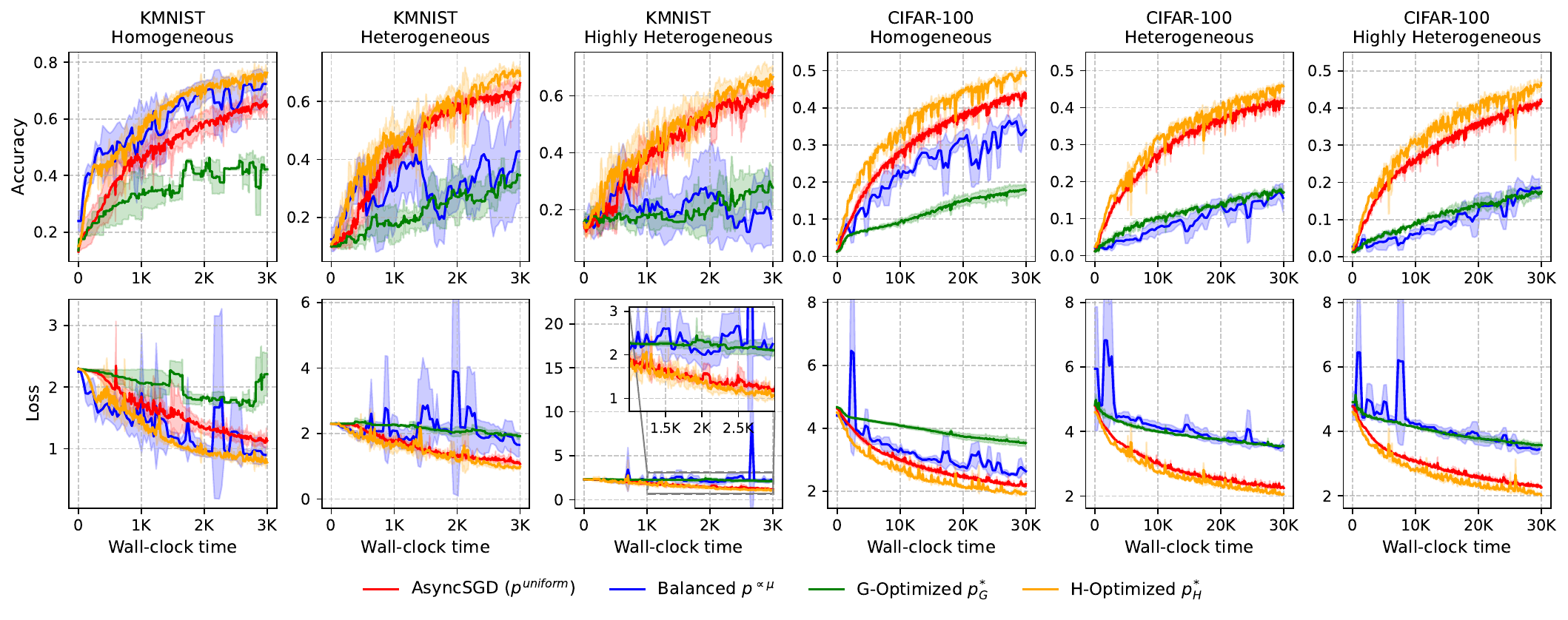}}
\caption{Performance on the test set with respect to wall-clock time at the \gls{CS} in the scenario of \Cref{num:optimize-time}, with $n = 30$ clients and $m = 30$ tasks. Results are reported for the KMNIST and CIFAR-100 datasets under \textbf{deterministic computation times}, across three data distribution regimes: homogeneous, heterogeneous, and highly heterogeneous. Solid lines denote averages over independent runs; shaded areas indicate standard deviations.}
\label{fig:time_sim_deter}
\end{center}
\end{figure*}

\begin{figure*}[hbtp]
\begin{center}
\centerline{\includegraphics[width=\textwidth]{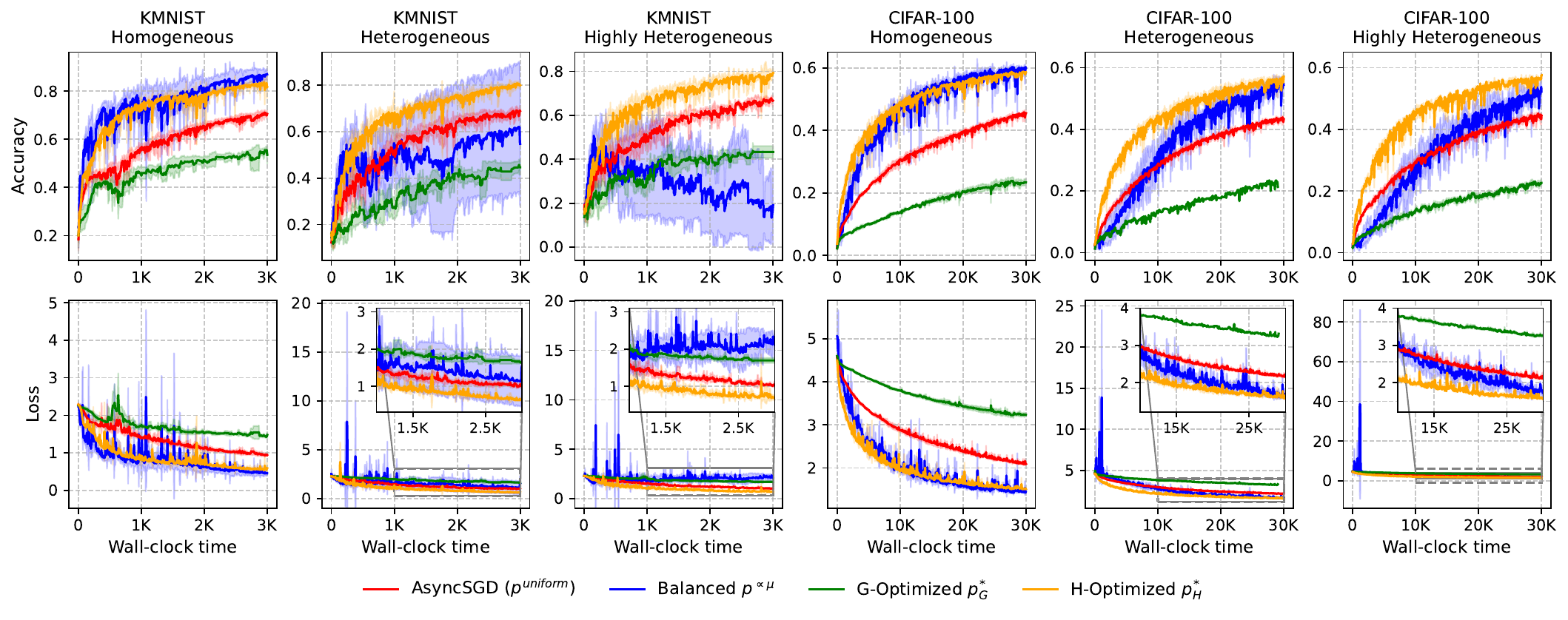}}
\caption{Performance on the test set with respect to wall-clock time at the \gls{CS} in the scenario of \Cref{num:optimize-time}, with $n = 30$ clients and $m = 30$ tasks. Results are reported for the KMNIST and CIFAR-100 datasets under \textbf{lognormal computation times}, across three data distribution regimes: homogeneous, heterogeneous, and highly heterogeneous. Solid lines denote averages over independent runs; shaded areas indicate standard deviations.}
\label{fig:time_sim_lognormal}
\end{center}
\end{figure*}

\end{document}